% sage_latex_guidelines.tex V1.10, 24 June 2016

%\documentclass[Afour,sageh,times]{sagej}
%
%
%\usepackage{amssymb}
%\usepackage{amsmath}
%\usepackage{graphicx}
%\usepackage{caption}
%\usepackage{subcaption}
%\usepackage{algorithm}
%\usepackage{algpseudocode}
%%\graphicspath{{images/}}
%

%

%%%%%%%%%%%%%%%%%%%%%%%%%%%%%%%%%%%%%%%%%%%%%%%%
\documentclass[Afour,sageh,times]{sagej}

\usepackage{moreverb,url}

\usepackage[colorlinks,bookmarksopen,bookmarksnumbered,citecolor=red,urlcolor=red]{hyperref}

\newcommand\BibTeX{{\rmfamily B\kern-.05em \textsc{i\kern-.025em b}\kern-.08em
		T\kern-.1667em\lower.7ex\hbox{E}\kern-.125emX}}

%%%%%%%%%%%%%%%%% ADDED PACKAGES
\usepackage{subcaption}
\usepackage{multicol}
\usepackage{graphicx}

\usepackage{algorithmicx}
\usepackage[Algorithm,ruled]{algorithm}
\usepackage[font={small}]{caption}
\newtheorem{theorem}{Theorem}

\newtheorem{prop}{Proposition}
\newtheorem{definition}{Definition}
\newtheorem{assumption}{Assumption}

\DeclareMathOperator{\spn}{span} % For \spn command
\newtheorem{case}{\textbf\textit{Case}} % For case command
\usepackage{enumerate}
%% For inverse dots
\usepackage{mathdots}
\usepackage{resizegather}
% For Matlab
\usepackage{xspace}
\newcommand{\MATLAB}{\textsc{Matlab}\xspace}
\hypersetup{
	linkcolor = black,
	citecolor = black,
	urlcolor = blue,
	colorlinks = true,
}

%%% For \mathclaps commands
\usepackage{mathtools}
%% So that integral limits do not push integrand too far
\def\mathclap#1{\text{\hbox to 0pt{\hss$\mathsurround=0pt#1$\hss}}}
\usepackage{amsmath,amsfonts, amsthm, amssymb}
%% Fixes spacing around \left and \right commands
\let\originalleft\left
\let\originalright\right
\renewcommand{\left}{\mathopen{}\mathclose\bgroup\originalleft}
\renewcommand{\right}{\aftergroup\egroup\originalright}

%%% For Appendix %%%
\makeatletter
\def\@seccntformat#1{\@ifundefined{#1@cntformat}%
	{\csname the#1\endcsname\quad} % default
	{\csname #1@cntformat\endcsname}% enable individual control
}
\let\oldappendix\appendix %% save current definition of \appendix
\renewcommand\appendix{%
	\oldappendix
	\newcommand{\section@cntformat}{\appendixname~\thesection\quad}
}

%%%%%%%%%%%%%%%%%%% For underbraces onto two lines
\usepackage{etoolbox}
\makeatletter
\let\lunderbrace\underbrace
\let\runderbrace\underbrace
\let\lupbracefill\upbracefill
\let\rupbracefill\upbracefill
\patchcmd{\lunderbrace}{\upbracefill}{\lupbracefill}{}{}
\patchcmd{\runderbrace}{\upbracefill}{\rupbracefill}{}{}
\patchcmd{\lupbracefill}{\braceru}{\leaders\vrule\@height\ht\z@\@depth\z@\hskip\wd\z@}{}{}
\patchcmd{\rupbracefill}{\bracelu}{\leaders\vrule\@height\ht\z@\@depth\z@\hskip\wd\z@}{}{}
\setcounter{secnumdepth}{3}

%% For revisions
\usepackage{soul,xcolor}

%\usepackage{marginnote}
%\usepackage{lastpage}
%\usepackage{fancyhdr}
%\cfoot{\thepage\ of \pageref{LastPage}}
\begin{document}
	
	\runninghead{Mamakoukas, MacIver, and Murphey}
	
	\title{Feedback Synthesis For Underactuated Systems Using Sequential Second-Order Needle Variations}
	
	\author{Giorgos Mamakoukas\affilnum{1}, Malcolm A. MacIver\affilnum{1,2,3} and Todd D. Murphey\affilnum{1}}
	
	\affiliation{\affilnum{1}Department of Mechanical Engineering, Northwestern University, Evanston, IL, USA\\
		\affilnum{2}Department of Biomedical Engineering, Northwestern University, Evanston, IL, USA\\
		\affilnum{3}Department of Neurobiology, Northwestern University, Evanston, IL, USA}
	
	\corrauth{Giorgos Mamakoukas, Department of Mechanical Engineering, Northwestern University, 2145 Sheridan Road, Evanston, IL 60208, USA.
		\email{giorgosmamakoukas@u.northwestern.edu}}
	
	\begin{abstract}
		This paper derives nonlinear feedback control synthesis for general control affine systems using second-order actions---the second-order needle variations of optimal control---as the basis for choosing each control response to the current state. A second result of the paper is that the method provably exploits the nonlinear controllability of a system by virtue of an explicit dependence of the second-order needle variation on the Lie bracket between vector fields. As a result, each control decision necessarily decreases the objective when the system is nonlinearly controllable using first-order Lie brackets. Simulation results using a differential drive cart, an underactuated kinematic vehicle in three dimensions, and an underactuated dynamic model of an underwater vehicle demonstrate that the method finds control solutions when the first-order analysis is singular. Lastly, the underactuated dynamic underwater vehicle model demonstrates convergence even in the presence of a velocity field. 
	\end{abstract}
	
	\keywords{motion control, underactuated robots, kinematics, dynamics}
	
	\maketitle

	\section{Introduction}
	%\subsubsection{Challenges}\mbox{}
	With many important applications in aerial or underwater missions, systems are underactuated either by design---in order to reduce actuator weight, expenses, or energy consumption---or as a result of technical failures. In both cases, it is important to develop control policies that can exploit the nonlinearities of the dynamics, are general enough for this broad class of systems, and easily computable.
	%\subsubsection{Literature Solutions}\mbox{}
	Various approaches to nonlinear control range from steering methods using sinusoid controls~\citep{MurraySin}, sequential actions of Lie bracket sequences \citep{murray1994book} and backstepping \citep{kokotovic1992joy,	seto1994control} to perturbation methods \citep{junkins1986asymptotic}, sliding mode control (SMC) \citep{perruquetti2002sliding,utkin2013sliding, xu2008sliding}, intelligent control \citep{brown1997intelligent, harris1993intelligent} or hybrid control \citep{fierro1999hybrid} and nonlinear model predictive control (NMPC) methods \citep{allgower2004nonlinear}. These schemes have been successful on well-studied examples including, but not limited to, the rolling disk, the kinematic car, wheeling mobile robots, the Snakeboard, surface vessels, quadrotors, and cranes \citep{bullo2000controllability, nonholonomiccrane,escareno2012trajectory,reyhanoglu1996nonlinear,fang2003nonlinear,toussaint2000tracking,bouadi2007sliding,bouadi2007modelling,chen2013adaptive, nakazono2008vibration, shammas2012analytic, morbidi2007sliding, roy2007closed, becker2010motion, kolmanovsky1995developments,boskovic1999intelligent}. 
	
	The aforementioned methods have limitations. In the case of perturbations, the applied controls assume a future of control decisions that do not take the disturbance history into account; backstepping is generally ineffective in the presence of control limits and NMPC methods are typically computationally expensive. SMC methods suffer from chattering, which results in high energy consumption and instability risks by virtue of exciting unmodeled high-frequency dynamics \citep{khalil1996noninear}, intelligent control methods are subject to data uncertainties \citep{el2014intelligent}, while other methods are often case-specific and will not hold for the level of generality encountered in robotics. We address these limitations by using needle variations to compute real-time feedback laws for general nonlinear systems affine in control, discussed next.
	\subsection{Needle Variations Advantages to Optimal Control}
	In this paper, we investigate using needle variation methods to find optimal controls for nonlinear controllable systems. Needle variations consider the sensitivity of the cost function to infinitesimal application of controls and synthesize actions that reduce the objective \citep{aseev2014needle,shaikh2007hybrid}. Such control synthesis methods have the advantage of efficiency in terms of computational effort, making them appropriate for online feedback---similar to other model predictive control methods, such as iLQG \citep{todorov2005generalized}, but with the advantage, as shown here, of having provable formal properties over the entire state space. For time evolving objectives, as in the case of trajectory tracking tasks, controls calculated from other methods (such as sinusoids or Lie brackets for nonholonomic integrators) may be rendered ineffective as the target continuously moves to different states. In such cases, needle variation controls have the advantage of computing actions that directly reduce the cost, without depending on future control decisions. However, needle variation methods, to the best of our knowledge, have not yet considered higher than first-order sensitivities of the cost function. 
	
	We demonstrate analytically in Section III that, by considering second-order needle variations, we obtain variations that explicitly depend on the Lie brackets between vector fields and, as a consequence, the higher-order nonlinearities in the system. Later, in Section III, we show that, for classically studied systems, such as the differential drive cart, this amounts to being able to guarantee that the control approach is \emph{globally} certain to provide descent at every state, despite the conditions of Brockett's theorem \citep{brockett1983asymptotic} on nonexistence of smooth feedback laws for such systems. We extend this result by proving that second-order needle variations controls necessarily decrease the objective for the entire class of systems that are controllable with first-order Lie brackets. As a consequence, provided that the objective is convex with respect to the state (in the unconstrained sense), second-order needle variation controls provably guarantee that the agent reaches the target in a collision-free manner in the presence of obstacles without relying on predefined trajectories.
	
	\subsection{Paper Contribution and Structure}
	This paper derives the second-order sensitivity of the cost function with respect to infinitesimal duration of inserted control, which we will refer to interchangeably as the second-order mode insertion gradient or mode insertion Hessian (MIH). We relate the MIH expression to controllability analysis by revealing its underlying Lie bracket structure and present a method of using second-order needle variation actions to expand the set of states for which individual actions that guarantee descent of an objective function can be computed. Finally, we compute an analytical solution of controls that uses the first two orders of needle variations. 
	
	This paper expands the work presented in \citet{mamakoukas2017feedback} by including the derivations of the MIH, the proofs that guarantee descent, and extensive simulation results that include comparisons to alternative feedback algorithms. Further, we extend the results to account for obstacles and prove the algorithm finds collision-free solutions for the controllable systems considered, including simulations of obstacle-avoidance for static and moving obstacles.
	
	The content is structured as follows. In Section II, we provide relevant research background in the field of motion planning for controllable systems. In Section III, we prove that second-order needle variations guarantee control solutions for systems that are nonlinearly controllable using first-order Lie brackets. We use this result to provably generate collision-free trajectories that safely reach the target among obstacles provided convex objectives in the unconstrained sense. In Section IV, we present an analytical control synthesis method that uses second-order needle actions. In Section V, we implement the proposed synthesis method and present simulation results on a controllable, underactuated model of a 2D differential drive vehicle, a 3D controllable, underactuated kinematic rigid body and a 3D underactuated dynamic model of an underwater vehicle. 
	\section{Existing methods for controllable systems}
	In this section, we present a review of some popular methods that are available for underactuated, controllable systems, followed by a discussion of techniques for collision-avoidance. An introduction to these methods, as well as additional algorithms for controllable systems, can be found in \citet{la2011motion}.
	\subsection{Optimization Algorithms for Nonholonomic Controllable Systems}\label{subsection: IIA}
	
	Nonholonomic systems are underactuated agents subject to nonintegrable differential constraints. Examples include wheeled agents that are not allowed to skid (e.g., unicycle, differential drive, tricycle). Nonholonomic systems are of interest to the control community because one needs to obtain solutions for motion planning tasks \citep{kolmanovsky1995developments}.
	
	The concept of controllability is indispensable in the study of nonholonomic systems. Controllability analytically answers the existence of control solutions that move a certain agent between arbitrary states in finite time, and, in doing so, it reveals all possible effects of combined control inputs of underactuated systems that are subject to velocity, but not displacement, constraints.
	
	A popular approach in controlling nonholonomic systems is piecewise constant motion planning \citep{Sussmann91twonew, lafferriere1993differential}. Lafferriere and Sussmann \citep{lafferriere1991motion,lafferriere1993differential} extend the original dynamics with \textit{fictitious} action variables in the direction of the nested Lie brackets to determine a control for the extended system. They first compute the time the system must flow along each vector field, in a sequential manner, to accomplish a given motion of the extended system. Then, using the Campbell-Baker-Hausdorff-Dynkin (CBHD) formula \citep{strichartz1987campbell,rossmann2002lie, la2011motion}, they recover the solution in terms of the original inputs of the system.
	
	On the other hand, piecewise constant motion planning is model-specific, since the process changes for different number of inputs. In addition, solutions involve a sequence of individual actions that generate the Lie bracket motion and the actuation sequence grows increasingly larger for higher order brackets. Compensating for the third-order error in the CBHD formula involves two second-order Lie brackets and twenty successive individual inputs, each of infinitesimal duration \citep{mcmickell2007motion}. The sequence is described in detail by \citet{lafferriere1993differential}. In practice, such actuation becomes challenging as the number of switches grows. The theoretically infinitesimal duration of each input may be hard to reproduce in hardware, while, in the face of uncertainty and time-evolving trajectories, actuation consisting of a large sequence of controls (e.g., of twenty actions) is likely to change once feedback is received.
	
	Another popular approach is steering using sinusoids \citep{brockett1982control, MurraySin, murray1994book, sastry2013nonlinear, teel1995non, laumond1998guidelines}. This method applies sinusoidal control inputs of integrally related frequencies. States are sequentially brought into the desired configuration in stages, while the rest of the states remain invariant over a single cycle. This approach has been validated in generating motion of an underactuated robot fish \citep{morgansen2001nonlinear}. 
	
	Steering using sinusoids suffers from the complicated sequence of actions that grows as a function of the inputs involved. Moreover, besides also being model-specific, the method addresses each state separately, meaning each state gets controlled by its own periodic motion, requiring $N$ periods for an $N$-dimensional system, leading to slow convergence. Further, solutions focus on the final states (at the end of each cycle) and not their time evolution, hence they may temporarily increase the running cost (consider the car example of Fig. 7 in \citet{MurraySin}). As with the method of piecewise constant motion planning, when tracking a moving target, these factors also compromise the performance of this approach.
	
	Other trajectory generation techniques for controllable systems involve differential flatness \citep{lamiraux1997flatness, rathinam1998configuration, ross2004pseudospectral, fliess1995flatness, rouchon1993flatness} and kinematic reduction \citep{bullo2001kinematic, lynch2000collision, murphey2006power}. Control based on differential flatness uses outputs and their derivatives to determine control laws. However, as discussed in \citet{choudhury2004trajectory}, there is not an automatic procedure to discover whether flat outputs exist. Further, differential flatness does not apply to all controllable systems and motion planning is further complicated when control limits or obstacles are present \citep{bullo2001kinematic}. 
	
	\subsection{Motion Planning for Controllable Systems in the Presence of Obstacles}
	Controllability in its classical sense concerns itself with the existence of an action trajectory that can move the agent to a desired state, subject to the differential constraints posed by the dynamics, in the absence of obstacles. Controllability is an inherent property of the dynamics and reveals all allowable motion, disregarding the presence of physical constraints in the environment. This is true for the methods discussed in Section \ref{subsection: IIA}.
	
	Feasible path planning amidst obstacles is often treated separately from the optimal control problem. Most commonly, feasible trajectories are generated with efficient path planners, such as rapidly-exploring random tree (RRT) and probabilistic road map (PRM) methods \citep{lavalle2001randomized, hsu2002randomized}. The distinction between path planning and optimal control can be seen in work by \citet{choudhury2004trajectory, lynch2000collision} that uses such motion planners to generate trajectories among obstacles and then uses them as a reference to compute the optimal control. In this setting, nonholonomic motion consists of two stages, the path planning and the feedback synthesis that tracks the feasible trajectory.
	
	Another solution to obstacle avoidance in motion planning is the use of barrier certificates \citep{prajna2007framework}. Barrier certificates provably enforce collision-free behavior by minimally perturbing, in a least-squares sense, the control response in order to satisfy safety constraints. Feedback synthesis proceeds without accounting for obstacles and solutions are modified, only when necessary, via a quadratic program (QP) subject to constraints that ensure collision avoidance \citep{borrmann2015control, xu2015robustness, wang2017safety, ames2014control, wu2016safety}. 
	
	Additional solutions to obstacle avoidance include compensating functions that eliminate local minima in the objective caused by the obstacles \citep{deng2008lyapunov}, as well as designing navigation functions using inverse Lyapunov expressions \citep{tanner2001nonholonomic}. The former method computes the local minima in the objective and constructs a plane surface function to remove them and make the objective convex. This process can be cumbersome, as one would have to locate all local minima in the objective induced by the obstacles and then calculate the compensating function. On the other hand, navigation functions, described in \citet{rimon1992exact, tanner2001nonholonomic}, are globally convergent potential functions and are system-specific.
	
	Several of these collision-avoidance algorithms are not system-specific and could be implemented with our controller, later outlined in Section IV. In simulation results, presented in Section V, we show collision-avoidance using only penalty functions in the objective, demonstrating that the proposed controller succeeds in tasks (collision avoidance) that traditionally require sophisticated treatment. 

	\section{Needle Variation Controls based on Non-Linear Controllability}
	In this section, we relate the controllability of systems to first- and second-order needle variation actions. After presenting the MIH expression, we relate the MIH to the Lie bracket terms between vector fields. Using this connection, we tie the descent property of needle variation actions to the controllability of a system and prove that second-order needle variation controls can produce control solutions for a wider set of the configuration state space than first-order needle variation methods. As a result, we are able to constructively compute, via an analytic solution, control formulas that are guaranteed to provide descent, provided that the system is controllable with first-order Lie brackets. Generalization to higher-order Lie brackets appears to have the same structure, but that analysis is postponed to future work.
	
	\subsection{Second-Order Mode Insertion Gradient}
	Needle variation methods in optimal control have served as the basic tool in proving the Pontryagin's Maximum Principle \citep{pontryagin1962, dmitruk2014proof, garavello2005hybrid}. Using piecewise dynamics, they introduce infinitesimal perturbations in control that change the default trajectory and objective (see Fig. \ref{fig: Needle}). Such dynamics are typically used in optimal control of hybrid systems to optimize the schedule of a-priori known modes \citep{egerstedt2006transition, caldwell2016projection}.
		
	Here, instead, we consider dynamics of a single switch to obtain a new control mode $u$ at every time step that will optimally perturb the trajectory\citep{SAC}. The feedback algorithm presented in \citet{SAC}, however, only considers the first-order sensitivity of the cost function to a needle action and, as a result, often fails to provide solutions for controllable underactuated systems. By augmenting the algorithm with higher order information (via the MIH), we are able to provide solutions in cases when the first-order needle variation algorithm in \citet{SAC} is singular.

	Consider a system with state $x : \mathbb{R} \mapsto \mathbb{R}^{N} $ and control $u : \mathbb{R} \mapsto \mathbb{R}^{M \times 1} $ with control-affine dynamics of the form
	\begin{align}\label{dynamics}
	f(t,x(t), u(t))=g(t, x(t)) + h(t, x(t)) u(t),
	\end{align} 
	where $g(t,x(t))$ is the drift vector field. Consider a time period $[t_o, t_f]$ and control modes described by 

	\begin{align}\label{Dynamics}
	\dot{x}(t)=
	\begin{cases}
	f_1 (x(t), v(t)), &~~~~~~~ t_o\leq t < \tau -\frac{\lambda}{2}\\
	f_2 (x(t), u(\tau)), & \tau - \frac{\lambda}{2}\leq t < \tau + \frac{\lambda}{2} \\
	f_1 (x(t), v(t)), & \tau + \frac{\lambda}{2} \leq t \le t_f ,
	\end{cases}
	\end{align}
	where $f_1$ and $f_2$ are the dynamics associated with \textit{default} and \textit{inserted} control $v$ and $u$, respectively. Parameters $\lambda$ and $\tau$ are the duration of the inserted dynamics $f_2$ and the switching time between the two modes. 
	
	Note that the default control $v(t)$ is the input for the nominal trajectory---$v(t)$ could itself be the result of a different controller---which is then improved by the insertion of a new control vector $u(t)$ creating a switched mode $f_2$. In addition, while the default control $v(t)$ of the switched mode sequence in (2) may be time-dependent, the dynamics $f_2$ have control $u(\tau)$ that has a fixed value over $[\tau-\frac{\lambda}{2}, \tau+\frac{\lambda}{2}]$.

	\begin{figure}[]
 		\centering
 		\includegraphics[width=0.7\linewidth, height = 0.15\textheight]{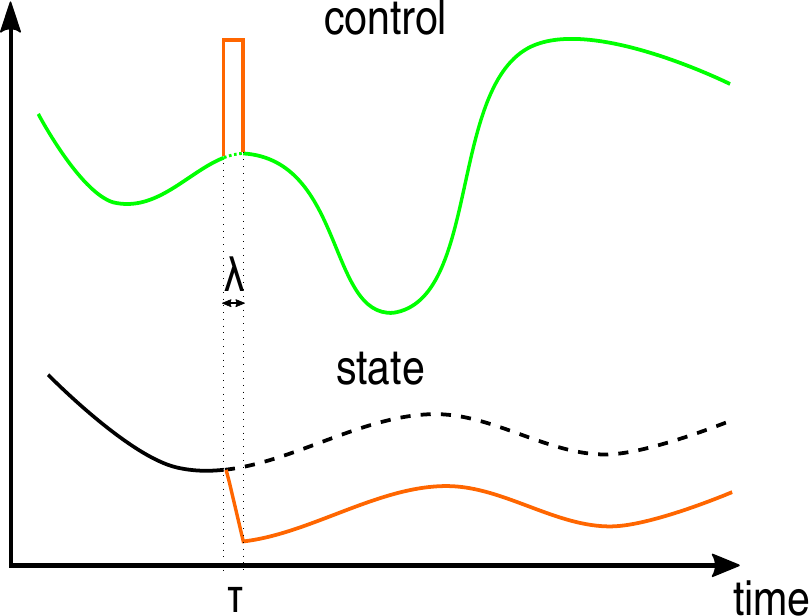}
 		\caption{A fixed-value perturbation in the nominal control, introduced at time $\tau$ and with duration $\lambda$, and the associated variation in the state. In the limit $\lambda \rightarrow 0$, the control perturbation becomes a needle variation.} \label{fig: Needle}
 	\end{figure}
 	Given a cost function $J$ of the form
	\begin{equation}\label{cost}
	J(x(t))=\int_{t_o}^{t_f} l_1(x(t)) \mathrm{d}t + m(x(t_f)),
	\end{equation}
	where $l_1(x(t))$ is the running cost and $m(x(t))$ the terminal cost, the mode insertion gradient (MIG), derived in \citet{egerstedt2006transition}, is
	\medmuskip=0.5mu
	\begin{align}\label{MIG}
	\frac{dJ}{d\lambda_+}=\rho^T (f_2 - f_1),
	\end{align}
	where $\rho : \mathbb{R} \mapsto \mathbb{R}^{N \times 1}$ is the first-order adjoint state, which is calculated from the default trajectory and given by
	\begin{gather} \label{eq:: rho}
		\dot{\rho} = -{D_xl_1}^T - D_xf_1^T\rho\\
		\text{subject to: }  \rho(t_f)~=~D_x m(x(t_f))^T.\notag
	\end{gather}
	We use the subscript $\lambda_+$ to indicate that a certain variable is considered after evaluating the limit $\lambda\rightarrow0$. For brevity, the dependencies of variables are dropped. While the objective for needle variation controls has typically not included a control term, doing so is straightforward and yields similar performance. Work in \citet{SAC} has considered objectives with control terms, and one can recompute the mode insertion gradient and mode insertion Hessian assuming the objective depends on $u$ without impacting any of the rest of the approach.
	
	The derivation of the mode insertion Hessian is similar to \citet{caldwell2011switching} and is presented in the Appendix. For dynamics that do not depend on the control duration, the mode insertion Hessian (MIH)\endnote{In this work, we consider the second-order sensitivity with respect to an action centered at one single application time $\tau$. It is also possible to consider the second-order sensitivity with respect to two application times $\tau_i$ and $\tau_j$ in the same iteration. Assuming that the entire control curve is a descent direction over the time horizon for second-order needle variation solutions, as we have proved is the case for first-order needle variation methods in recently submitted work \citep{mamakoukas_RAL}, multiple second-order needle actions at different application times would still decrease the objective. On the other hand, searching for two application times would slow down the algorithm and was not preferred in this work.} is given by
	\medmuskip = 0.5mu
	\begin{align}\label{MIH}
	\frac{d^2J}{d\lambda_+^2} =~& (f_2 - f_1)^T\Omega(f_2-f_1) + \rho^T(D_xf_2 \cdot f_2 + D_xf_1\cdot f_1 \notag\\
	&-2 D_xf_1\cdot f_2) - D_x l_1 \cdot (f_2 - f_1),
	\end{align}
	where $\Omega : \mathbb{R} \mapsto \mathbb{R}^{N \times N}$ is the second-order adjoint state, which is calculated from the default trajectory and is given by
	\begin{gather}\label{eq:: Omega}
%	\dot{\rho} =& -{D_xl_1}^T - D_xf_1^T\rho} \\
	\dot{\Omega} = -{D_xf_1}^T\Omega - \Omega D_xf_1 - D_x^2l_1 - \sum_{i=1}^N \rho_i D_x^2f_1^i \\
	\text{subject to: }\Omega(t_f)~=~D_x^2 m(x(t_f))^T.\notag
	\end{gather}
	 The superscript $i$ in the dynamics $f_1$ refers to the $i^{th}$ element of the vector. 
	
	\subsection{Dependence of Second Order Needle Variations on Lie Bracket Structure}
	The Lie bracket of two vectors $f(x)$, and $g(x)$ is
	\begin{align*}
	[f, g](x)=\frac{\partial g}{\partial x} f(x) - \frac{\partial f}{\partial x}g(x), 
	\end{align*}
	which generates a control vector that points in the direction of the net infinitesimal change in
	state $x$ created by infinitesimal noncommutative flow $\phi_\epsilon^f\,\circ\,\phi_\epsilon^g\, \circ\,\phi_\epsilon^{-f}\,\circ\,\phi_\epsilon^{-g}\,\circ\,x_0$, where $\phi_\epsilon^f$ is the flow along a vector field $f$ for time $\epsilon$ \citep{murray1994book, jakubczyk2001introduction}. Lie brackets are most commonly used for their connection to controllability \citep{rashevsky1938connecting,Chow1940}, but here they will show up in the expression describing the second-order needle variation. 
	
	We relate second-order needle variation actions to Lie brackets in order to connect the existence of descent-providing controls to the nonlinear controllability of a system. Let $h_i : \mathbb{R} \mapsto \mathbb{R}^{N \times 1}$ denote the column control vectors that make up $h : \mathbb{R} \mapsto \mathbb{R}^{N \times M}$ in \eqref{dynamics} and $u_i \in \mathbb{R}$ be the individual control inputs. Then, we can express dynamics as
	\begin{align*}
	f=g + \sum_i^M h_iu_i.
	\end{align*}
	and, for default control $v=0$, we can re-write the MIH as
	\medmuskip = 0.3mu
	\begin{gather*}\begin{align*}
	\frac{d^2J}{d\lambda_+^2}=&\big(\sum_{i=1}^M h_iu_i\big)^T\,\Omega\sum_{j=1}^Mh_ju_j + \rho^T\Big(\sum_{i=1}^M(D_xh_iu_i)\cdot\,g \\
	&-D_xg\cdot(h_iu_i)+\sum_{i=1}^MD_xh_iu_i\sum_{i=1}^Mh_iu_i\Big)-D_xl_1\sum_{i=1}^Mh_iu_i.
	\end{align*}\end{gather*}
	Splitting the sum expression into diagonal ($i=j$) and off-diagonal ($i\ne j$) elements, and by adding and subtracting $2\sum_{i}^M\sum_{j=1}^{i-1}(D_xh_iu_i)(h_ju_j)$, we can write
	\begin{align*}
	\sum_{i=1}^MD_xh_iu_i\sum_{i=1}^Mh_iu_i =& \sum_{i}^M\sum_{j=1}^{i-1}[h_i,h_j]u_iu_j\\&
	+ 2\sum_{i}^M\sum_{j=1}^{i-1}(D_xh_iu_i)(h_ju_j) \\
	&+ \sum_{i=j=1}^M(D_xh_iu_i)(h_iu_i).
	\end{align*}
	Then, we can express the MIH as
	\begin{align*}
	\frac{d^2J}{d\lambda_+^2} =& \sum_{i=1}^M \sum_{j=1}^M u_i u_j h_i^T \Omega h_j + \rho^T\Big(\sum_{i=2}^M \sum_{j=1}^{i-1} [h_i,h_j] u_i u_j \notag \\
	&+ 2 \sum_{i=2}^M\sum_{j=1}^{i-1} (D_x h_i)h_j u_iu_j + \sum_{i=1}^{M}(D_x h_i)h_iu_iu_i \notag \\
	&+\sum_{i=1}^M [g, h_i] u_i\Big) - D_x l(\sum_{i=1}^M h_iu_i).
	\end{align*}
	The expression contains Lie bracket terms of the control vectors that appear in the system dynamics, indicating that second-order needle variations incorporate higher-order nonlinearities. By associating the MIH to Lie brackets, we next prove that second-order needle variation actions can guarantee decrease of the objective for systems that are controllable with first-order Lie brackets. 
	
	\subsection{Existence of Control Solutions with First- and Second-Order Mode Insertion Gradients}
	In this section, we prove that the first two orders of the mode insertion gradient can be used to guarantee controls that reduce objectives of the form \eqref{cost} for systems that are controllable with first-order Lie brackets. The analysis is applicable to optimization problems that satisfy the following assumptions.
	\begin{assumption}\label{as:1}
		The vector elements of dynamics $f_1$ and $f_2$ are real, bounded, $\mathcal{C}^2$ in $x$, and $\mathcal{C}^0$ in $u$ and $t$.
	\end{assumption}
	\begin{assumption}\label{as:2}
		The incremental cost $l_1(x)$ is real, bounded, and $\mathcal{C}^2$ in $x$. The terminal cost $m(x(t_f))$ is real and twice differentiable with respect to $x(t_f)$.
	\end{assumption}
	\begin{assumption}\label{as:3}
		Default and inserted controls $v$ and $u$ are real, bounded, and $\mathcal{C}^0$ in $t$.
	\end{assumption}

	Under Assumptions \ref{as:1}-\ref{as:3}, the MIG and MIH expressions exist and are unique. Then, as we show next, there are control actions that can improve any objective as long as there exists $t \in [t_o, t_f]$ for which $x(t) \ne x^*(t)$. 
\begin{definition}
	A trajectory $x^*$ described by a pair $(x^*, u^*)$ is the global minimizer of the objective function $J(x^*(t))$ for which $J(x^*(t)) \le J(x(t))~\forall~x(t)$.
\end{definition}
	
	Given Definition 1, a trajectory $x^*$ described by a pair $(x^*, u^*)$ is the global minimizer of the cost function in the unconstrained sense (not subject to the dynamics of the system) and satisfies $D_xJ(x^*(t))= 0$ throughout the time horizon considered. 
	\begin{assumption}\label{as:4}
		The pair $(x^*, u^*)$ describes the only trajectory $x^*$ for which the unconstrained derivative of the objective is equal to zero (i.e., $D_xJ(x^*(t))= 0~\forall~t\in[t_o, t_f]$).
	\end{assumption}

	Assumption 4 is necessary to prove that the first-order adjoint is non-zero, which is a requirement for the controllability results shown in this work. It assumes that the objective function in the unconstrained sense does not have a maximizer or saddle point and has only one minimizer $x^*$ described by $(x^*, u^*)$ that indicates the target trajectory or location. It is an assumption that, among other choices, can be easily satisfied with a quadratic cost function that even includes penalty functions associated with physical obstacles.

	\begin{prop}\label{nonzerorho}
		Consider a pair $(x, v)$ that describes the state and default control of \eqref{Dynamics}. If $(x, v)~\ne~(x^*, v^*)$, then the first-order adjoint $\rho$ is a non-zero vector.
	\end{prop}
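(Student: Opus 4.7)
My plan is to proceed by contradiction: assume the adjoint $\rho$ is identically the zero vector on $[t_o, t_f]$, and derive that $(x, v) = (x^*, v^*)$. The main lever is that the adjoint ODE \eqref{eq:: rho} together with its terminal condition ties $\rho$ directly to the derivatives of $l_1$ and $m$ along the default trajectory, so ``$\rho\equiv 0$'' forces the cost to look stationary in a very strong sense.

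First I would substitute $\rho \equiv 0$ into \eqref{eq:: rho}. The left-hand side $\dot\rho$ vanishes and so does the term $D_x f_1^T \rho$, forcing $D_x l_1(x(t)) = 0$ for every $t \in [t_o, t_f]$. The terminal condition $\rho(t_f) = D_x m(x(t_f))^T$ simultaneously gives $D_x m(x(t_f)) = 0$. Interpreting the unconstrained derivative $D_x J(x(t))$ of \eqref{cost} pointwise in $t$---i.e., varying $x(t)$ at a single time with no dynamical constraint---this collection of identities is exactly $D_x J(x(t)) = 0$ for all $t \in [t_o, t_f]$. By Assumption \ref{as:4}, there is a unique trajectory on which the unconstrained gradient vanishes identically, so $x = x^*$.

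It then remains to conclude $v = v^*$ from $x = x^*$. For this I would invoke the control-affine structure \eqref{dynamics}: along any trajectory satisfying $\dot x = g(t,x) + h(t,x)v$, the default control is recovered by $h(t,x)v = \dot x - g(t,x)$. Under the standard non-degeneracy assumption that the columns of $h$ are independent (so $h^T h$ is invertible), $v$ is uniquely determined by $x$, giving $v = v^*$. This contradicts the hypothesis $(x, v) \ne (x^*, v^*)$, establishing the claim.

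The main obstacle I anticipate is the bookkeeping in the first step: making the passage from ``$\rho\equiv 0$'' to ``$D_x J \equiv 0$ pointwise in $t$'' rigorous requires unpacking precisely what the unconstrained derivative of $J$ at a single time $t$ means, and separately handling the endpoint $t_f$ where the terminal cost contributes via $D_x m$. Once that is pinned down, everything else is a direct application of Assumption \ref{as:4} plus the routine uniqueness argument for the control under the control-affine dynamics.
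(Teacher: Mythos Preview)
Your argument is the contrapositive of the paper's and is essentially the same: the paper proceeds directly from $x\ne x^*$ via Assumption~\ref{as:4} to $D_xJ\ne 0$, splits into the running and terminal pieces, and concludes that either $\dot\rho\ne 0$ or $\rho(t_f)\ne 0$. Your version is slightly more careful in that you explicitly close the gap from $x=x^*$ to $v=v^*$, which the paper's proof simply omits; note, however, that the full-column-rank hypothesis on $h$ you invoke for that step is not among the paper's standing assumptions.
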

	\begin{proof}
		Using \eqref{cost}, and by Assumption 4,
		\begin{align*}
		x~\ne~x^* &\Rightarrow D_xJ(x(t))~\ne~0 \\
		&\Rightarrow \int_{t_o}^{t_f} D_xl_1(x(t)) \mathrm{d}t + D_xm(x(t_f))~\ne~0 \\
		&\Rightarrow \int_{t_o}^{t_f} D_xl_1(x(t)) \mathrm{d}t~\ne~0~\text{OR}~D_xm(x(t_f))~\ne~0 \\
		&\Rightarrow D_xl_1(x(t))~\ne~0~\text{OR}~D_xm(x(t_f))~\ne~0\\
		&\Rightarrow \dot{\rho}~\ne~0~\text{OR}~\rho(t_f)~\ne~0.
		\end{align*}
		Therefore, if $x~\ne~x^*$, then $\exists~ t\in[t_o,t_f]$ such that $\rho~\ne~0$.
	\end{proof}

	\begin{prop}\label{NegGrad}
	Consider dynamics given by \eqref{Dynamics} and a trajectory described by state and control $(x, v)$. Then, there are always control solutions $u \in \mathbb{R}^M$ such that $\frac{dJ}{d\lambda_+} \le 0$ for some $t \in [t_o, t_f].$ 
\end{prop}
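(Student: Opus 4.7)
The plan is to exploit the control-affine structure of \eqref{dynamics} and the linearity of the MIG \eqref{MIG} in the inserted control $u$. Substituting $f_1 = g + hv$ and $f_2 = g + hu$ gives $f_2 - f_1 = h(u-v)$, so \eqref{MIG} reduces to
\begin{align*}
\frac{dJ}{d\lambda_+} = \rho^T h\,(u - v),
\end{align*}
which is an affine function of $u \in \mathbb{R}^M$ at each fixed $t \in [t_o,t_f]$. Once the MIG is written in this form, the problem collapses to a linear algebra question at each time $t$: can one choose $u(t)$ so that the inner product $\rho(t)^T h(t)\,(u(t) - v(t))$ is non-positive?

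I would then split into two cases driven by whether the row vector $\rho(t)^T h(t) \in \mathbb{R}^{1\times M}$ is identically zero. If there exists some $t \in [t_o,t_f]$ at which $\rho(t)^T h(t) \ne 0$, choosing $u(t) - v(t)$ parallel to $-(\rho(t)^T h(t))^T$ with any positive magnitude makes the MIG strictly negative at that $t$, in particular $\le 0$. Proposition~\ref{nonzerorho} guarantees $\rho \not\equiv 0$ on $[t_o,t_f]$ whenever $(x,v)\ne(x^*,v^*)$, which is the generic situation in which strict descent is actually achievable. If instead $\rho(t)^T h(t) = 0$ for every $t$, the MIG vanishes for every admissible $u$, and the trivial choice $u \equiv v$ still yields $\frac{dJ}{d\lambda_+} = 0 \le 0$, so the claim holds in this case as well.

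The main subtlety — and the reason the statement uses the weak inequality $\le 0$ rather than a strict inequality — is precisely this second, singular regime, where $\rho(t)$ lies in the left null space of $h(t)$ throughout the horizon. Here the first-order needle variation cannot extract strict descent by itself, and $u = v$ is the only witness to the bound. Framing the proposition carefully so that this degenerate case is included is the only non-mechanical step in the argument; the actual algebraic content is a one-line consequence of the control-affine form of the dynamics combined with \eqref{MIG}. The singular case is exactly the obstruction that motivates the second-order analysis via the MIH \eqref{MIH} and its Lie-bracket structure that the paper develops next.
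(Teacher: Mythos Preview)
Your proposal is correct and follows essentially the same approach as the paper: both rewrite the MIG as $\rho^T h(u-v)$ using the control-affine structure and then exploit linearity in $u-v$ to obtain non-positivity, with the degenerate case $\rho^T h \equiv 0$ yielding equality. The only cosmetic difference is that the paper argues by sign-flipping (if some $u$ gives a positive MIG, take $u'$ with $u'-v = -(u-v)$), whereas you construct $u-v$ explicitly along $-(\rho^T h)^T$; these are the same observation.
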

\begin{proof}
	Using dynamics of the form in \eqref{dynamics}, the expression of the mode insertion gradient can be written as
	\begin{align*}
	\frac{dJ}{d\lambda_+}~=~\rho^T(f_2 - f_1)~=~\rho^T\big(h(u-v)\big).
	\end{align*}
	Given controls $u$ and $v$ that generate a positive mode insertion gradient, there always exist control $u'$ such that the mode insertion gradient is negative, i.e. $u'-v~=~- (u-v)$. 
	The mode insertion gradient is zero for all $u\in\mathbb{R}^M$ if the costate vector is orthogonal to each control vector $h_i$ or if the costate vector is zero everywhere.\endnote{If the control vectors span the state space $\mathbb{R}^N$, the costate vector $\rho \in \mathbb{R}^N$ cannot be orthogonal to each of them. Therefore, for first-order controllable (fully actuated) systems, there always exist controls for which the cost can be reduced to first order.}
\end{proof}

	\begin{prop}\label{AdjVec}
		Consider dynamics given by \eqref{Dynamics} and a pair of state and control $(x, v)$ $~\ne~(x^*, v^*)$ for which $\frac{dJ}{d\lambda_+}~=~0 ~ \forall ~ u~\in \mathbb{R}^M$ and $\forall ~ t \in [t_o, t_f]$. Then, the first-order adjoint $\rho$ is orthogonal (under the Euclidean inner product) to all control vectors $h_i$. 
	\end{prop}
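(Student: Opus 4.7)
The plan is to exploit the control-affine structure of \eqref{dynamics} and reduce the claim to a statement about linear functionals on $\mathbb{R}^M$. Substituting $f_2 - f_1 = h(u-v)$ into the mode insertion gradient \eqref{MIG} gives
\begin{align*}
\frac{dJ}{d\lambda_+} = \rho^T h (u-v),
\end{align*}
so the hypothesis that this expression vanishes for every $u \in \mathbb{R}^M$ and every $t \in [t_o, t_f]$ translates into $\rho(t)^T h(t,x(t))\, w = 0$ for every $w \in \mathbb{R}^M$ (with $w = u - v$) and every $t$ in the horizon.

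The next step is to extract the column-wise orthogonality. Setting $w = e_i$, the $i$-th standard basis vector of $\mathbb{R}^M$, yields $\rho^T h_i = 0$ for each $i \in \{1,\ldots,M\}$, which is exactly the claim that $\rho$ is orthogonal (in the Euclidean inner product) to each control vector $h_i$. Because the argument is pointwise in $t$, the orthogonality inherits the universal quantifier over the interval $[t_o, t_f]$ from the hypothesis.

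Finally, I would observe why the assumption $(x,v) \ne (x^*, v^*)$ appears. Orthogonality to a zero costate is vacuous, so to make the statement substantive one invokes Proposition \ref{nonzerorho}, which under Assumption \ref{as:4} guarantees the existence of some $t \in [t_o, t_f]$ at which $\rho(t) \ne 0$. This nonvanishing is what makes the conclusion meaningful and what will be needed when the Lie-bracket expansion of the MIH is used in the following subsection to argue that descent is still available at second order.

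There is no real obstacle here beyond bookkeeping of the quantifiers; the only care required is to keep the argument pointwise in $t$ so that the orthogonality conclusion, like the hypothesis, holds throughout the horizon and not merely in an integrated or averaged sense.
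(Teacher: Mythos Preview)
Your proof is correct and follows essentially the same approach as the paper: both substitute the control-affine form into \eqref{MIG} to obtain $\rho^T h(u-v)=0$ for all $u$, set $w=u-v$, and conclude $\rho^T h_i=0$ for each $i$ (you via the standard basis vectors $e_i$, the paper via the equivalent observation that $\sum_i k_i w_i=0$ for all $w$ forces $k=0$), with Proposition~\ref{nonzerorho} invoked to ensure the conclusion is nontrivial.
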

	\begin{proof}
		We rewrite \eqref{MIG} as
		\begin{align*}
		\frac{d J}{d \lambda_+}=0 &\Rightarrow \rho^T \sum_i^M h_i (u_i - v_i)=0 \\
		&\Rightarrow \sum_i^M k_i w_i=0 ~ \forall ~ w_i,
		\end{align*}
		where $w_i=(u_i - v_i)$ and $k_i=\rho^Th_i\in\mathbb{R}$. The linear combination of the elements of $k$ is zero for any $w_i$, which means $k$ must be the zero vector. By Proposition \ref{nonzerorho}, $\rho~\ne~0$ for a trajectory described by a pair of state and control $(x, u) \ne (x^*, u^*)$ and, as a result, $\rho^T h_i=0~\forall~i\in[1,M]$.
	\end{proof}
	\begin{prop}\label{AdjLie}
		Consider dynamics given by \eqref{Dynamics} and a pair of state and control $(x, v)~\ne~(x^*, v^*)$ for which $\frac{dJ}{d\lambda_+}=0 ~ \forall ~ u \in \mathbb{R}^M$ and $\forall ~ t \in [t_o, t_f]$. Further assume that the control vectors $h_i$ and the Lie Bracket terms $[h_i, h_j]$ and $[g, h_i]$---where $i, j\in[1, M]$---span the state space $\mathbb{R}^N$. Then, there exist $i$ and $j$ such that either $\rho^T [h_i, h_j]\ne0$ or $\rho^T	[g, h_i]\ne0$. 
	\end{prop}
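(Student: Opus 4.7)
The plan is to argue by contradiction, leveraging Propositions \ref{nonzerorho} and \ref{AdjVec} together with the spanning hypothesis. From the premise that $\frac{dJ}{d\lambda_+} = 0$ for every $u \in \mathbb{R}^M$ and every $t \in [t_o, t_f]$, Proposition \ref{AdjVec} immediately yields the pointwise orthogonality $\rho(t)^T h_i(x(t)) = 0$ for every $i$ and every $t$. Independently, since $(x, v) \ne (x^*, v^*)$, Proposition \ref{nonzerorho} guarantees the existence of some $t^\star \in [t_o, t_f]$ at which $\rho(t^\star) \ne 0$. These two facts are the only ingredients I need, beyond the spanning hypothesis, to produce the contradiction.

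Next, I would suppose for contradiction that, in addition to the orthogonality to the $h_i$ already established, also $\rho(t)^T [h_i, h_j](x(t)) = 0$ and $\rho(t)^T [g, h_i](x(t)) = 0$ hold for every $i, j \in [1, M]$ and every $t$. Evaluating everything at $t = t^\star$, the nonzero vector $\rho(t^\star)$ would be orthogonal under the Euclidean inner product to every element of $\{h_i(x(t^\star))\} \cup \{[h_i, h_j](x(t^\star))\} \cup \{[g, h_i](x(t^\star))\}$. By the spanning hypothesis these vectors generate all of $\mathbb{R}^N$ at $x(t^\star)$, which forces $\rho(t^\star) = 0$, contradicting the selection of $t^\star$. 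Hence at least one of $\rho^T [h_i, h_j]$ or $\rho^T [g, h_i]$ must be nonzero for some $i, j$, which is exactly the claim.

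The main subtlety is quantifier bookkeeping: every orthogonality relation and the spanning assumption are pointwise in $t$, while $\rho$ is determined globally by a backward ODE with terminal boundary condition. The argument sidesteps any dynamical analysis of $\rho$ because the contradiction localizes cleanly to the single time $t^\star$ produced by Proposition \ref{nonzerorho}; no continuity or integration argument on the adjoint is required. Everything reduces to a standard orthogonal-complement argument at the fixed state $x(t^\star)$, which is why the proof is essentially a two-line deduction once the earlier propositions are in place.
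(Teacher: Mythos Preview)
Your proposal is correct and follows essentially the same approach as the paper: both arguments combine Proposition~\ref{AdjVec} (orthogonality of $\rho$ to the $h_i$) with Proposition~\ref{nonzerorho} ($\rho\ne 0$ somewhere) and the spanning hypothesis to conclude via the elementary fact that a nonzero vector cannot be orthogonal to a spanning set of $\mathbb{R}^N$. The paper phrases this directly by expanding $\rho$ in the spanning set and computing $\rho^T\rho$, whereas you phrase it by contradiction via the orthogonal complement, but these are two presentations of the same linear-algebra step.
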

	
	\begin{proof}
		Let $S =\{h_i, [h_i,h_j], [g,h_i]\}~\forall~i,j\in[1,M]$ be a set of vectors that span the state space $\mathbb{R}^N$ ($\spn\{S\}~=~\mathbb{R}^N$). Then, any vector in $\mathbb{R}^N$ can be written as a linear combination of the vectors in $S$. The first-order adjoint is an $N$-dimensional vector, which is non-zero for a trajectory described by a pair of state and control $(x, u) \ne (x^*, u^*)$ by Proposition \ref{nonzerorho}. Therefore, it can be expressed as
		\begin{equation}\label{RHO}
		\rho~=~c_1 h_1 + \dots + c_M h_M + \sum_{i=2}^M\sum_{j=1}^{i-1} c'_{i,j}[h_i, h_j] + \sum_{i=1}^M c''_i[g,h_i],
		\end{equation}
		where $c_i, c'_i, c''_i\in\mathbb{R}$. Left-multiplying \eqref{RHO} by $\rho^T$ yields
		\begin{align*}
		\rho^T \rho =~& \sum_{i=1}^{M}c_i \rho^Th_i + \sum_{i=2}^M\sum_{j=1}^{i-1} c'_{i,j}\rho^T[h_i, h_j] + \sum_{i=1}^M c''_i\rho^T[g,h_i].
		\end{align*}
		Given that $\frac{dJ}{d\lambda_+}=0$, and by Proposition \ref{AdjVec}, $\rho$ is orthogonal to all control vectors $h_i$ (which also implies that the control vectors $h_i$ do not span $\mathbb{R}^N$), the above equation simplifies to
		\begin{align*}
		\rho^T \rho =~& \sum_{i=2}^M\sum_{j=1}^{i-1} c'_{i,j}\rho^T[h_i, h_j] + \sum_{i=1}^M c''_i\rho^T[g,h_i].
		\end{align*}
		Because $\rho^T\rho~\ne~0$, there exists $i,j\in[1,M]$ and a Lie bracket term $[h_i, h_j]$, or $[g, h_i]$ that is not orthogonal to the costate $\rho$. That is,
		\begin{align*}
		\exists~i, j \in[1, M]\text{ such that }\rho^T[h_i,h_j] \neq 0~\text{OR}~\rho^T[g,h_j].
		\end{align*}
	\end{proof}

	First-order needle variation methods are singular when the mode insertion gradient is zero. When that is true, the next result---that is the main piece required for the main theoretical result of this section in Theorem 1---demonstrates that the second-order mode insertion gradient is guaranteed to be negative for systems that are controllable with first-order Lie Brackets, which in turn implies that a control solution can be found with second-order needle variation methods.
	\begin{prop}\label{nonkin}
		Consider dynamics given by \eqref{Dynamics} and a trajectory described by state and control $(x, v)~\ne~(x^*, v^*)$ for which $\frac{dJ}{d\lambda_+}~=~0$ for all $u \in \mathbb{R}^M$ and $t \in [t_o, t_f]$. If the control vectors $h_i$ and the Lie brackets $[h_i, h_j]$ and $[g, h_i]$ span the state space ($\mathbb{R}^N$), then there always exist control solutions $u\in \mathbb{R}^M$ such that $\frac{d^2J}{d\lambda_+^2} < 0$. 
	\end{prop}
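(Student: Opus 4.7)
I would start from the Lie-bracket expansion of the mode insertion Hessian derived in Section III.B and exploit what the standing hypothesis forces on the costate. By Proposition \ref{AdjVec}, the assumption that $\frac{dJ}{d\lambda_+}=0$ for all $u\in\mathbb{R}^M$ at every $t\in[t_o,t_f]$ is equivalent to $\rho^T h_i = 0$ for each $i$ along the entire horizon. Differentiating this identity in time and substituting the costate ODE \eqref{eq:: rho} yields the pointwise relation $\rho^T[g,h_i] = D_x l_1 \, h_i$, which makes the only linear-in-$u$ coefficient in the MIH, namely $\rho^T[g,h_i]-D_x l_1 \, h_i$, vanish identically. The MIH therefore collapses to a pure quadratic form $Q(u)=u^T P u$, with the symmetric matrix $P$ built from $h_i^T\Omega h_j$, from the products $\rho^T(D_x h_i)h_j$, and---via the identity $[h_i,h_j]=(D_x h_j)h_i-(D_x h_i)h_j$---from the Lie-bracket contributions $\rho^T[h_i,h_j]$.

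The task then becomes showing that $P$ is not positive semidefinite, so that some $u$ yields $Q(u)<0$. Here I would invoke Proposition \ref{AdjLie}: under the spanning hypothesis $\spn\{h_i,[h_i,h_j],[g,h_i]\}=\mathbb{R}^N$, there must exist indices $i,j$ with $\rho^T[h_i,h_j]\neq 0$ or $\rho^T[g,h_i]\neq 0$. In the former case I would restrict to the two-component family $u=\alpha e_i+\beta e_j$, so that $Q$ reduces to a $2\times 2$ quadratic whose off-diagonal is pushed by the non-vanishing bracket; flipping the sign of either $\alpha$ or $\beta$ then flips the sign of that off-diagonal contribution, leaving the diagonal $h_k^T\Omega h_k+\rho^T(D_x h_k)h_k$ entries intact. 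In the alternative case I would use the identity $\rho^T[g,h_i]=D_x l_1 \, h_i$ together with the $\Omega$-ODE \eqref{eq:: Omega} to transfer the drift-bracket information into a sign-controllable entry of $P$, effectively reducing to the bracket-of-inputs case.

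The main obstacle will be the last step: showing that the sign-flippable bracket contribution genuinely makes the relevant $2\times 2$ block of $P$ fail positive semidefiniteness, despite the fixed-sign diagonal pieces coming from the $\Omega$-portion of the MIH. This is where the boundedness and smoothness of Assumptions \ref{as:1}--\ref{as:3} and the spanning condition pull their weight: they bound the competing diagonal terms and leave enough room so that, among the pairs $(i,j)$ for which $\rho^T[h_i,h_j]\neq 0$, one can be selected whose bracket magnitude drives the off-diagonal large enough to make the $2\times 2$ determinant negative. Once indefiniteness of $P$ is established, choosing $u$ along an eigenvector associated with its negative eigenvalue (or simply reading off the signs of $\alpha,\beta$ from the sign of $\rho^T[h_i,h_j]$) delivers the claimed $\frac{d^2J}{d\lambda_+^2}<0$.
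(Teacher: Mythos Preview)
Your differentiation of $\rho^T h_i\equiv 0$ along the nominal trajectory is a nice observation and does yield $\rho^T[f_1,h_i]=D_xl_1\,h_i$; with $v=0$ this indeed kills the linear-in-$u$ term, as you claim. But the remainder of the argument has a genuine gap.

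In your Case 1 ($\rho^T[h_i,h_j]\neq 0$), sign-flipping $\alpha$ or $\beta$ in $u=\alpha e_i+\beta e_j$ only changes the \emph{cross} term of the $2\times 2$ quadratic; it does not force the form to take a negative value. If both diagonal entries $P_{ii}=h_i^T\Omega h_i+\rho^T(D_xh_i)h_i$ and $P_{jj}$ are large and positive relative to the off-diagonal, the restricted form is positive definite regardless of the sign of the cross term. Your final paragraph appeals to boundedness (Assumptions~\ref{as:1}--\ref{as:3}) to argue the bracket contribution can be made to dominate, but those assumptions give no quantitative relation between $h_k^T\Omega h_k$ and $\rho^T[h_i,h_j]$, so the step does not go through. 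Moreover, the off-diagonal of your $P$ is not purely $\frac12\rho^T[h_i,h_j]$: it also carries $h_i^T\Omega h_j$ and $\rho^T(D_xh_i)h_j$ pieces, so even identifying which off-diagonal is nonzero from the bracket hypothesis is not immediate.

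The paper avoids exactly this obstruction by a structural trick you are missing: instead of letting all components of $u$ vary, it fixes $u_j=v_j$ for every $j\neq k$ (and takes $v_k=0$). With that choice the MIH reduces to $u^T\mathcal{G}u-u_k\bigl(D_xl_1\,h_k-\rho^T[g,h_k]\bigr)$ where $\mathcal{G}$ has \emph{all} entries zero outside the $k$th row and column, and the off-diagonals in that row/column are exactly $\frac12\rho^T[h_i,h_k]$ with no $\Omega$ contamination. Then every $2\times 2$ principal minor containing index $k$ has one zero diagonal entry, so its determinant is $-\frac14\bigl(\rho^T[h_i,h_k]\bigr)^2\leq 0$, with strict inequality whenever the bracket is nonzero. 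Indefiniteness of $\mathcal{G}$ follows immediately, and the linear term is then handled by a further sign choice on $u$. Your approach lacks this cancellation mechanism and therefore cannot conclude indefiniteness.

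Your Case 2 (``transfer the drift-bracket information via the $\Omega$-ODE'') is not a proof sketch; you give no mechanism by which the $\Omega$ equation converts $\rho^T[g,h_i]\neq 0$ into a sign-controllable entry of the quadratic form. Once you have (correctly) killed the linear term, the only bracket data remaining in the quadratic are the $\rho^T[h_i,h_j]$, which by hypothesis of Case 2 all vanish---so something additional is genuinely needed, and the $\Omega$-ODE does not obviously supply it.
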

	\begin{proof}
		See Appendix.
	\end{proof}
	\begin{theorem}\label{Theorem}
		Consider dynamics given by \eqref{Dynamics} and a trajectory described by state and control $(x, v)~\ne~(x^*, v^*)$. If the control vectors $h_i$ and the Lie brackets $[h_i, h_j]$ and $[g, h_i]$ span the state space $(\mathbb{R}^N)$, then there always exists a control vector $u \in \mathbb{R}^M$ and a duration $\lambda$ such that the cost function \eqref{cost} can be reduced. 
	\end{theorem}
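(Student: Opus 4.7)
The plan is to apply a Taylor expansion of the cost functional $J$ in the control-duration parameter $\lambda$ about $\lambda = 0$, and then split into two cases based on whether the first-order mode insertion gradient vanishes everywhere in the admissible control set or not. Concretely, by Assumptions \ref{as:1}--\ref{as:3}, the cost as a function of the insertion duration admits the expansion
\begin{equation*}
J(\lambda) = J(0) + \lambda \frac{dJ}{d\lambda_+} + \frac{\lambda^2}{2} \frac{d^2J}{d\lambda_+^2} + o(\lambda^2),
\end{equation*}
so it suffices to exhibit a $u \in \mathbb{R}^M$ for which at least one of the first two derivative terms is strictly negative, and then to argue that the remainder is dominated by this term for sufficiently small $\lambda > 0$.

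First, I would handle the generic (non-singular) case. If there exists some $u \in \mathbb{R}^M$ with $\frac{dJ}{d\lambda_+} \ne 0$, then Proposition \ref{NegGrad} produces a control $u'$ with $\rho^T h(u' - v) < 0$; choosing $\lambda$ small enough that the remainder $O(\lambda^2)$ is dominated by the strictly negative first-order term $\lambda \rho^T h(u' - v)$ yields $J(\lambda) < J(0)$. The existence of such a $\lambda$ follows from Assumptions \ref{as:1}--\ref{as:3}, which ensure the MIH is bounded along the nominal trajectory, so the remainder estimate is uniform on the interval $[0, \bar\lambda]$ for some $\bar\lambda > 0$.

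Second, I would handle the singular case in which $\frac{dJ}{d\lambda_+} = 0$ for every $u \in \mathbb{R}^M$ and every $t \in [t_o, t_f]$. By Proposition \ref{AdjVec}, $\rho$ is then orthogonal to every control vector $h_i$; by Proposition \ref{nonzerorho}, $\rho \not\equiv 0$ whenever $(x,v) \neq (x^*, v^*)$; and by Proposition \ref{AdjLie}, the spanning hypothesis forces $\rho$ to have non-zero inner product with at least one of the first-order brackets $[h_i, h_j]$ or $[g, h_i]$. Invoking Proposition \ref{nonkin}, this guarantees the existence of a control $u \in \mathbb{R}^M$ for which $\frac{d^2J}{d\lambda_+^2} < 0$. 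Since the first-order term vanishes by hypothesis, the Taylor expansion reduces to $J(\lambda) = J(0) + \frac{\lambda^2}{2}\frac{d^2J}{d\lambda_+^2} + o(\lambda^2)$, and for sufficiently small $\lambda > 0$ the strictly negative quadratic term dominates, again yielding $J(\lambda) < J(0)$.

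The main obstacle I expect is the remainder-estimate step. The two-case argument is essentially bookkeeping once Propositions \ref{nonzerorho}--\ref{nonkin} are in hand, but making the ``choose $\lambda$ small enough'' step rigorous requires care: one needs that the $o(\lambda)$ and $o(\lambda^2)$ terms are uniformly controlled in a neighborhood of $\lambda = 0$, which follows from the $\mathcal{C}^2$ smoothness and boundedness assumptions on $f_1, f_2, l_1, m$, but should be stated explicitly (perhaps by writing the expansion with an integral remainder and bounding it via the boundedness of $f_2 - f_1$, $\rho$, and $\Omega$ over the compact interval). Once that uniform bound is in place, the existence of a valid $\lambda > 0$ is immediate in both cases, completing the proof.
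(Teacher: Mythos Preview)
Your proposal is correct and follows essentially the same approach as the paper: a Taylor expansion in $\lambda$, followed by the dichotomy that either the first-order mode insertion gradient can be made strictly negative (Proposition~\ref{NegGrad}) or it vanishes identically and Proposition~\ref{nonkin} forces the second-order term negative. The paper's own proof is in fact terser than yours---it writes the expansion with $\approx$ and does not discuss the remainder at all---so your explicit attention to bounding the $o(\lambda)$ and $o(\lambda^2)$ terms via Assumptions~\ref{as:1}--\ref{as:3} is a welcome addition rather than a deviation.
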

	\begin{proof}
		The local change of the cost function \eqref{cost} due to inserted control $u$ of duration $\lambda$ can be approximated with a Taylor series expansion
		\begin{align*}
		J(\lambda) - J(0) \approx \lambda \frac{dJ}{d\lambda_+} + \frac{\lambda^2}{2} \frac{d^2J}{d\lambda_+^2}.
		\end{align*}
		By Propositions \ref{NegGrad} and \ref{nonkin}, either 1) $\frac{dJ}{d\lambda_+} <0$ or 2) $\frac{dJ}{d\lambda_+}~=~0$ and $\frac{d^2J}{d\lambda_+^2}<0$. Therefore, there always exist controls that reduce the cost function \eqref{cost} to first or second order.
	\end{proof}

	\section{Control Synthesis}
	In this section, we present an analytical solution of first- and second-order needle variation controls that reduce the cost function \eqref{cost} to second order. We then describe the algorithmic steps of the feedback scheme used in the simulation results in Section V. 
	
	\subsection{Analytical Solution for Second Order Actions}
	For underactuated systems, there are states at which $\rho$ is orthogonal to the control vectors $h_i$ (see Proposition \ref{AdjVec}). At these states, control calculations based only on first-order sensitivities fail, while controls based on second-order information still have the potential to decrease the objective provided that the control vectors and their Lie brackets span the state space (see Theorem \ref{Theorem}). We use this property to compute an analytical synthesis method that expands the set of states for which individual actions that guarantee descent of an objective function can be computed. 
	
	Consider the Taylor series expansion of the cost around control duration $\lambda$. Given the expressions of the first- and second-order mode insertion gradients, we can write the cost function \eqref{cost} as a Taylor series expansion around the infinitesimal duration $\lambda$ of inserted control $u$:
	\begin{align}\label{eq:: DJ}
	J(\lambda) & \approx J(0) + \lambda \frac{dJ}{d\lambda_+} + \frac{\lambda^2}{2} \frac{d^2J}{d\lambda_+^2}.
	\end{align}
	The first- and second-order mode insertion gradients used in the expression are functions of the inserted control $u$ in \eqref{Dynamics}. Equation \eqref{eq:: DJ} is quadratic in $u$ and, for a fixed $\lambda$, has a unique solution which is used to update the control actions. Controls that minimize the Taylor expansion of the cost will have the form
	\begin{align}\label{Taylor}
	u^{\>*}(t)=\underset{u}{\operatorname{argmin}} ~J(0) + \lambda \frac{dJ}{d\lambda_+} + \frac{\lambda^2}{2}\frac{d^2J}{d\lambda_+^2} +\frac{1}{2} \lVert u \rVert^2_R,
	\end{align}
	where the MIH has both linear and quadratic terms in $u$. We compute the minimizer of \eqref{Taylor} to be
	\begin{align}\label{optcon}
	u^{\>*}(t)=[\frac{\lambda^2}{2}\,\Gamma + R] ^{-1} \, [\frac{\lambda^2}{2}\,\Delta + \lambda (-h^T\rho)],
	\end{align}
	where $\Delta: \mathbb{R} \mapsto \mathbb{R}^{M\times1}$ and $\Gamma: \mathbb{R}\mapsto \mathbb{R}^{M\times M}$ are respectively the first- and second-order derivatives of $d^2J/d\lambda_+^2$ with respect to the control $u$ (see Appendix). These quantities are given by
	\begin{gather*}
	\begin{align*}
	\Delta\triangleq&~\Big[\big[h^T \big(\Omega^T + \Omega\big)h +
	2 h^T \cdot(\sum_{k=1}^{n} (D_xh_k)\rho_{k})^T\big]v
	\notag\\
	& + {(D_xg \cdot{h})}^{T} \rho
	-
	(\sum_{k=1}^{n} (D_xh_k)\rho_{k}) \cdot g 
	+
	h^T D_xl^T\Big]\\
	\Gamma \triangleq&~[h^T \big(\Omega^T + \Omega\big)h + 
	h^T \cdot (\sum_{k=1}^{n} (D_xh_k)\rho_{k})^T+ 
	\sum_{k=1}^{n} (D_xh_k)\rho_{k}\cdot h].
	\end{align*}
	\end{gather*}
	The parameter $R$, a positive definite matrix, denotes a metric on control effort.
	
	The existence of control solutions in \eqref{optcon} depends on the inversion of the Hessian $H~=~\frac{\lambda^2}{2}\,\Gamma + R$. To practically ensure H is positive definite, we implement a spectral decomposition on the Hessian $H~=~VDV^{-1}$, where matrices $V$ and $D$ contain the eigenvectors and eigenvalues of $H$, respectively. We replace all elements of the diagonal matrix $D$ that are smaller than $\epsilon$ with $\epsilon$
	to obtain $\bar{D}$ and replace $H$ with $\bar{H}~=~V\bar{D}V^{-1}$ in \eqref{optcon}. We prefer the spectral decomposition approach to the Levenberg-Marquardt method ($\bar{H}~=~H + \kappa I \succ 0$), because the latter affects all eigenvalues of the Hessian and further distorts the second-order information. At saddle points, we set the control equal to the eigenvector of $H$ that corresponds to the most negative eigenvalue in order to descend along the direction of most negative curvature \citep{murray2010newton,schnabel1990new, boyd2004convex, nocedal2006sequential}. 
	
	Synthesis based on \eqref{optcon} provides controls at time $t$ that guarantee to reduce the cost function \eqref{cost} for systems that are controllable using first-order Lie brackets. Control solutions are computed by forward simulating the state over a time horizon $T$ and backward simulating the first- and second-order costates $\rho$ and $\Omega$. As we see next, this leads to a very natural, and easily implementable, algorithm for applying cost-based feedback while avoiding iterative trajectory optimization. 
	
	\subsection{Algorithmic Description of Control Synthesis Method}
	\begin{algorithm}[t!]
		\begin{enumerate}[{1.}]
			\item Simulate states and costates with default dynamics $f_1$ over a time horizon $T$
			\item Compute optimal needle variation controls
			\item Saturate controls
			\item Use a line search to find control duration that ensures reduction of the cost function \eqref{cost}\endnote{The application time of the inserted action is typically chosen to correspond to the most negative first-order mode insertion gradient.}
		\end{enumerate} 
		\caption{}
		\label{algorithm}
	\end{algorithm}
	\begin{figure}[]
		\centering
		\includegraphics[width=0.95\linewidth, height = 0.40\textheight]{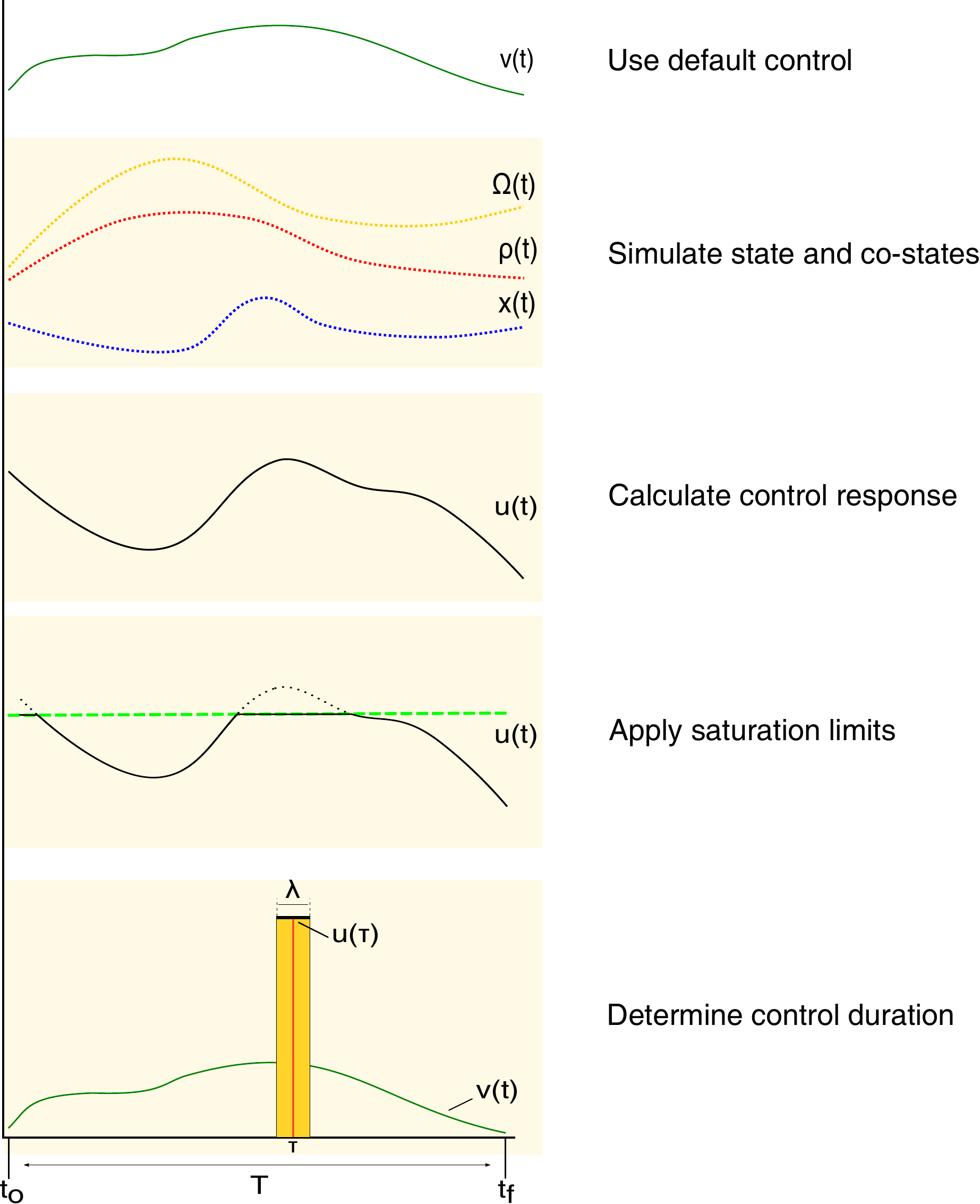}
		\caption{The steps of the controller outlined by Algorithm \ref{algorithm}. Using the default control, the states and co-states are forward-simulated for the time horizon $[t_o, t_o + T]$. The optimal control response is computed from \eqref{optcon}, and saturated appropriately. At the end, the algorithm determines the finite duration of the inserted single action, evaluated at an application time $\tau$, with a line search.} \label{fig: algo}
	\end{figure}
	The second-order controls in \eqref{optcon} are implemented in a series of steps shown in Algorithm \ref{algorithm} and visualized in Fig. \ref{fig: algo}. We compare first- and second-order needle variation actions by implementing different controls in Step 2 of Algorithm \ref{algorithm}. For the first-order case, we implement controls 
	that are the solution to a minimization problem of the first-order sensitivity of the cost function \eqref{cost} and the control effort
	\begin{align}\label{optimalu}
	u^*(t) =&\min\limits_{u} ~~ \frac{1}{2} (\frac{dJ_1}{d\lambda^+_i}-\alpha_d)^2+\frac{1}{2} \lVert u \rVert^2_R \notag\\ 
	=&(\Lambda + R^T)^{-1}(\Lambda v + h^T\rho \alpha_d),
	\end{align}
	where $\Lambda \triangleq h^T\rho\rho^Th$ and $\alpha_d \in \mathbb{R}^- $ expresses the desired value of the mode insertion gradient term (see, for example, \citet{mamakoukas2016}). Typically, $\alpha_d~=~\gamma J_o$, where $J_o$ is the cost function \eqref{cost} computed using default dynamics $f_1$. For second-order needle variation actions, we compute controls using \eqref{optcon}. As Fig. \ref{fig: algo} indicates, the applied actuation is the saturated value of the control response of either \eqref{optcon} or \eqref{optimalu}, evaluated at the application time $\tau$.
		
	While we do not show it in this paper, it is shown in \citet{SAC} that the first-order needle variation control solutions \eqref{optimalu} remain a descent direction after saturation. This result is extended in \citet{mamakoukas_RAL} to show that the entire control signal over the time horizon, and not a needle action, remains a descent direction when saturated by an arbitrary amount. While we have not yet formally proved a similar property for the second-order needle variation controls (11), one can test and identify if the saturated controls would decrease the cost function before applying any actuation. In addition, the results of this work rely on the sign and not the magnitude of the control solutions, suggesting that the saturated second-order solutions in \eqref{optcon} also provide a descent direction.
	
	Further, needle variation actuation as shown in Fig.~\ref{fig: algo} may be practically infeasible or at least problematic for motors due to the abrupt changes in the control. There are two remedies to this issue. First, introducing additional filter states associated with the control can constraint the changes in actuation \citep{taosha}. Second, one can show that the entire curve of the first-order needle variation solution is a descent direction \citep{mamakoukas_RAL}. Assuming the same is true for the second-order solutions as well, one could either apply part of the continuous control solution around the time of application $\tau$ or filter the discontinuous actuation in hardware and still provide descent with more motor-friendly actuation.

	\subsection{Convergence in the presence of obstacles}
We use Theorem \ref{Theorem} to show that the proposed needle-variation controller will always converge to the global minimizer for convex functions. This statement is true independent of the presence of obstacles. 
\begin{theorem}\label{thm:: Ob_converge}
	Consider dynamics given by \eqref{Dynamics}, a trajectory described by state and control $(x, v)\ne(x^*, v^*)$. Let $\tilde{x}^k$ describe the trajectory generated after $k$ iterations of Algorithm \ref{algorithm}. Further let $x\in\mathcal{X}_{free}\forall~t\in[t_o, t_f]$, where $\mathcal{X}_{free}\subset\mathcal{X}$ denotes the collision-free part of the state-space. Consider an objective \eqref{cost} that satisfies Assumption 2 and whose running cost term penalizes collisions, such that $J(\tilde{x}^k) > J(x)$ if $\exists~t\in[t_o, t_f]$ where $\tilde{x}\notin\mathcal{X}_{free}$. If the objective $J(x)$ is convex with respect to the state $x$ in the unconstrained sense and the control vectors $h_i$ and the Lie brackets $[h_i, h_j]$ and $[g, h_i]$ span the state space $(\mathbb{R}^N)$, then the sequence of solutions $\{\tilde{x}^k\}$ generated by Algorithm \ref{algorithm} converges to $x^*$. Further, $\tilde{x}^k\in\mathcal{X}_{free}\forall~k\in\mathbb{R}^+$. 
\end{theorem}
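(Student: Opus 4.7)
My plan is to establish the theorem in three stages: strict per-iterate descent of $J$ via Theorem~\ref{Theorem}, collision-avoidance as a by-product of descent together with the penalty structure of the running cost, and finally convergence of the iterates to $x^*$ using convexity.

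The first stage is essentially bookkeeping on top of Theorem~\ref{Theorem}. Whenever $(\tilde{x}^k,\tilde{v}^k)\ne(x^*,v^*)$, the hypothesis that $\{h_i,[h_i,h_j],[g,h_i]\}$ spans $\mathbb{R}^N$ together with Propositions~\ref{nonzerorho}--\ref{nonkin} ensures that either $dJ/d\lambda_+<0$ or $dJ/d\lambda_+=0$ and $d^2J/d\lambda_+^2<0$. Since Step~4 of Algorithm~\ref{algorithm} performs a line search on the duration $\lambda$ precisely to realize such a decrease, the sequence $\{J(\tilde{x}^k)\}$ is strictly monotone-decreasing and, being bounded below by $J(x^*)$, convergent.

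For the second stage I would argue collision-freeness by induction. The base case $\tilde{x}^0=x\in\mathcal{X}_{free}$ is assumed. For the inductive step, monotone descent gives $J(\tilde{x}^{k+1})<J(\tilde{x}^k)\le J(x)$. The penalty structure on the running cost stipulates that any trajectory $\tilde{x}$ leaving $\mathcal{X}_{free}$ at some $t\in[t_o,t_f]$ satisfies $J(\tilde{x})>J(x)$; taking the contrapositive with $J(\tilde{x}^{k+1})\le J(x)$ forces $\tilde{x}^{k+1}\in\mathcal{X}_{free}$, which closes the induction and yields the second conclusion of the theorem.

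For the third stage I would proceed by contradiction, supposing a limit point $\bar{x}\ne x^*$. Convexity of $J$ in the unconstrained sense makes $x^*$ its unique stationary point, so $D_xJ(\bar{x})\ne 0$; Proposition~\ref{nonzerorho} then yields a nonzero first-order adjoint along the limiting trajectory, and Theorem~\ref{Theorem} supplies a strictly negative local descent rate at $\bar{x}$. Continuity of the MIG, MIH, and adjoint dynamics (Assumptions~\ref{as:1}--\ref{as:3}) should upgrade this pointwise guarantee to a uniform descent bound on a neighbourhood of $\bar{x}$, contradicting $J(\tilde{x}^k)\to J(\bar{x})$. I expect the main obstacle to sit precisely in this last upgrade: promoting the pointwise strict descent from Theorem~\ref{Theorem} into a uniform ``sufficient decrease'' along the whole sequence requires either compactness of the iterates (which I would derive from coercivity of the convex $J$ restricted to the sublevel set $\{J\le J(x)\}$, enforced by the obstacle penalty and any quadratic state term in $l_1$) or an explicit Armijo-type guarantee that the line search in Step~4 captures a fixed fraction of the Taylor-model decrease in \eqref{eq:: DJ}. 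Once this uniform descent is secured, every limit point must coincide with $x^*$, and monotonicity of $\{J(\tilde{x}^k)\}$ lifts this to convergence of the entire sequence.
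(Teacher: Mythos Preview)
Your three-stage outline---strict descent via Theorem~\ref{Theorem}, collision-freeness from the penalty structure, then convergence---matches the paper's proof essentially step for step; stages~1 and~2 are the same argument, with your induction being a repackaging of the paper's one-line contradiction.

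Where you differ is in stage~3, and here you are more careful than the paper. The paper simply writes $J(\tilde{x}^k)>J(\tilde{x}^{k+1})\ge J_{\min}$ and then asserts $\lim_{k\to\infty}J(\tilde{x}^k)=J_{\min}$, deferring the ``sufficient descent'' point to an endnote invoking Assumption~4. That inference does not follow from monotone boundedness alone, and you correctly identify the missing piece: without a uniform (Armijo-type) lower bound on the per-step decrease, the sequence could stall at a value strictly above $J_{\min}$. Your proposed remedy---compactness of the sublevel set plus continuity of the MIG/MIH data to upgrade pointwise descent to uniform descent near any non-optimal limit point---is the right kind of argument and is more rigorous than what the paper provides. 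So your proposal is sound and aligned with the paper's approach, but your stage~3 actually plugs a gap the paper leaves open.
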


\begin{proof}
	Algorithm \ref{algorithm} constructs control responses out of the first- and second-order mode insertion gradients. By extension of Theorem \ref{Theorem}, it can guarantee to reduce the objective with each iteration (for some control $u$ of duration $\lambda$). Therefore, 
	\begin{align}\label{eq:: cost_decrease}
	J(\tilde{x}^k) > J(\tilde{x}^{k+1}) \ge J_{min},
	\end{align}
	where $J_{min} = J(x^*)$ is the (only) minimizer of the convex objective. It follows that,
	\begin{align}\label{eq:: converge}
	\lim\limits_{k\rightarrow \infty}J(\tilde{x}^k) = J_{min}.
	\end{align}
	Further, assume that there exists $t\in[t_o, t_f]$ and $k\in\mathbb{R}^+$ such that $\tilde{x}^k\notin\mathcal{X}_{free}$. 
	Then $J(\tilde{x}^k) > J(x)$, which contradicts \eqref{eq:: cost_decrease}. Using proof by contradiction, we conclude that 
	\begin{align}\label{eq:: collision_free}
	\tilde{x}^k\in\mathcal{X}_{free}~\forall~t\in[t_o, t_f],~\forall~k\in\mathbb{R}^+. 
	\end{align}
	Assuming that a collision-free path exists between the agent and the target, it is straightforward that the minimizer trajectory is the target's location. Therefore, from \eqref{eq:: converge} and \eqref{eq:: collision_free}, Algorithm \ref{algorithm} generates a sequence $\{\tilde{x}^k\}$ that converges to the target safely.\endnote{Although control responses are constructed from a second-order Taylor series approximation of the objective, the iterated sequence is guaranteed to decrease the real cost function at each iteration. If the real cost function is convex with respect to  the state (in the unconstrained sense), the iterated sequence will converge towards the only minimizer. Using a sufficient descent condition, the algorithm is guaranteed to converge to a point where the unconstrained derivative of the objective is zero (i.e.,  $D_xJ(x) = 0$), which, from Assumption 4, only happens at the global minimizer.}
\end{proof}

With regards to Theorem \ref{thm:: Ob_converge}, we should alert the reader that Algorithm \ref{algorithm} may not guarantee collision-avoidance if the default trajectory is not collision-free, that is if there exists $t\in[t_o, t_f]$ such that $x\notin\mathcal{X}_{free}$. Further, the result of Theorem \ref{thm:: Ob_converge} can generalize to non-convex functions that have only one minimum. 
	\subsection{Comparison to Alternative Optimization Approaches}
	Algorithm \ref{algorithm} differs from controllers that compute control sequences over the entire time horizon in order to locally minimize the cost function. Rather, the proposed scheme utilizes the \emph{time-evolving sensitivity} of the objective to an infinitesimal switch from $v$ to $u$ and searches a one-dimensional space for a finite duration of a single action that will optimally improve the cost. It does so using a closed-form expression and, as a result, avoids the expensive iterative computational search in high-dimensional spaces, while it may still get closer to the optimizer with one iterate. 
	
	Specifically, in terms of computational effort, Algorithm \ref{algorithm} computes two $n \times 1$ (state \eqref{Dynamics} and first-order adjoint \eqref{eq:: rho} variables) and one $n \times n$ (second-order adjoint \eqref{eq:: Omega}) differential equations and searches. All simulations presented in this paper are able to run in real time, including the final 13-dimensional system. However, real-time execution is not guaranteed for higher dimensional systems. Nevertheless, the presented algorithm runs faster than the iLQG method for the simulations considered here.
	\begin{figure*}[h!]
%		\captionsetup{justification=centering}
		\centering
		\begin{subfigure}{0.5\columnwidth}
			\includegraphics[width=\linewidth, height = \linewidth]{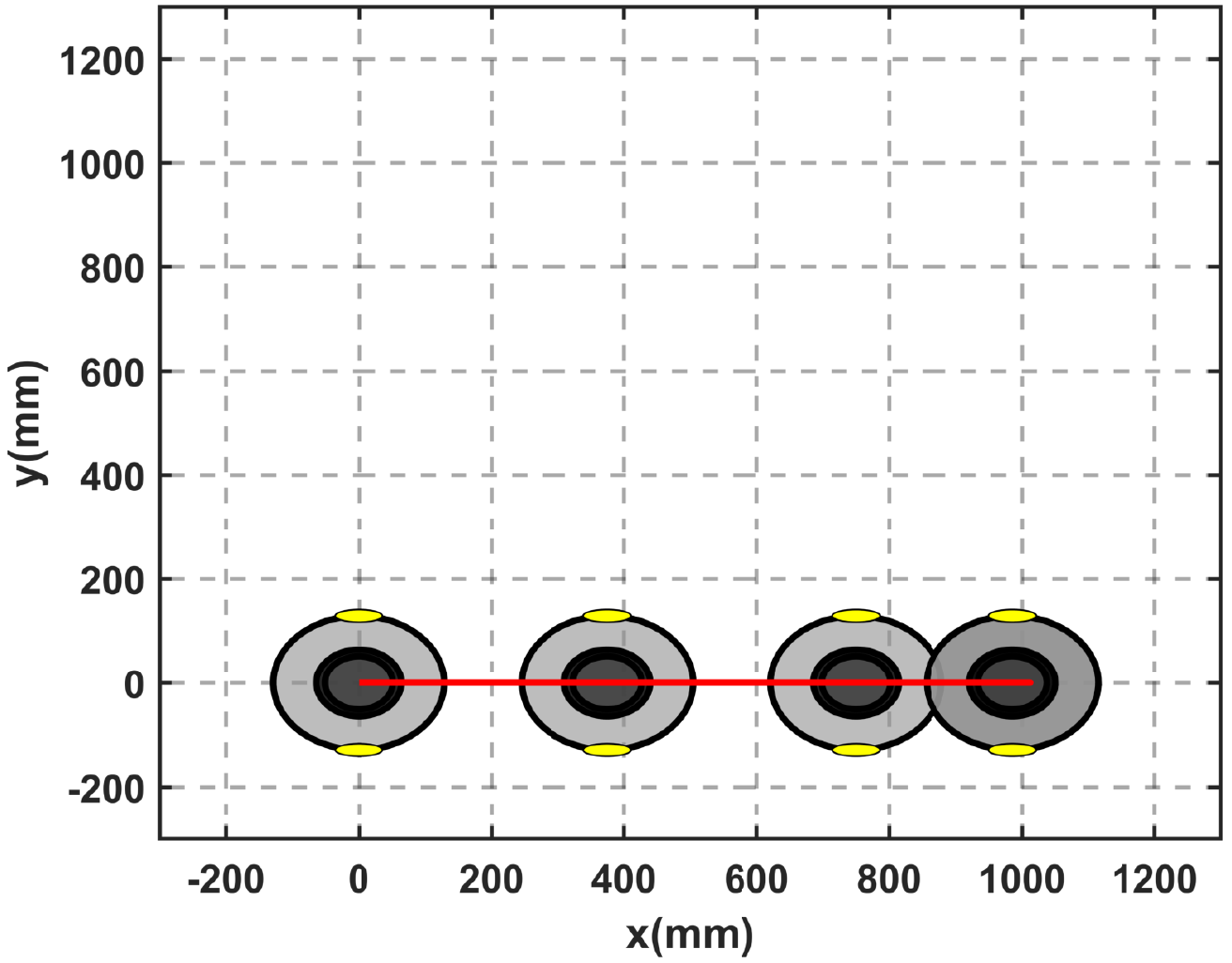}%
%			\captionsetup{justification=centering}
			\caption{}%
			\label{fig7:a} %
		\end{subfigure}\hfill%
		\begin{subfigure}{0.5\columnwidth}
			\includegraphics[width=\linewidth, height = \linewidth]{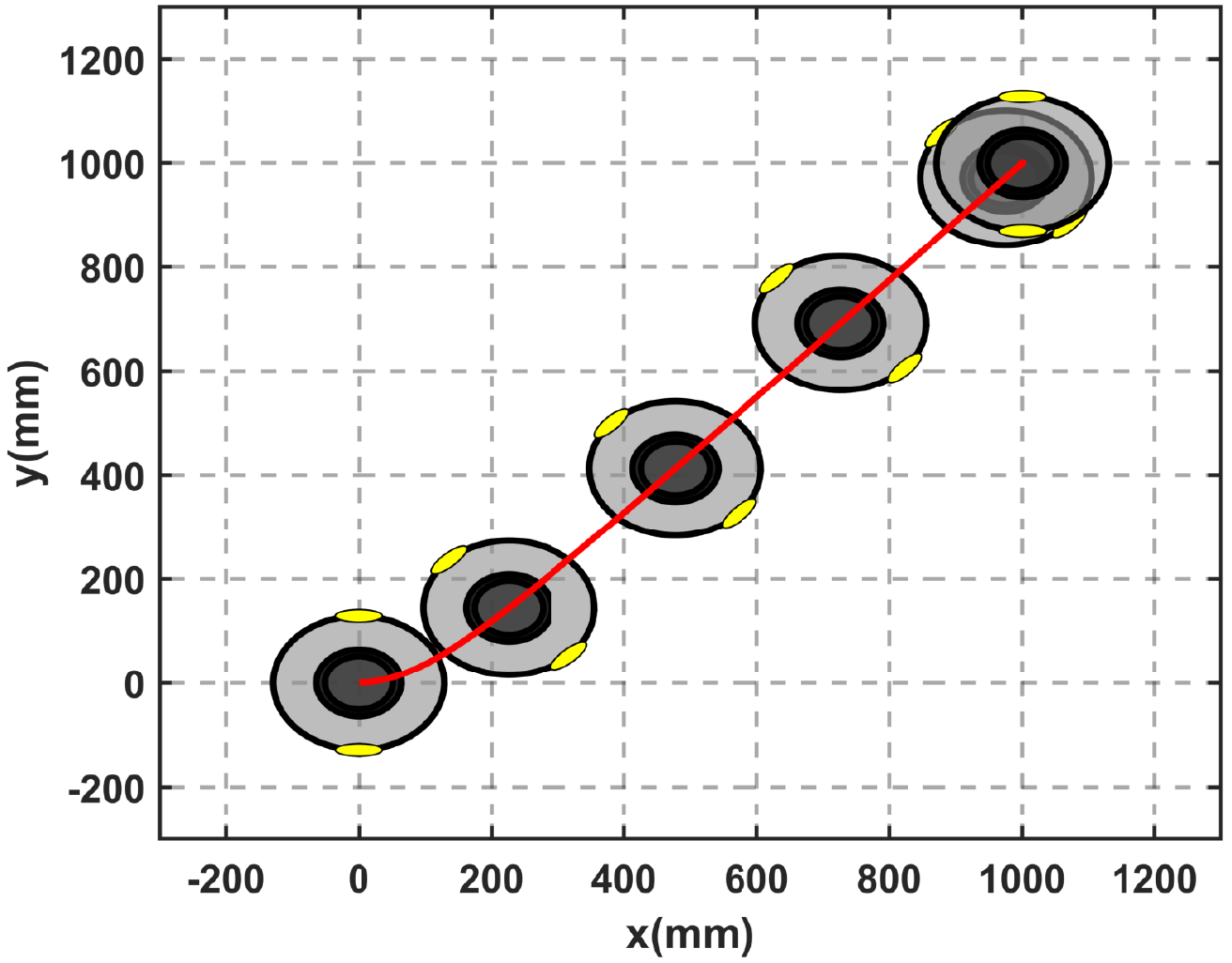}%
%			\captionsetup{justification=centering}
			\caption{}%
			\label{fig7:b} 
		\end{subfigure}\hfill%
		\begin{subfigure}{0.5\columnwidth}
			\includegraphics[width=\linewidth, height = \linewidth]{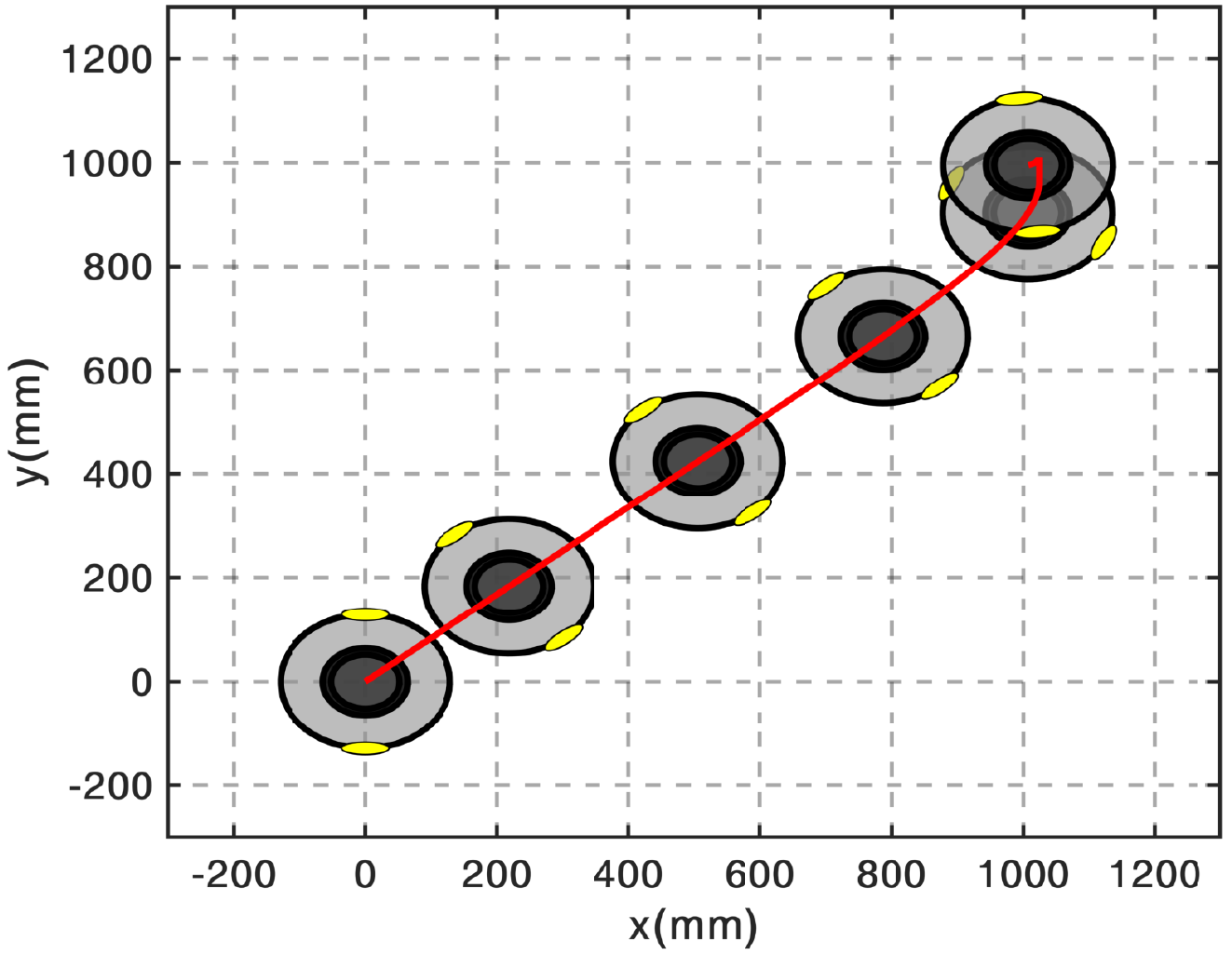}%
%			\captionsetup{justification=centering}
			\caption{}%
			\label{fig7:c} 
		\end{subfigure}\hfill%
	\begin{subfigure}{0.5\columnwidth}
		\includegraphics[width=\linewidth, height = \linewidth]{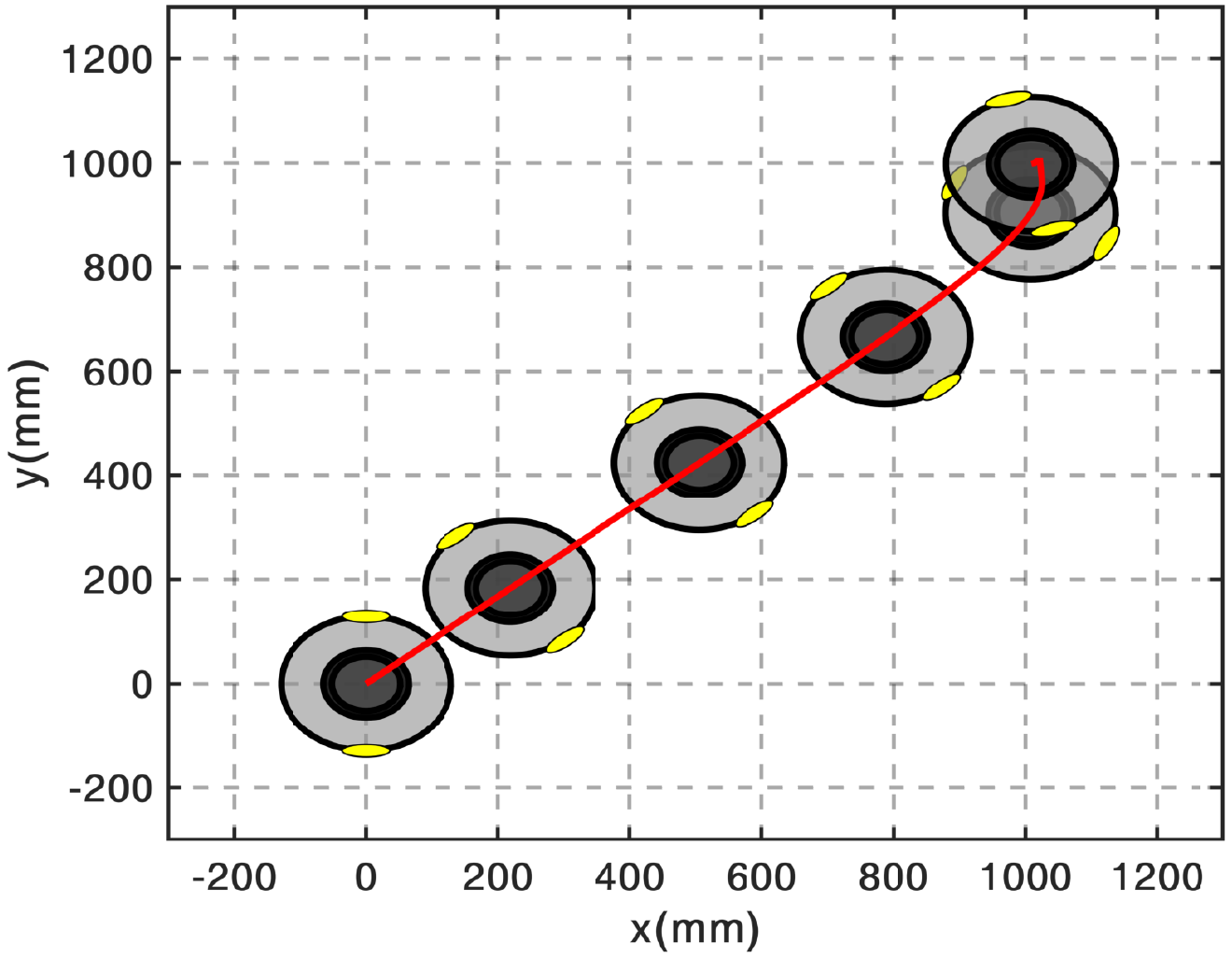}%
%		\captionsetup{justification=centering}
		\caption{}%
		\label{fig7:cD} 
	\end{subfigure}\hfill%
		\\
		\begin{subfigure}{0.5\columnwidth}
			\includegraphics[width=\linewidth, height = \linewidth]{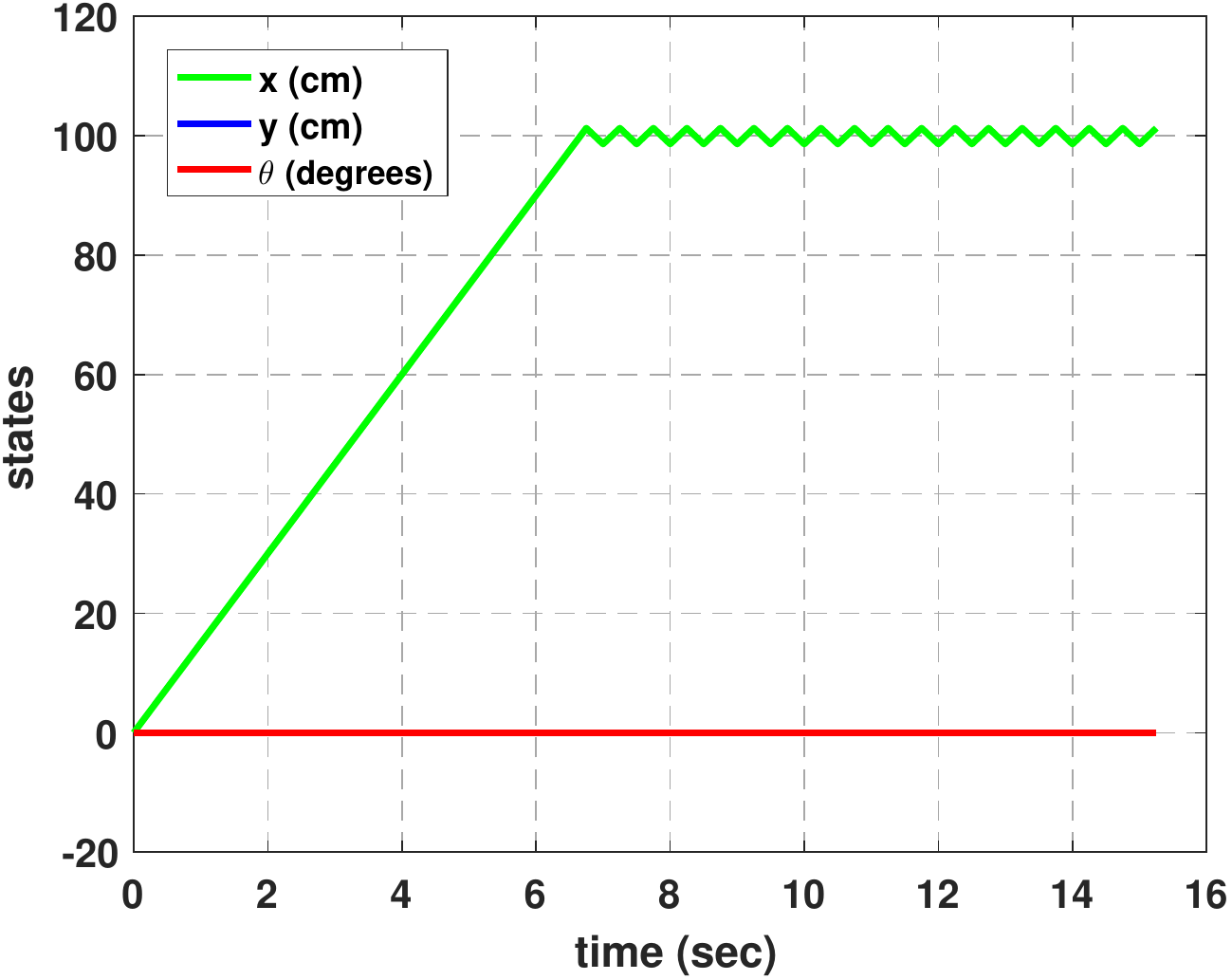}%
%			\captionsetup{justification=centering}
			\caption{}%
			\label{fig7:d} %
		\end{subfigure}\hfill%
		\begin{subfigure}{0.5\columnwidth}
			\includegraphics[width=\linewidth, height = \linewidth]{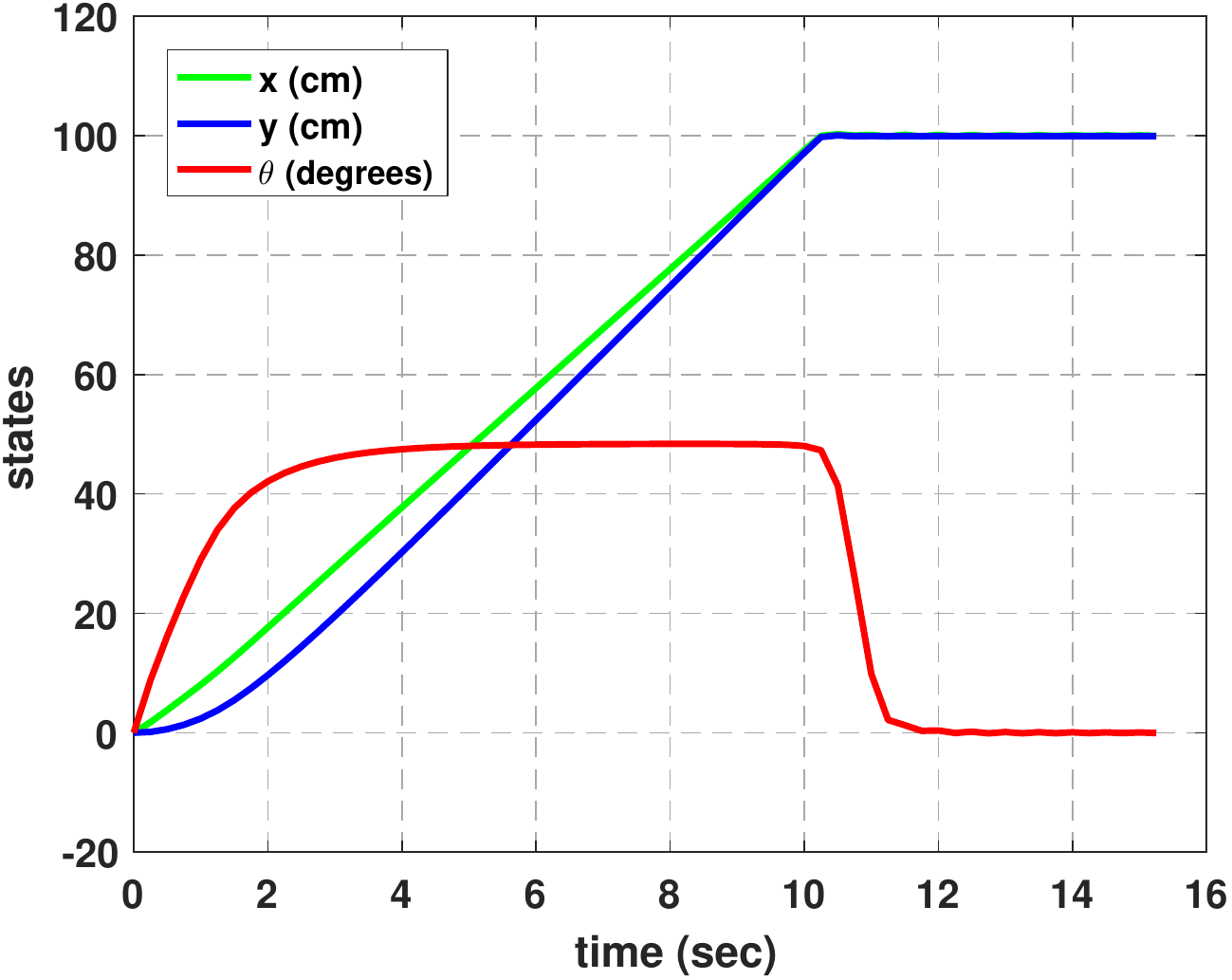}%
%			\captionsetup{justification=centering}
			\caption{}%
			\label{fig7:e} 
		\end{subfigure}\hfill%
		\begin{subfigure}{0.5\columnwidth}
			\includegraphics[width=\linewidth, height = \linewidth]{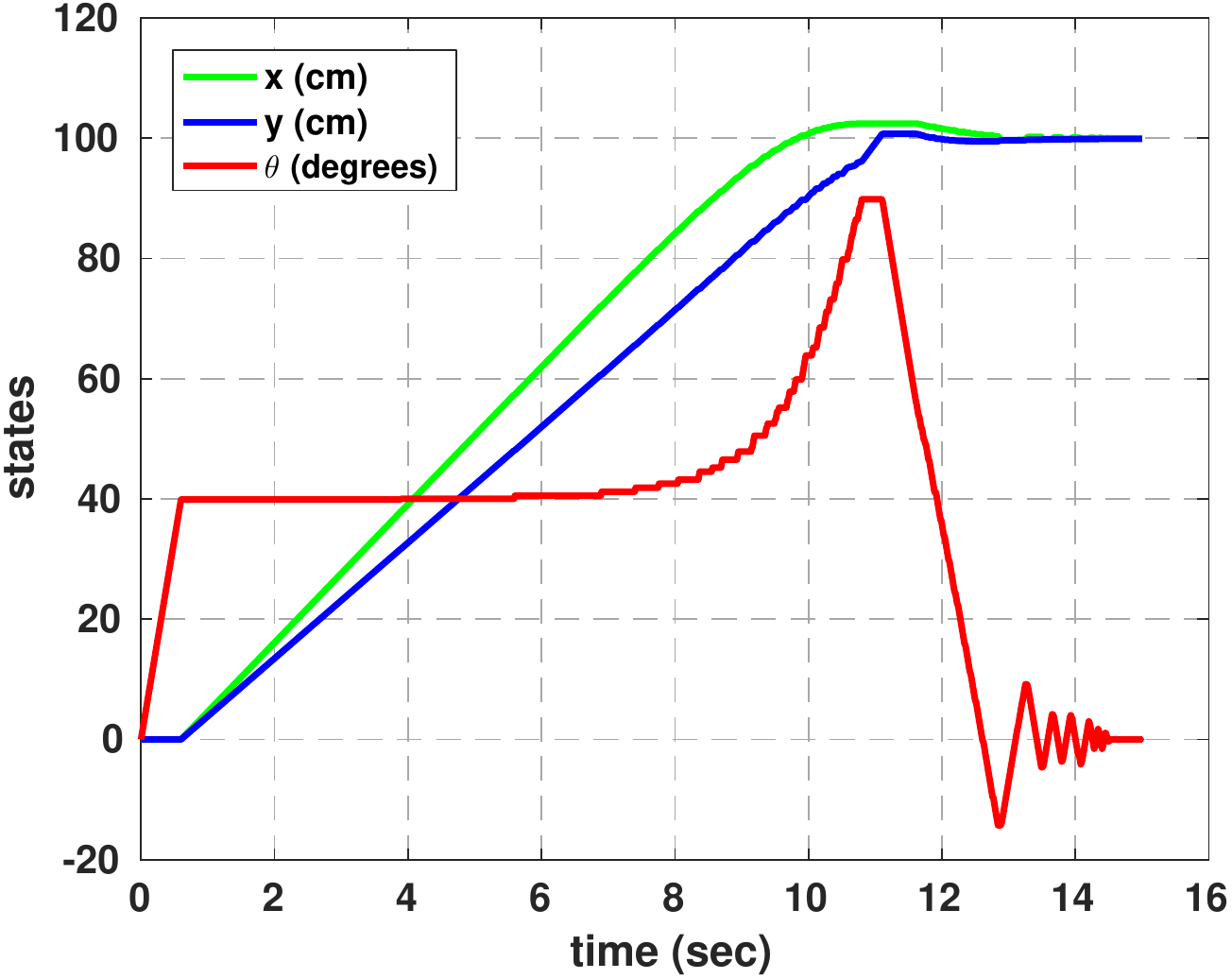}%
%			\captionsetup{justification=centering}
			\caption{}%
			\label{fig7:f} 
		\end{subfigure}\hfill%
	\begin{subfigure}{0.5\columnwidth}
		\includegraphics[width=\linewidth, height = \linewidth]{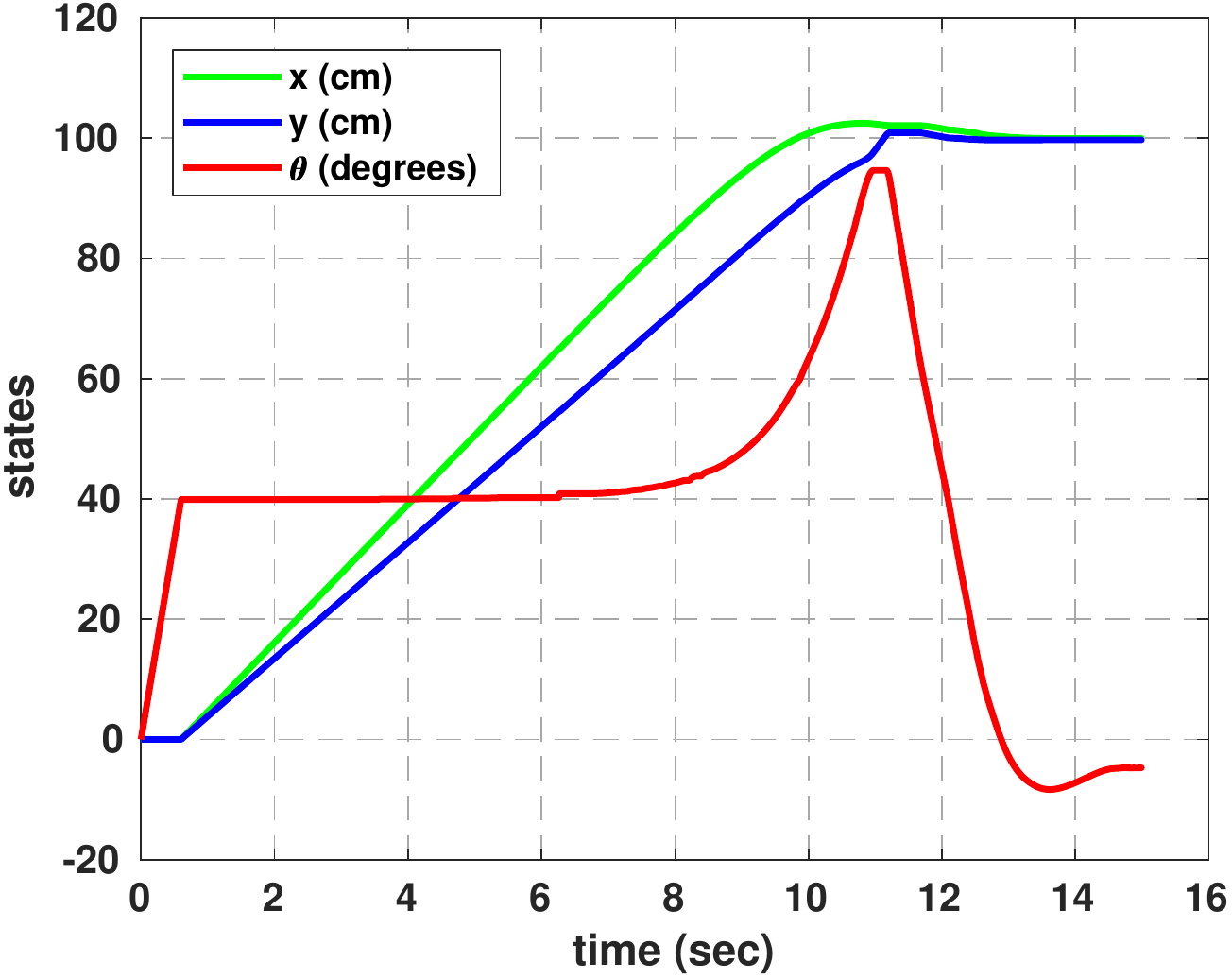}%
%		\captionsetup{justification=centering}
		\caption{}%
		\label{fig7:fD} 
	\end{subfigure}\hfill%
%		\captionsetup{justification=left}
		\caption{Differential drive using first-, second-order needle variation actions, iLQG, and DDP, from left to right. Snapshots of the system are shown at $t~=~0, 2.5, 5, 7.5, 10$, and $12.5$~sec. The target state is $[x_d, y_d, \theta_d]~=~[1000$~mm$,1000$~mm$,0]$.}
		\label{Differential Drive} 
	\end{figure*}
		\begin{figure}[]
		\begin{subfigure}[b]{0.49\linewidth}
			\centering
			\includegraphics[width=4cm,height = 4cm]{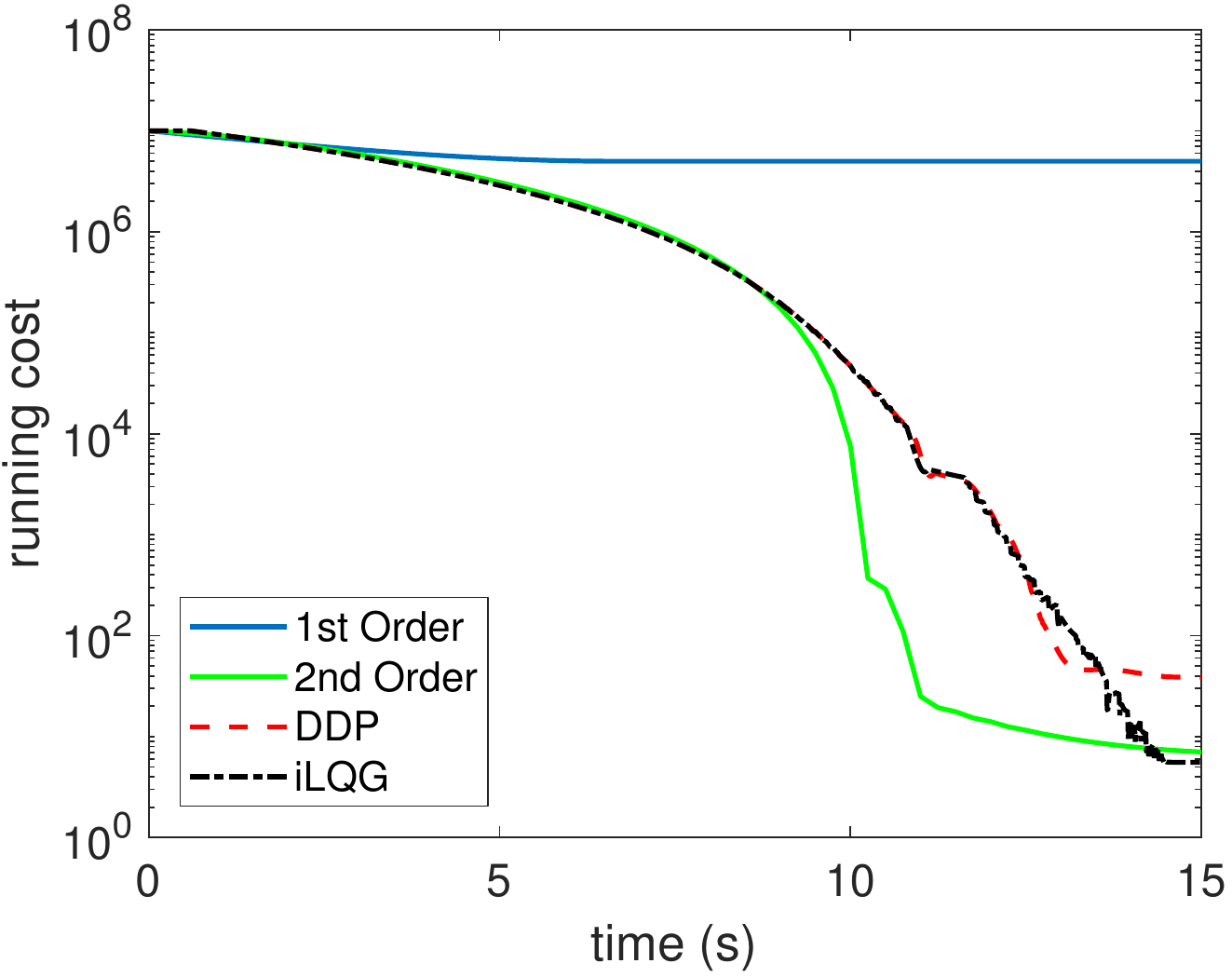} 
			\caption{} \label{fig:: RunCost}
		\end{subfigure} 
		\hfill%
		\begin{subfigure}[b]{0.49\linewidth}
			\centering
			\includegraphics[width=4cm,height =4cm]{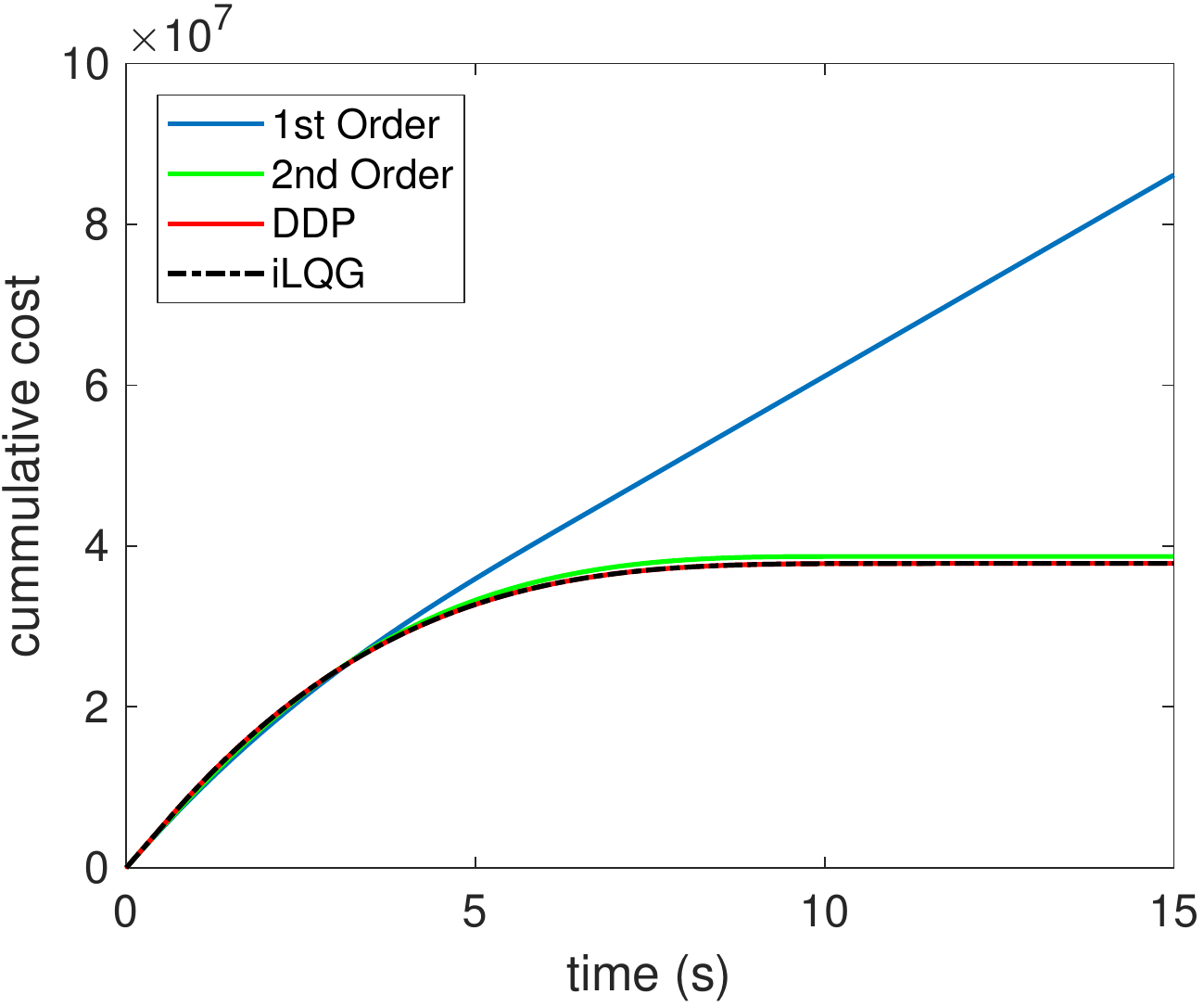} 
			\caption{} 
			\label{fig:: CompDifDrive_Cum} 
		\end{subfigure} 
		\caption{Fig. \ref{fig:: RunCost} plots the running state cost; Fig. \ref{fig:: CompDifDrive_Cum} plots the integrated (cumulative) cost, including the control cost. DDP and iLQG obtain the same cumulative cost, with slightly different trajectories (see Fig. \ref{Differential Drive}). Second-order needle variation actions demonstrate improved convergence to the target over DDP and iLQG, despite optimizing over one single action at each iteration.}
		\label{fig:: Comparisons_DifDrive}
	\end{figure}

	Further, compared to traditional optimization algorithms such as iLQG, needle variation solutions exist globally, demonstrate a larger region of attraction and have a less complicated representation on Lie Groups \citep{taosha}. These traits naturally transfer to second-order needle controls \eqref{optcon} that also contain the first-order information present in \eqref{optimalu}. In addition, as this paper demonstrates, the suggested second-order needle variation controller has formal guarantees of descent for systems that are controllable with first-order Lie brackets, which---to the best of our knowledge---is not provided by any alternative method. 
	
	Given these benefits, the authors propose second-order needle variation actions as a complement to existing approaches for time-sensitive robotic applications that may be subject to large initial error, Euler angle singularities, or fast-evolving (and uncertain) objectives.
	Next, we implement Algorithm \ref{algorithm} using first or second-order needle variation controls (shown in \eqref{optimalu} and \eqref{optcon}, respectively) to compare them in terms of convergence success on various underactuated systems. 
	\section{Simulation Results}
	The proposed synthesis method based on \eqref{optcon} is implemented on three underactuated examples---the differential drive cart, a 3D kinematic rigid body, and a dynamic model of an underwater vehicle. The kinematic systems of a 2D differential drive and a 3D rigid body are controllable using first-order Lie brackets of the vector fields and help demonstrate Theorem \ref{Theorem}. The underactuated dynamic model of a 3D rigid body serves to compare controls in \eqref{optcon} and \eqref{optimalu}, as well as make comparisons to other control techniques, in a more sophisticated environment. In all simulation results, we start with default control $v~=~0$ and an objective function of the form
	\begin{align*}
	J(x(t))=\frac{1}{2}\int_{t_o}^{t_f} \lVert \vec{x}(t)-\vec{x}_d (t) \rVert^2_Q dt+\frac{1}{2}\lVert \vec{x}(t_f)-\vec{x}_d(t_f)\rVert^2_{P_1},
	\end{align*}
	where $\vec{x}_d$ is the desired state-trajectory, and $Q=Q^T \geq 0$, $P_1=P_1^T \geq 0$ are metrics on state error.
	\subsection{2D Kinematic Differential Drive}
	
	We use the differential drive system to demonstrate that first-order controls shown in \eqref{optimalu} that are based only on the first-order sensitivity of the cost function \eqref{cost} can be insufficient for controllable systems, contrary to controls shown in \eqref{optcon} that guarantee decrease of the objective for systems that are controllable using first-order Lie brackets (see Theorem \ref{Theorem}). 
	
	The system states are its coordinates and orientation, given by $s~=~[x, y, \theta]^T$, with kinematic ($g=0$) dynamics
	\begin{align*}
	f=r\begin{bmatrix} cos(\theta) & cos(\theta) \\
	sin(\theta) & sin(\theta) \\
	\frac{1}{L} & -\frac{1}{L}\end{bmatrix}
	\begin{bmatrix}u_R \\ u_L \end{bmatrix},
	\end{align*}
	where $r~=~3.6$~cm, $L~=~25.8$~cm denote the wheel radius and the distance between them, and $u_R$, $u_L$ are the right and left wheel control angular velocities, respectively (these parameter values match the specifications of the iRobot Roomba). The control vectors $h_1$, $h_2$ and their Lie bracket term $[h_1, h_2]~=~2\frac{r^2}{L}\big[-sin(\theta),-cos(\theta)\big]^T$ span the state space ($\mathbb{R}^3$). Therefore, from Theorem \ref{Theorem}, there always exist controls that reduce the cost to first or second order. 
	
	Fig.~\ref{Differential Drive} and \ref{fig:: Comparisons_DifDrive} demonstrate how first-, second-order needle variations, iLQG, and DDP \citep{todorov2005generalized, tassa2014control} perform on reaching a nearby target. We implement the iLQG and DDP algorithms to generate offline trajectory optimization solutions using the publicly available software.\endnote{Available at http://www.mathworks.com/matlabcentral/\\fileexchange/52069-ilqg-ddp-trajectory-optimization.} Actions based on first-order needle variations \eqref{optimalu} do not generate solutions that turn the vehicle, but rather drive it straight until the orthogonal displacement between the system and the target location is minimized. Actions based on second-order needle variations \eqref{optcon}, on the other hand, converge successfully. The solutions differ from the trajectories computed by iLQG and DDP, despite using the same simulation parameters. 
	
	We note the fact that, besides the computational benefits, single-action approaches appear to be rich in information and perform comparably to offline schemes that attempt to minimize the objective by computing different control responses over the entire horizon. Given that the solutions of iLQG and DDP are very similar, and the fact that DDP is slower than iLQG due to expanding the dynamics to second order, we use only the iLQG algorithm as a means of comparison for the rest of the simulations presented in this work. The results in Fig.~\ref{Differential Drive} based on second-order needle variations are generated in real time in \MATLAB and approximately forty times faster than the iLQG implementation. 
	\begin{figure}[]
		\centering
		\includegraphics[width=0.5\linewidth, height=0.15\textheight]{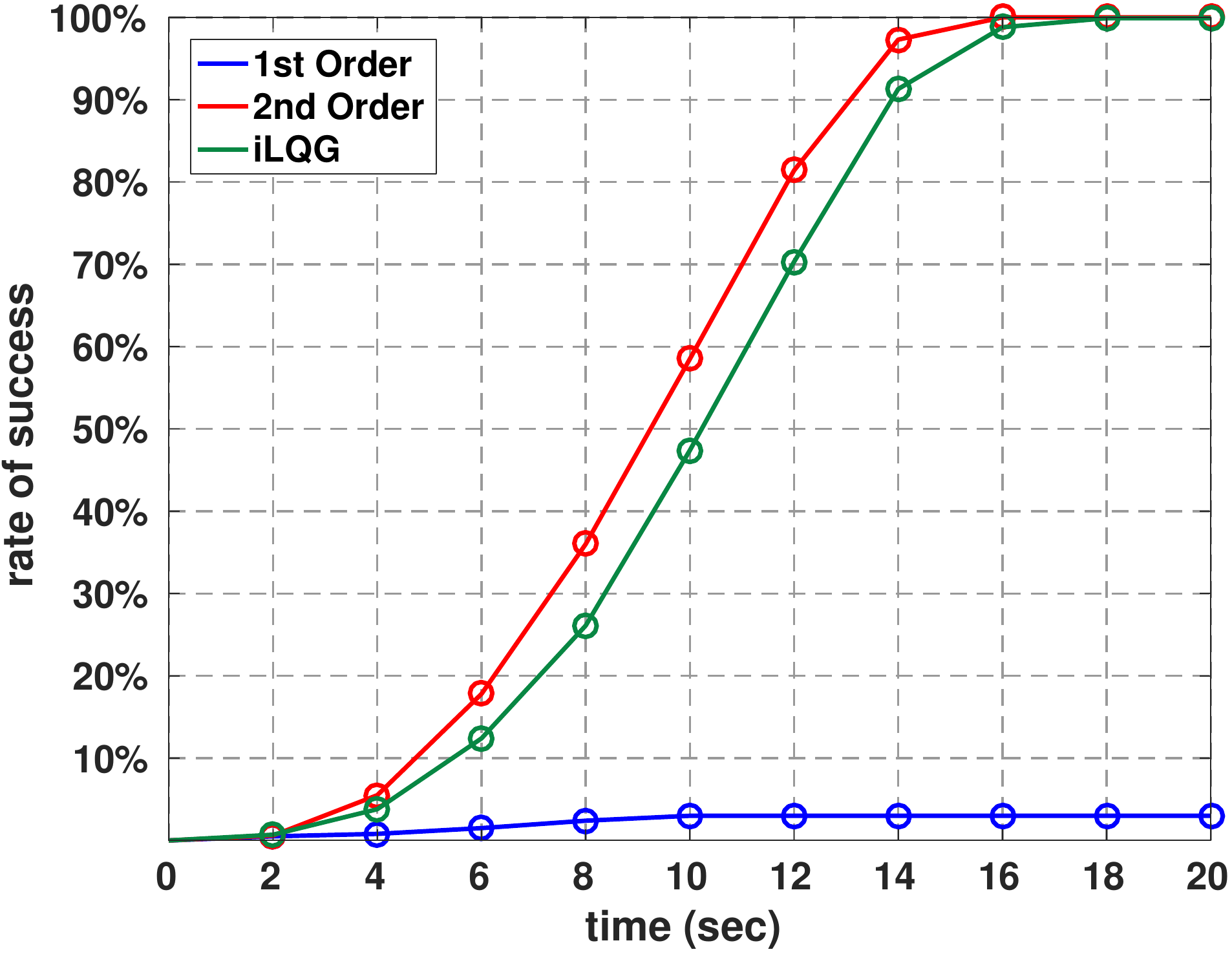}
		\caption{Convergence success rates of first- \eqref{optimalu} and second-order \eqref{optcon} needle variation controls for the kinematic differential drive model. Simulation runs: 1000.}\label{DifDriveMC_2nd}
	\end{figure}
	
	Fig. \ref{DifDriveMC_2nd} shows a Monte Carlo simulation that compares convergence success using first- and second-order needle variations controls shown in \eqref{optimalu} and \eqref{optcon}, respectively, and iLQG. We sampled over initial coordinates $x_0, y_0 \in [-1500, 1500]$~mm using a uniform distribution and keeping only samples for which the initial distance from the origin exceeded $L/5$; $\theta_0~=~0$ for all samples. Successful samples are defined by being within $L/5$ from the origin with an angle $\theta <\pi/12$ within 60 seconds using feedback sampling rate of 4 Hz. Results are generated using $Q~=~\text{diag}(10,10,1000)$, $P_1~=~\text{diag}(0,0,0)$, $T~=~0.5$~s, $R~=~\text{diag}(100,100)$ for \eqref{optimalu}, $R~=~\text{diag}(0.1,0.1)$ for \eqref{optcon}, $\gamma~=~-15$, $\lambda~=~0.1$ and saturation limits on the angular velocities of each wheel $\pm$150/36~mm/s for each control approach.\endnote{The metric on control effort is necessarily smaller for \eqref{optcon}, due to parameter $\lambda$. The parameter is chosen carefully to ensure that control solutions from \eqref{optcon} and \eqref{optimalu} are comparable in magnitude.} As shown in Fig.~\ref{DifDriveMC_2nd}, the system always converges to the target using second-order needle variation actions, matching the theory. 
	\begin{figure}[]
		\begin{subfigure}[b]{0.49\linewidth}
			\centering
			\includegraphics[width=4cm,height = 4cm]{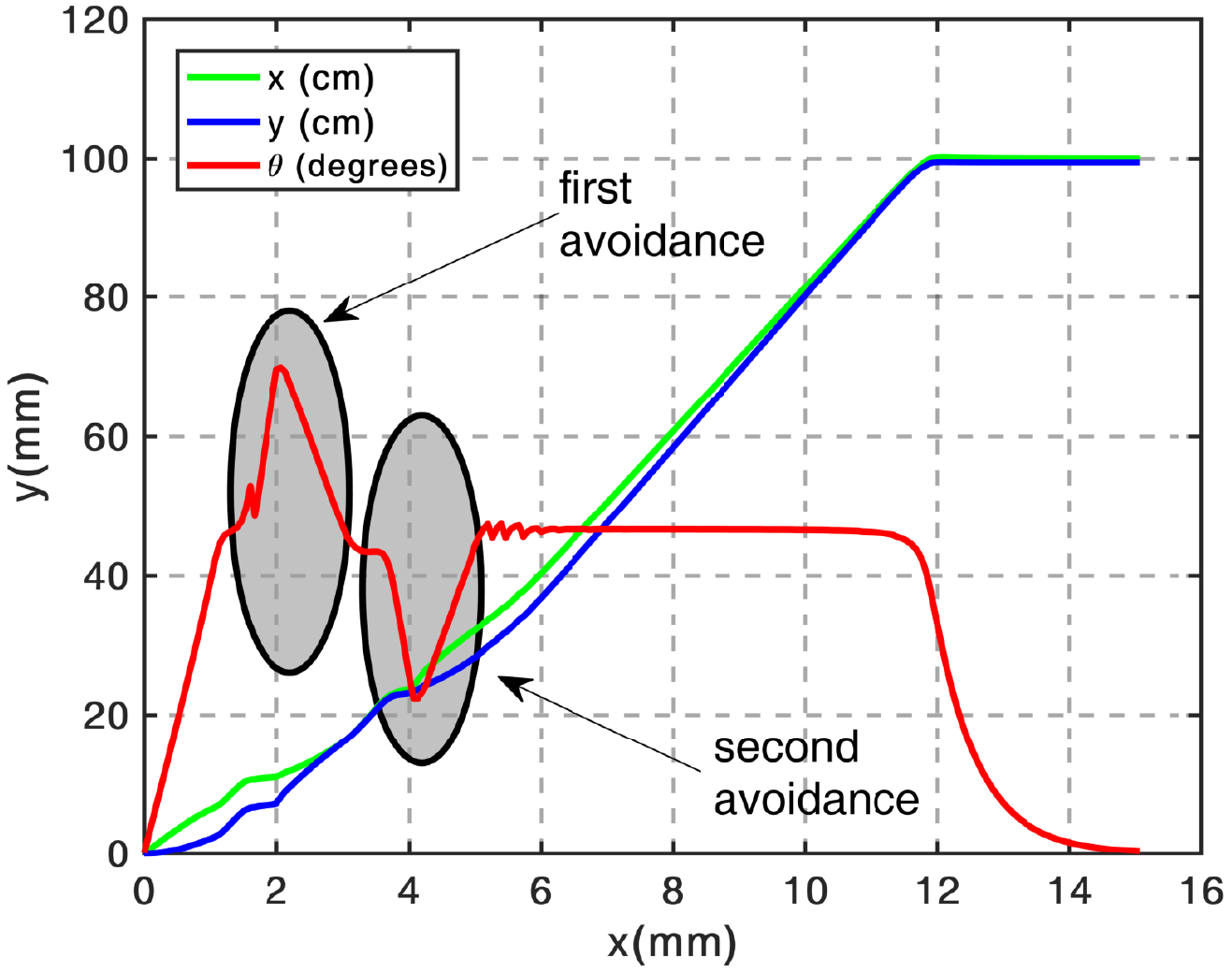} 
			\caption{} 
		\end{subfigure} 
		\hfill%
		\begin{subfigure}[b]{0.49\linewidth}
			\centering
			\includegraphics[width=4cm,height =4cm]{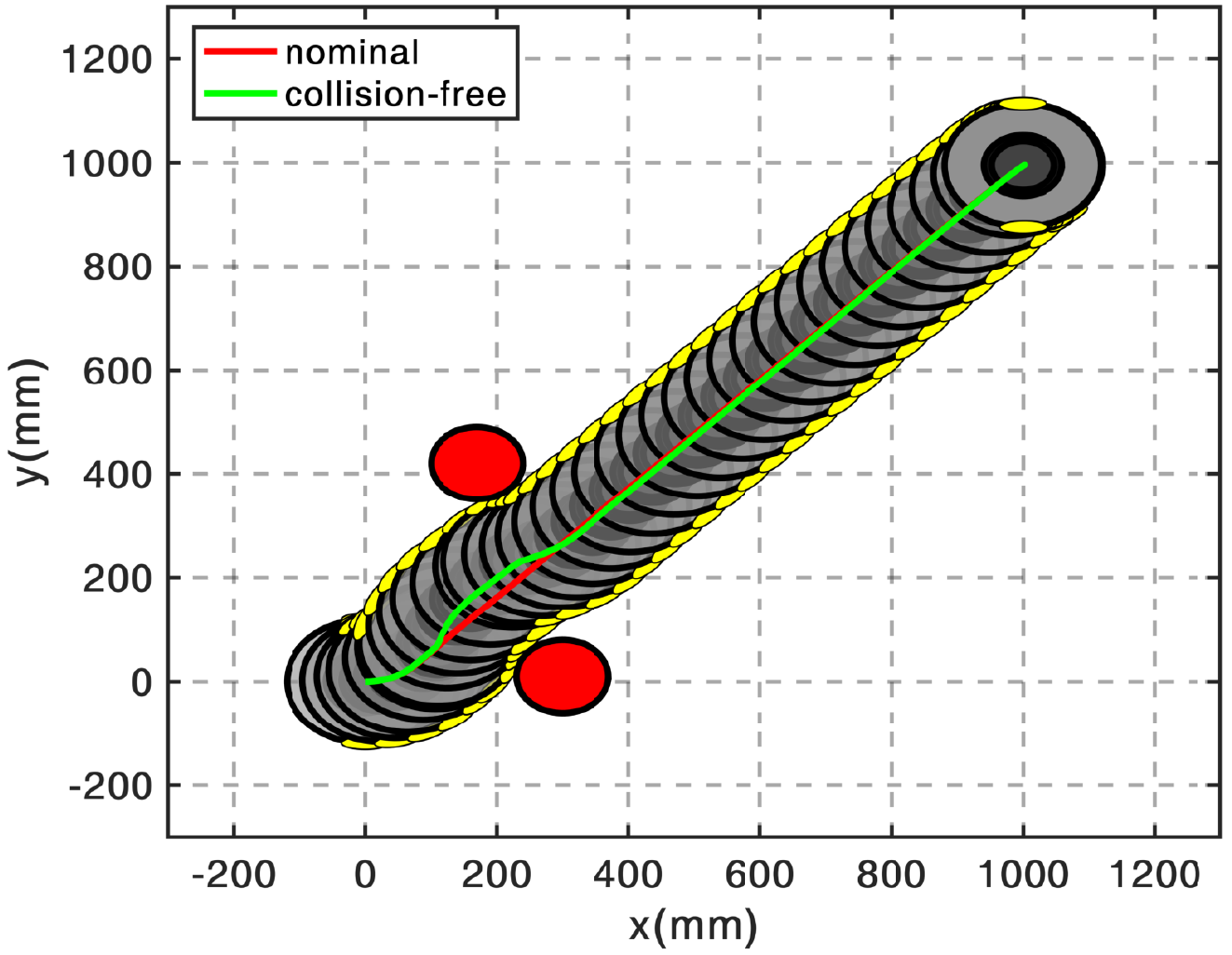} 
			\caption{} 
			\label{fig:: Ob_b} 
		\end{subfigure} 
		\caption{Differential drive using second-order needle variation actions in the presence of obstacles. Fig. \ref{fig:: Ob_b} shows the deviation from the nominal trajectory that is the solution to the no-obstacle task. The system performs two maneuvers to avoid each obstacle. These are evident in the angle deviation (compare to Fig. \ref{fig7:c}).}
		\label{fig:: DifDrive_Diag_wO}
	\end{figure}
	\subsubsection{Convergence with Obstacles}
	\begin{figure*}[h!]
		\centering
		\begin{subfigure}{0.5\columnwidth}
			\includegraphics[width=\linewidth, height = \linewidth]{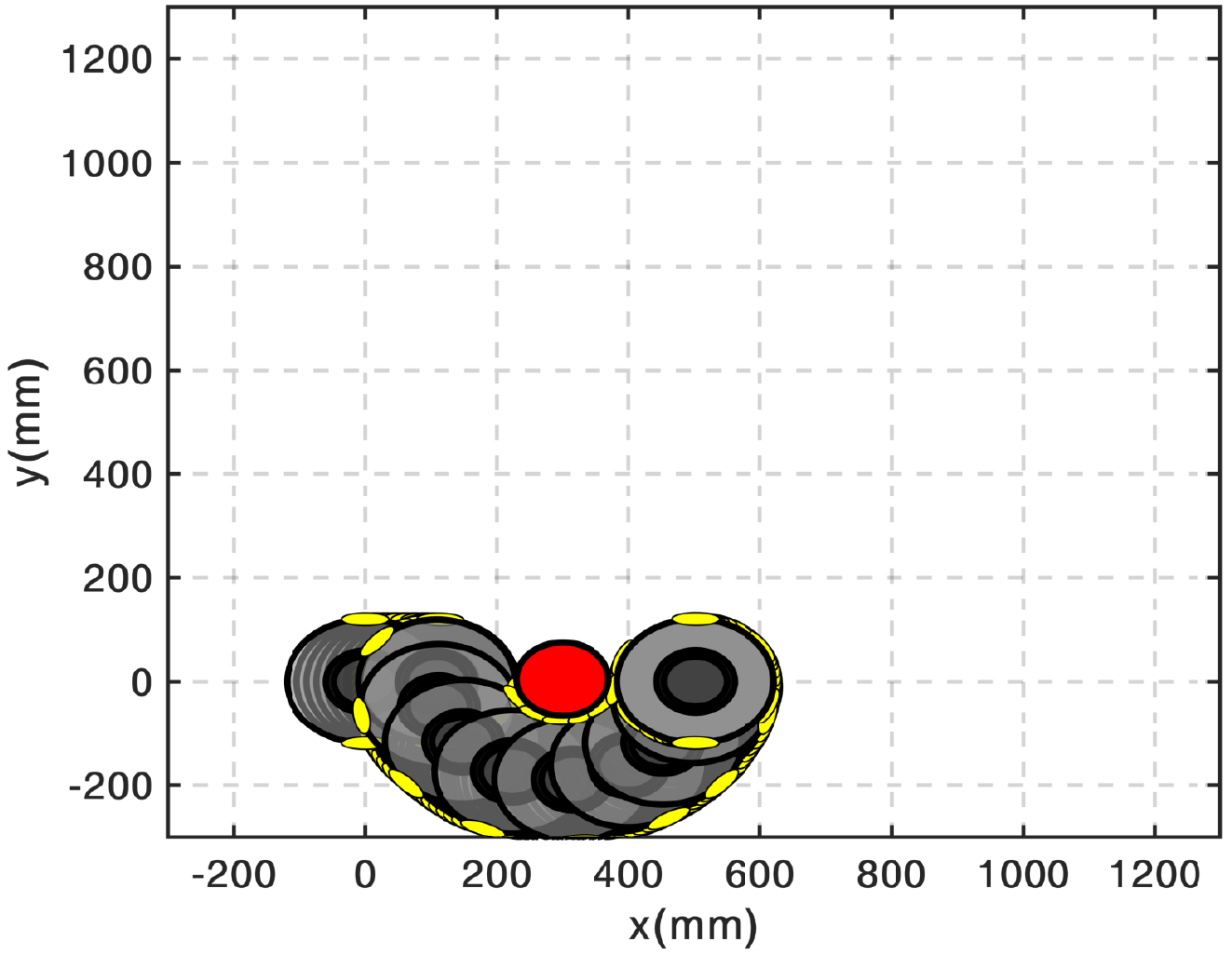}%
			\caption{}%
			\label{fig: 1}%
		\end{subfigure}\hfill%
		\begin{subfigure}{0.5\columnwidth}
			\includegraphics[width=\linewidth, height = \linewidth]{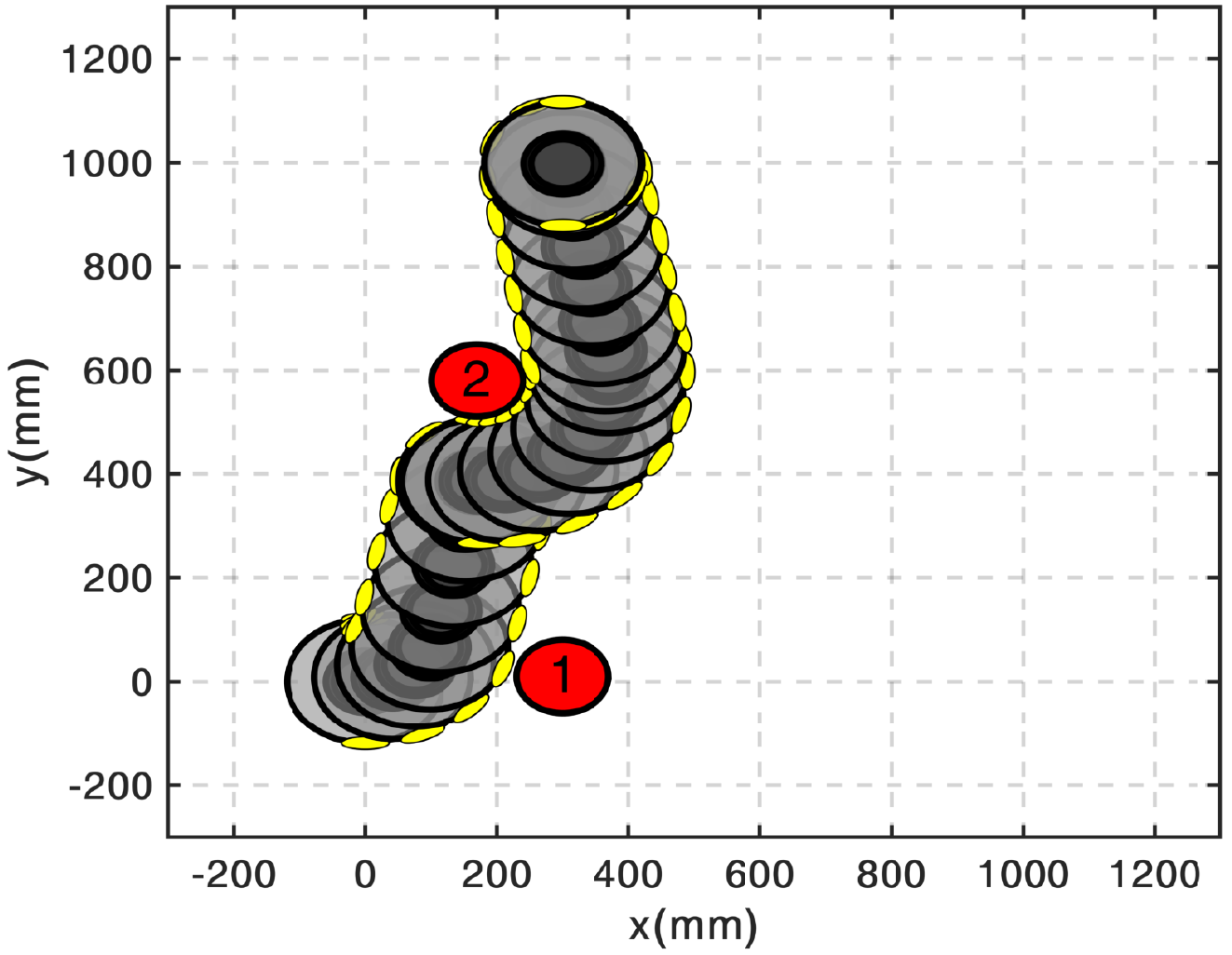}%
			\caption{}%
			\label{fig: 2r}%
		\end{subfigure}\hfill%
		\begin{subfigure}{0.5\columnwidth}
			\includegraphics[width=\linewidth, height = \linewidth]{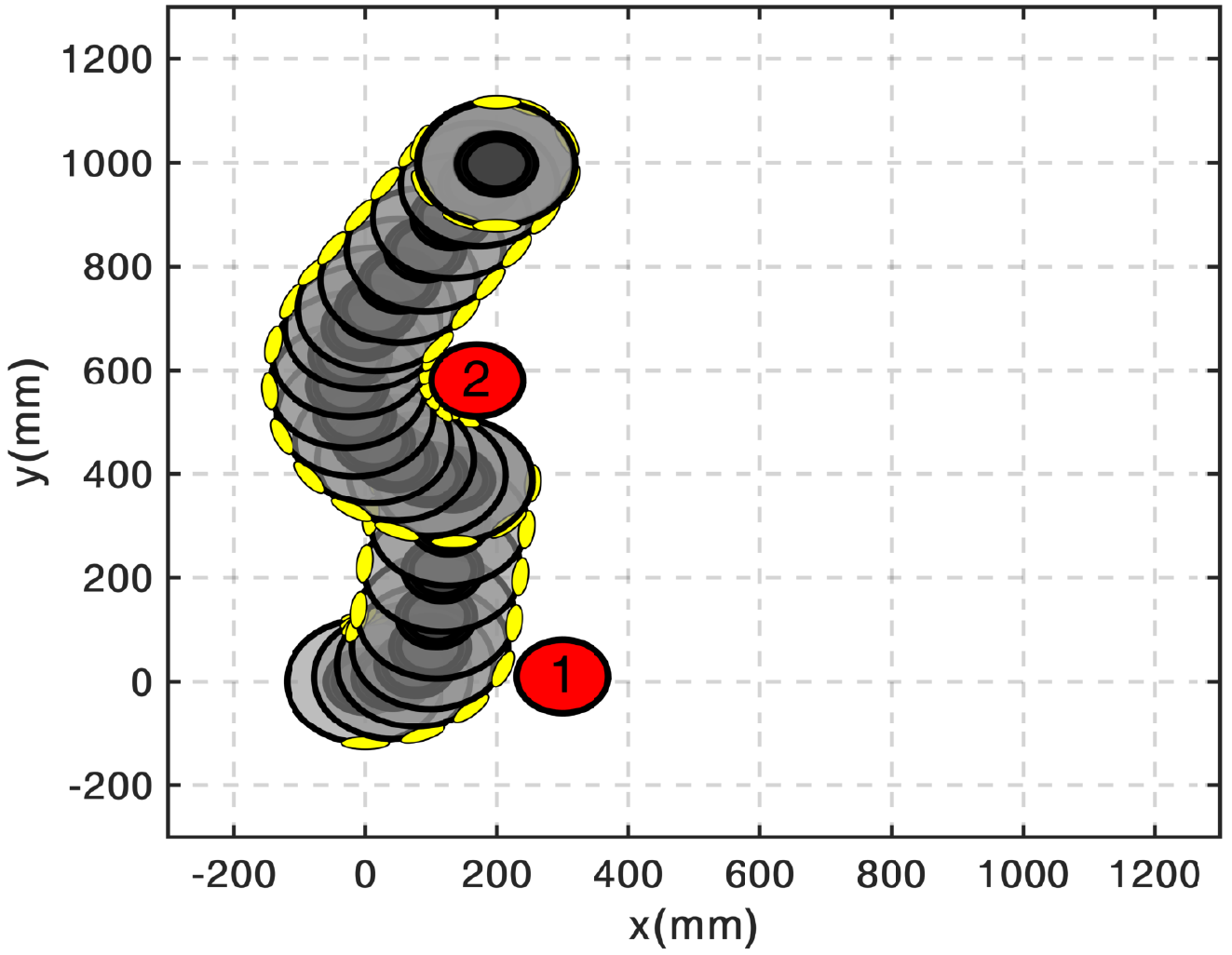}%
			\caption{}%
			\label{fig: 2l}%
		\end{subfigure}\hfill%
		\begin{subfigure}{0.5\columnwidth}
			\includegraphics[width=\linewidth, height = \linewidth]{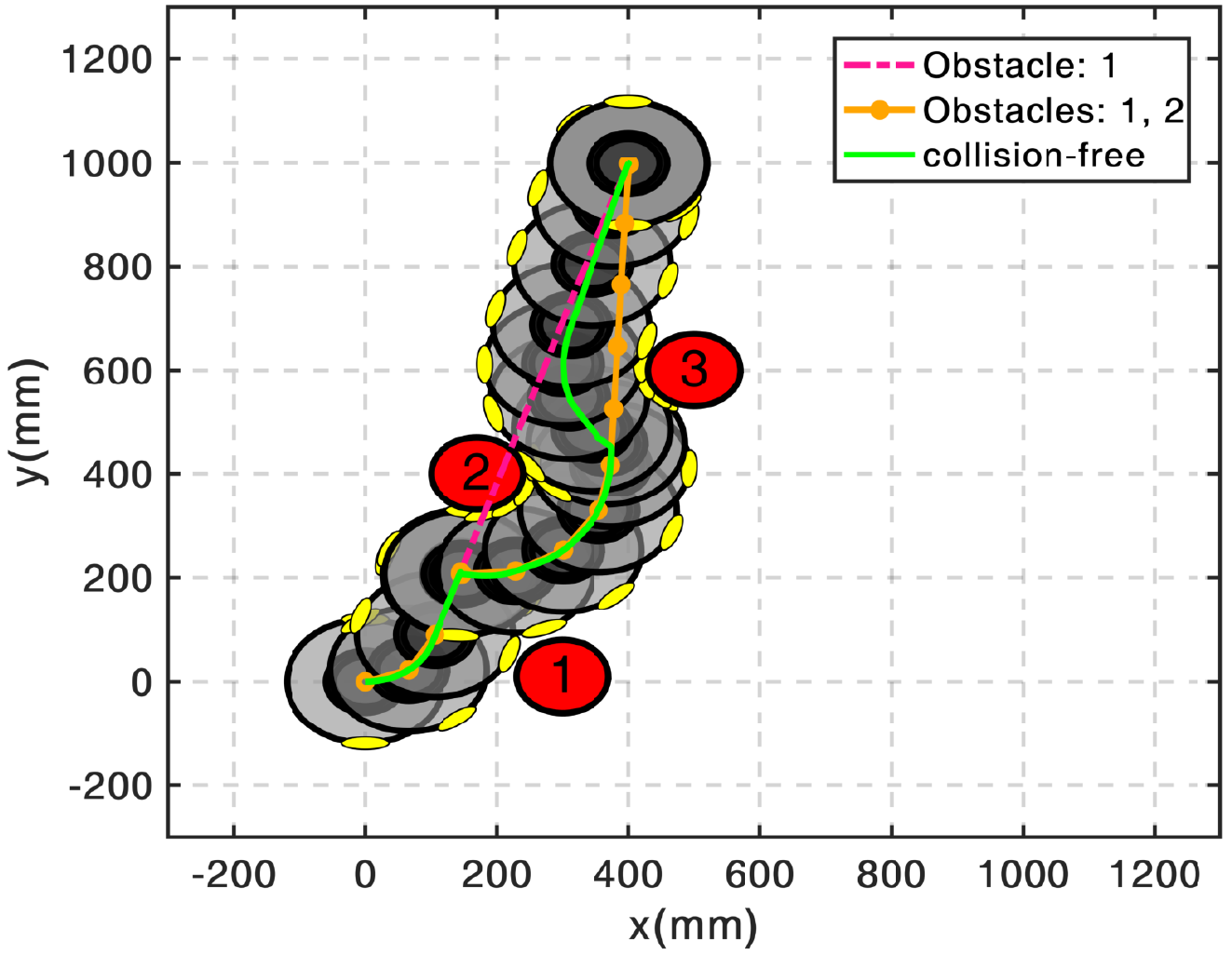}%
			\caption{}%
			\label{fig: 3}%
		\end{subfigure}\hfill%
		\caption{Trajectories of the differential drive in the presence of obstacles. Fig. \ref{fig: 3} compares the solution to the trajectories generated when considering only a) obstacle 1, and b) obstacles 1 and 2, both of which collide with the obstacles. Simulations run in real time in \MATLAB.}
		\label{fig: Obstacles}
	\end{figure*}
	
	Next, we illustrate the performance of the algorithm in the presence of obstacles. In all simulations, obstacles are considered in the objective in the form of a penalty function. In Fig. \ref{fig:: DifDrive_Diag_wO}, we test the system in the same task as Fig. \ref{Differential Drive} in the presence of two obstacles, indicated with red spheres. In comparison with Fig. \ref{fig7:e}, it is worth noting the two angle peaks, corresponding to each obstacle. After passing the obstacles, the system recovers the same angle profile. 
	
	Fig. \ref{fig: Obstacles} shows more complicated maneuvers using controls from \eqref{optcon}. The controller, without relying on a motion planner, is able to generate collision-free trajectories and safely converge to the target in all cases. These simulations also demonstrate another aspect of Algorithm \ref{algorithm}. The differential drive always drives up to an obstacle and then narrowly maneuvers around, instead of preparing a turn earlier on. This behavior is to be expected of needle variation actions that instantly reduce the cost.
	
	We next use the more complicated scenario of Fig. \ref{fig: 3} to evaluate the second-order expansion of the objective, shown in \eqref{eq:: DJ}, across the state-space (see Fig. \ref{fig:: DJ}). The first- and second-order mode insertion gradients are computed based on the second-order needle variation controls from \eqref{optcon}. States are sampled in the space for $x$ and $y$ in increments of 5~mm, with $\theta=0$ everywhere. These results correspond to $\lambda = 0.001$. The horizontal discontinuity that appears around $y=750$~mm is believed to be due to the effect of the penalty functions. As Fig. \ref{fig:: DJ} indicates, the change in cost is always negative, verifying Theorem \ref{Theorem}, even in the presence of obstacles. 
	\begin{figure}[]
		\begin{subfigure}[b]{0.492\linewidth}
			\centering
			\includegraphics[width=\linewidth]{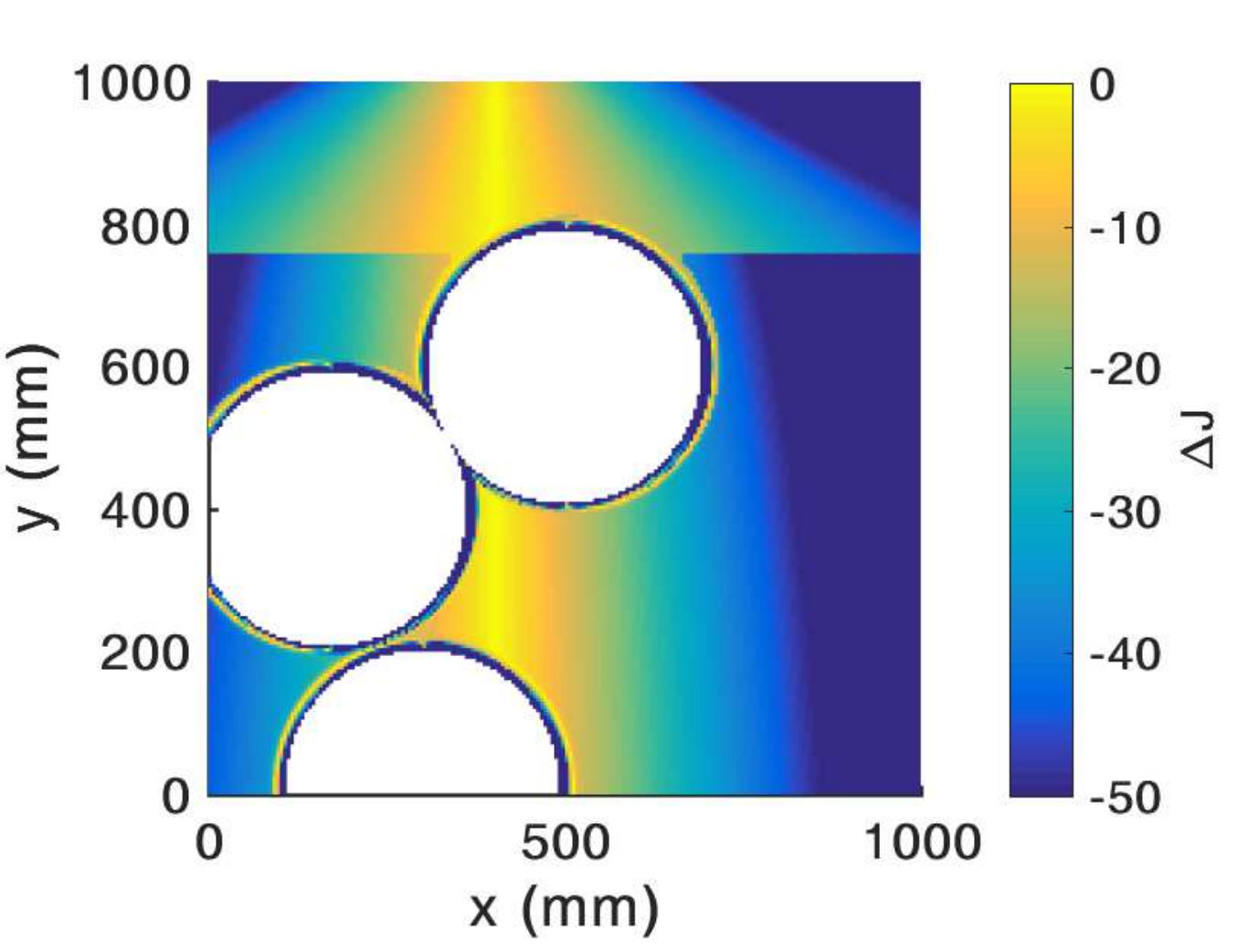} 
			\caption{} 
			\label{fig:: DJa}
		\end{subfigure} 
		\hfill%
		\begin{subfigure}[b]{0.492\linewidth}		\label{fig:: b}
			\centering
			\includegraphics[width=\linewidth]{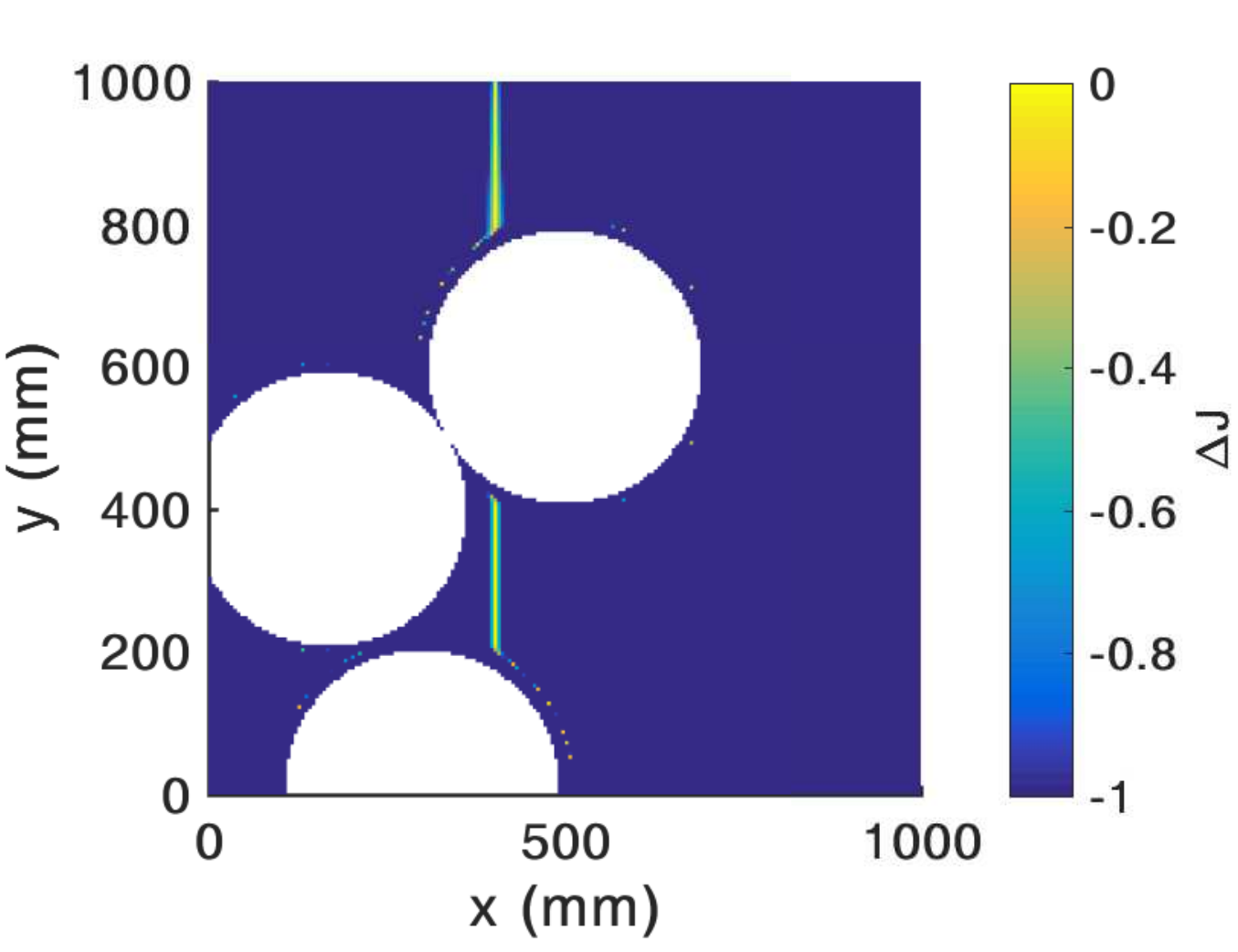} 
			\caption{} \label{fig: b} 
		\end{subfigure}
		\caption{Cost reduction $\Delta J$, modeled after \eqref{eq:: DJ}, for sampled $x$ and $y$ in the presence of obstacles, given second-order needle variation controls. The first- and second-order mode insertion gradients are evaluated with the controls from \eqref{optcon}. Figures \ref{fig:: DJa} and \ref{fig: b} are identical, but shown over a different range to illustrate that even when looking at small variations of the first-order mode insertion gradient, the second-order method is reliably negative. The bright vertical line in Fig. \ref{fig: b} is vertically aligned with the target located at [$400$~mm, $1000$~mm], where first-order solutions are singular. No data are sampled inside the white circles, as these indicate the infeasible occupied region.} \label{fig:: DJ}
	\end{figure}
	
	We further use a Monte Carlo simulation of 500 trials on the initial conditions to test convergence success (Fig. \ref{MC_Time}). We sample initial conditions $[x, y]$ from a uniform distribution in [-200~mm, 1000~mm] $\times$ [-400~mm, 800~mm], where $\theta_o = 0$ in all cases. All trials converged within 25 seconds. 
	\begin{figure}[]
		\begin{subfigure}[b]{0.48\linewidth}
			\centering
			\includegraphics[width=\linewidth,height = 0.15\textheight]{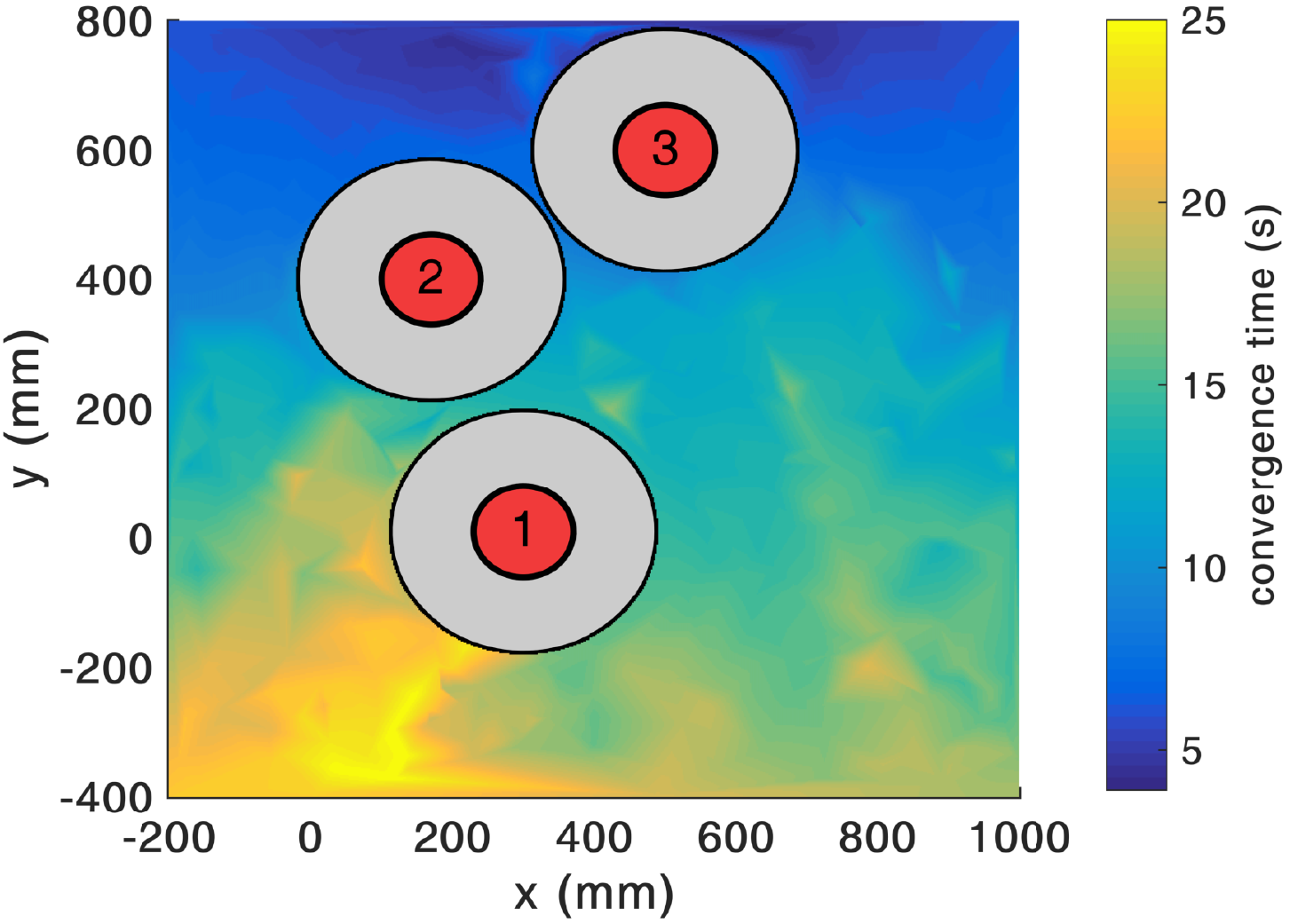} 
			\caption{}\label{Fig: 7a}
		\end{subfigure} 
		\hfill%
		\begin{subfigure}[b]{0.48\linewidth}
			\centering
			\includegraphics[width=\linewidth,height = 0.15\textheight]{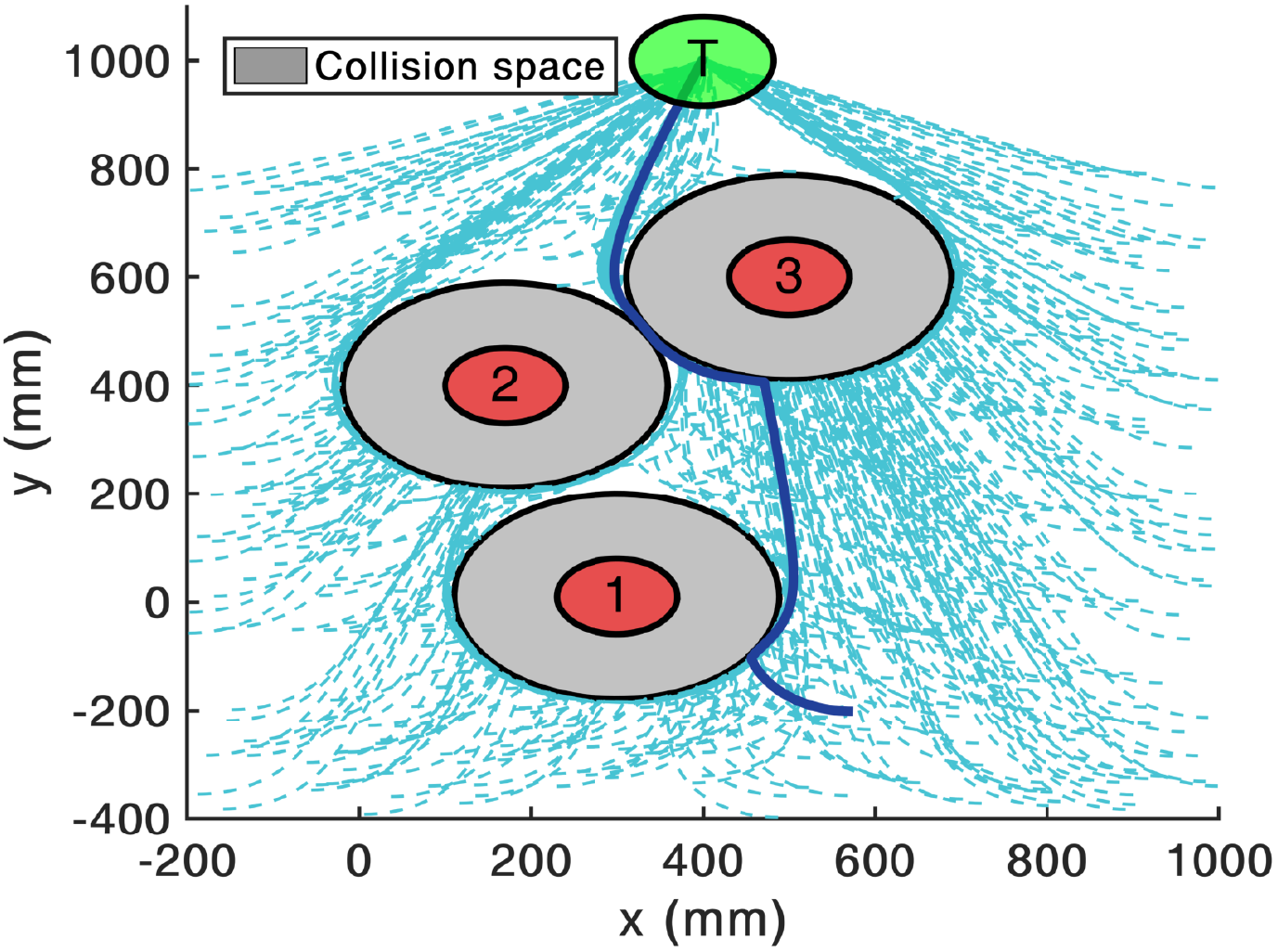} 
			\caption{}\label{Fig: 7b}
		\end{subfigure} 
		\caption{Performance of second-order needle variation actions in the presence of static obstacles. The controller is able to converge to the target for all 500 trials and avoid collisions. Fig. \ref{Fig: 7a} is an interpolated heat map that indicates the time to convergence as a function of initial position; Fig. \ref{Fig: 7b} shows the trajectories followed by the center of mass of the agent. The gray area indicates the collision space, taking into account the width of the differential drive. (the simulation runs in real time in \MATLAB). For visualization, watch Extension 1.}\label{MC_Time}
	\end{figure}
	
	Last, we test the differential drive in the presence of moving obstacles (Fig. \ref{fig: Moving}). The controller is again able to avoid collision and converge to the target, without relying on additional motion planning techniques. The feedback rate used is 20~Hz and the trajectory of the obstacles is known to the agent throughout the time horizon. In these simulations, $T = 0.3$~s.
	\begin{figure}[]
		\centering
		\includegraphics[width=\linewidth, height = 0.5\linewidth]{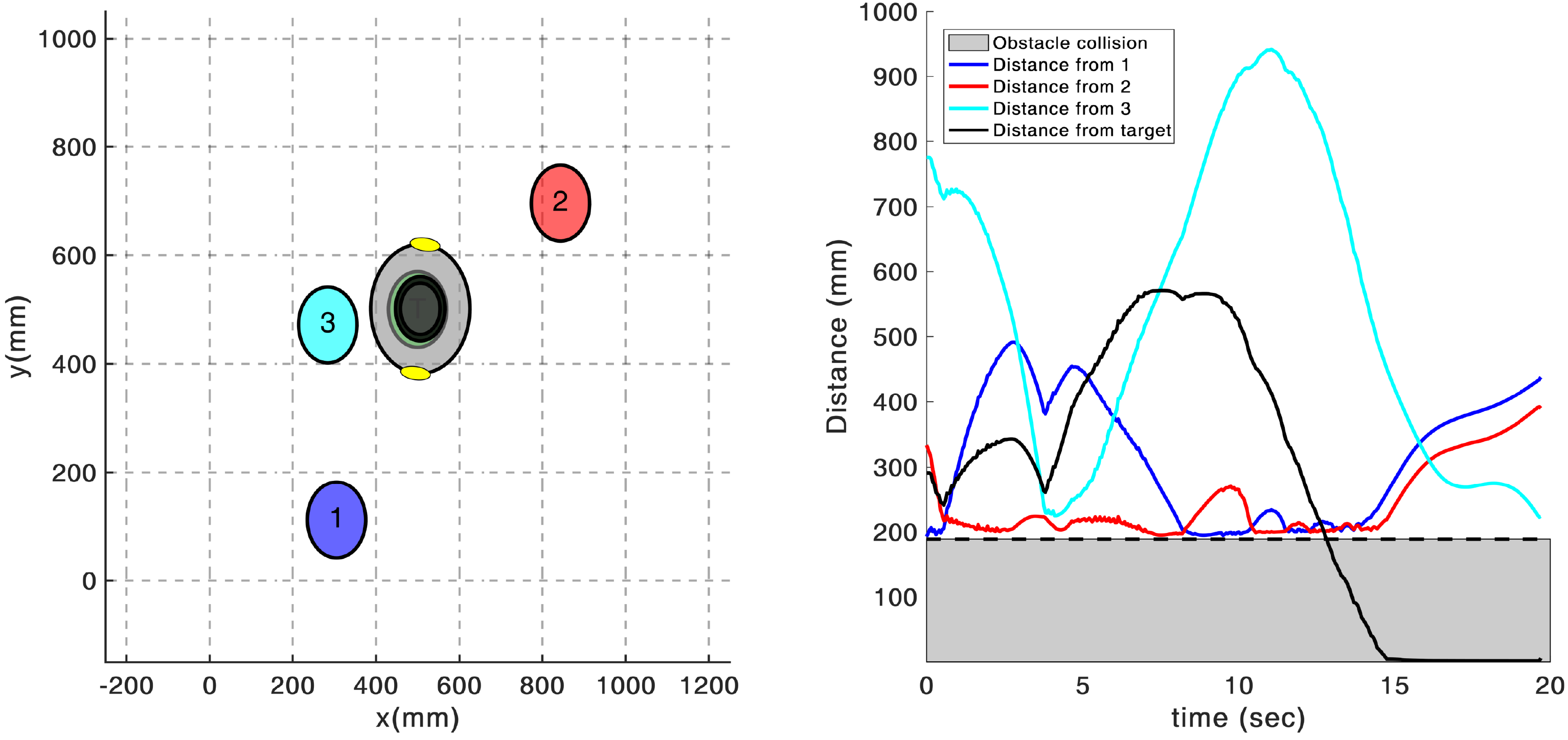}
		\caption{Performance of second-order needle variation actions in the presence of three moving obstacles. The left figure shows a snapshot of the simulation; the right figure plots the distance of the agent from each object and the target, where the gray area indicates the threshold minimum distance to avoid collision with the obstacles. The controller converges to the target in a collision-free manner (the simulation runs in real time in \MATLAB). For visualization, watch Extension 2.}\label{fig: Moving}
	\end{figure}
	\subsection{3D Kinematic Rigid Body}
	The underactuated kinematic rigid body is a three dimensional example of a system that is controllable with first-order Lie brackets. To avoid singularities in the state space, the orientation of the system is expressed in quaternions \citep{titterton2004strapdown, kuipers1999quaternions}. The states are $s~=~[x, y, z, q_0, q_1, q_2, q_3]$, where $b~=~[x, y, z]$ are the world-frame coordinates and $q~=~[q_0, q_1, q_2, q_3]$ are unit quaternions. Dynamics $f~=~[\dot{b},\dot{q}]^T$ are given by 
	
	\begin{gather}
	\dot{b}=R_qv, \label{dotb}
	\end{gather}
	%\\
	\begin{gather}
	\dot{q}=\frac{1}{2}\begin{bmatrix} -q_1 & -q_2 & -q_3 \\ ~~q_0& -q_3& ~~q_2 \\ ~~q_3& ~~q_0& -q_1\\ -q_2& ~~q_1& ~~q_0 \end{bmatrix}\omega, \label{dotq}
	\end{gather}
	where $v$ and $\omega$ are the body frame linear and angular velocities, respectively \citep{da2015benchmark}. The rotation matrix for quaternions is
	%\medmuskip = 0.5mu
	\begin{gather*}
	R_q=\begin{bmatrix} q_0^2 + q_1^2 - q_2^2 - q_3^2& 2(q_1q_2 - q_0q_3)& 2(q_1q_3+q_0q_2) \\
	2(q_1q_2 + q_0q_3)&  q_0^2 - q_1^2+q_2^2 - q_3^2& 2(q_2q_3 - q_0q_1)\\
	2(q_1q_3 - q_0q_2)&  2(q_2q_3 + q_0q_1)&  q_0^2-q1^2 -q_2^2+ q_3^2\end{bmatrix}.
	\end{gather*}
	
	The system is kinematic: $v~=~F$ and $\omega~=~T$, where $F~=~(F_1, F_2, F_3)$ and $T~=~(T_1, T_2, T_3)$ describe respectively the surge, sway, and heave input forces, and the roll, pitch, and yaw input torques. We render the rigid body underactuated by removing the sway and yaw control authorities ($F_2~=~T_3~=~0$). 
	
	The four control vectors span a four-dimensional space. First-order Lie bracket terms add two more dimensions to span the state space ($\mathbb{R}^6$) (the fact that there are seven states in the model of the system is an artifact of the quaternion representation; it does not affect controllability).
	
	The vectors $h_1, h_2$, and $[h_2, h_3]$ span $\mathbb{R}^3$ associated with the world frame coordinates $\dot{x}, \dot{y}$, and $\dot{z}$. Similarly, vectors $h_3, h_4$, and $[h_4, h_3]$ span $\mathbb{R}^3$ associated with the orientation. Thereby, control vectors and their first-order Lie brackets span the state space and, from Theorem \ref{Theorem}, optimal actions shown in \eqref{optcon} will always reduce the cost function \eqref{cost}. 
	
	To verify this prediction, we present the convergence success of the system on 3D motion. Using Monte Carlo sampling with uniform distribution, initial locations are randomly generated such that $x_0, y_0, z_0 \in [-50, 50]$~cm keeping only samples for which the initial distance from the origin exceeded 6~cm. We regard as a convergence success each trial in which the rigid body is within 6~cm to the origin by the end of 60 seconds at any orientation. Results are generated at a sampling rate of 20~Hz using $Q~=~0$, $P_1~=~\text{diag}(100,200,100,0,0,0,0)$, $T~=~1.0$~s, $\gamma~=~-50000$, $\lambda~=~10^{-3}$, $R~=~10^{-6}\,\text{diag}(1,1,100,100)$ for \eqref{optcon}, and $R~=~\text{diag}(10,10,1000,1000)$ for controls in \eqref{optimalu}. Controls are saturated at $\pm 10$~cm/s for the linear velocities and $\pm 10$~rad/s for the angular ones. Using 280 simulations over 24 seconds, 80\% satisfy the success criterion within 12 seconds and 100\% of trajectories satisfy the success criterion within 20 seconds. None of the simulations converge for the first-order needle variation controls, because they cannot displace the system in the $\hat{y}$ direction.
	
	\subsection{Underactuated Dynamic 3D Fish}
	We represent the three dimensional rigid body with states $s~=~[b,~q,~v,~\omega]^T$, where $b~=~[x, y, z] $ are the world-frame coordinates, $q~=~[q_0,q_1, q_2, q_3]$ are the quaternions that describe the world-frame orientation, and $v~=~[v_x, v_y, v_z]$ and $\omega~=~[\omega_x, \omega_y, \omega_z]$ are the body-frame linear and angular velocities. 
	The rigid body dynamics are given by $\dot{b}$ and $\dot{q}$ shown in \eqref{dotb} and \eqref{dotq} and
	\begin{gather*}
	M \dot{v}~=~Mv \times \omega + F, \\
	J \dot{\omega}~=~J\omega \times \omega + T,
	\end{gather*}
	where the (experimentally determined) effective mass and moment of inertia of the rigid body are given by $M~=~\text{diag}(6.04, 17.31, 8.39)$~g and $J~=~\text{diag}(1.57, 27.78, 54.11)$~g$\cdot$cm$^2$, respectively. This example is inspired by work in \citet{mamakoukas2016,MacIver} and the parameters used for the effective mass and moment of inertia of a rigid body correspond to measurements of a fish. The control inputs are $F_2~=~T_3~=~0$ and $F_3\ge0$.
	
	The control vectors only span a four-dimensional space and, since they are state-independent, their Lie brackets are zero vectors. However, the Lie brackets containing the drift vector field $g$ (that also appear in the MIH expression) add from one to four (depending on the states) independent vectors such that control solutions in \eqref{optcon} guarantee decrease of the cost function \eqref{cost} for a wider set of states than controls in \eqref{optimalu}. 
	\begin{figure}[]
		\centering
		\includegraphics[width=0.5\linewidth, height = 0.15\textheight]{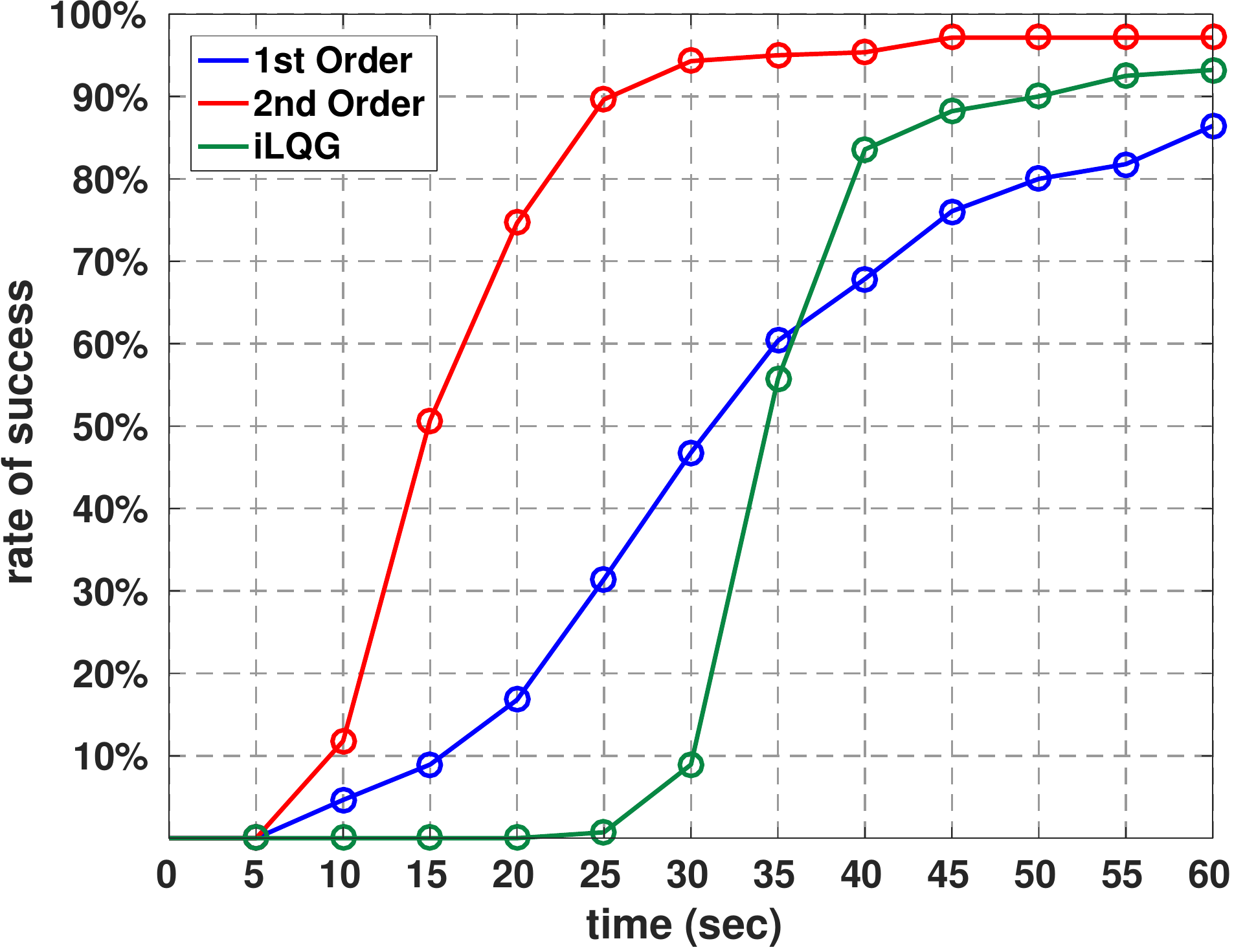}
		\caption{Convergence success rates of first- and second-order needle variation controls---\eqref{optimalu} and \eqref{optcon}, respectively---and iLQG for the underactuated \textit{dynamic} vehicle model. Simulation runs: 280} 
		\label{DynMC}
	\end{figure}
	\begin{figure}%
		\centering
		\includegraphics[width=0.98\linewidth,height = 0.22\textheight]{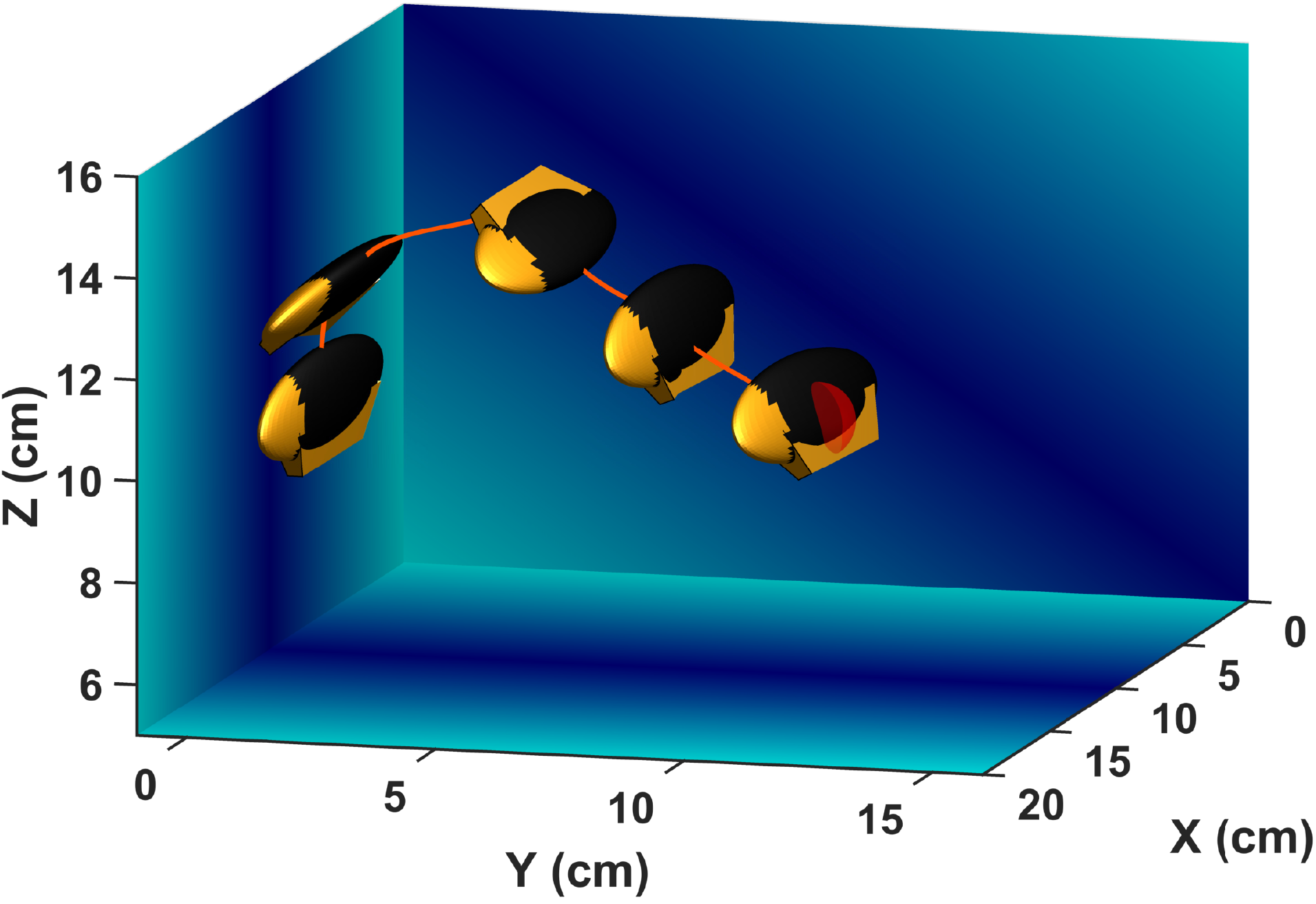}%
		\caption{Snapshots of a parallel displacement maneuver using an underactuated dynamic vehicle model with second-order controls given by \eqref{optcon}; first-order solutions \eqref{optimalu} are singular throughout the simulation. Animation of these results is available in Extension 3.}\label{SidewayMovement}%
	\end{figure}
	
	\begin{figure*}%
		\centering
		\begin{subfigure}{1.2\columnwidth}
			\includegraphics[width=0.9\columnwidth,height = 0.23\textheight]{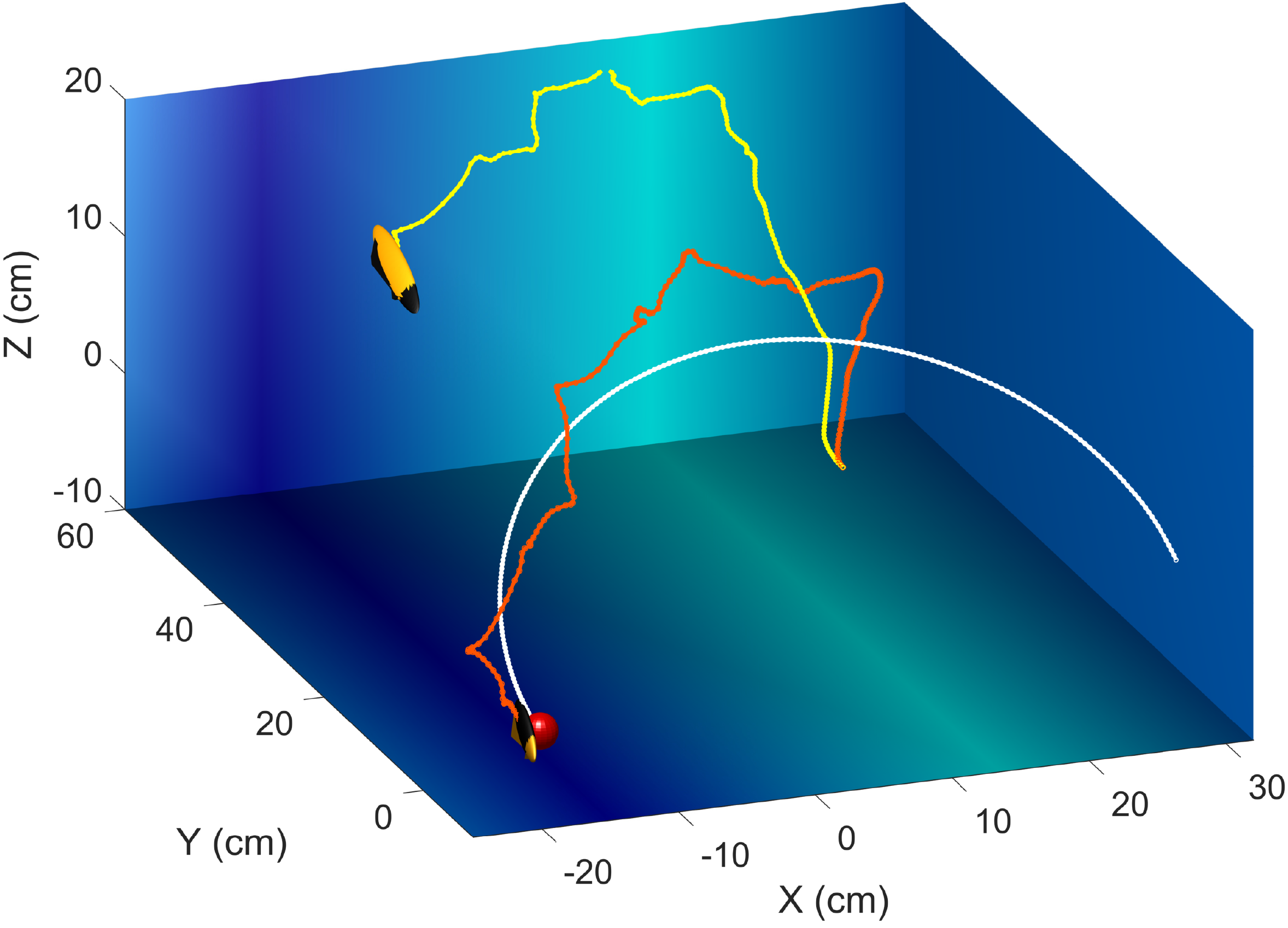}%
			\caption{}%
			\label{TrajTrack_drift}%
		\end{subfigure}\hfill%
		\begin{subfigure}{0.75\columnwidth}
			\includegraphics[width=0.9\columnwidth,height = 0.20\textheight]{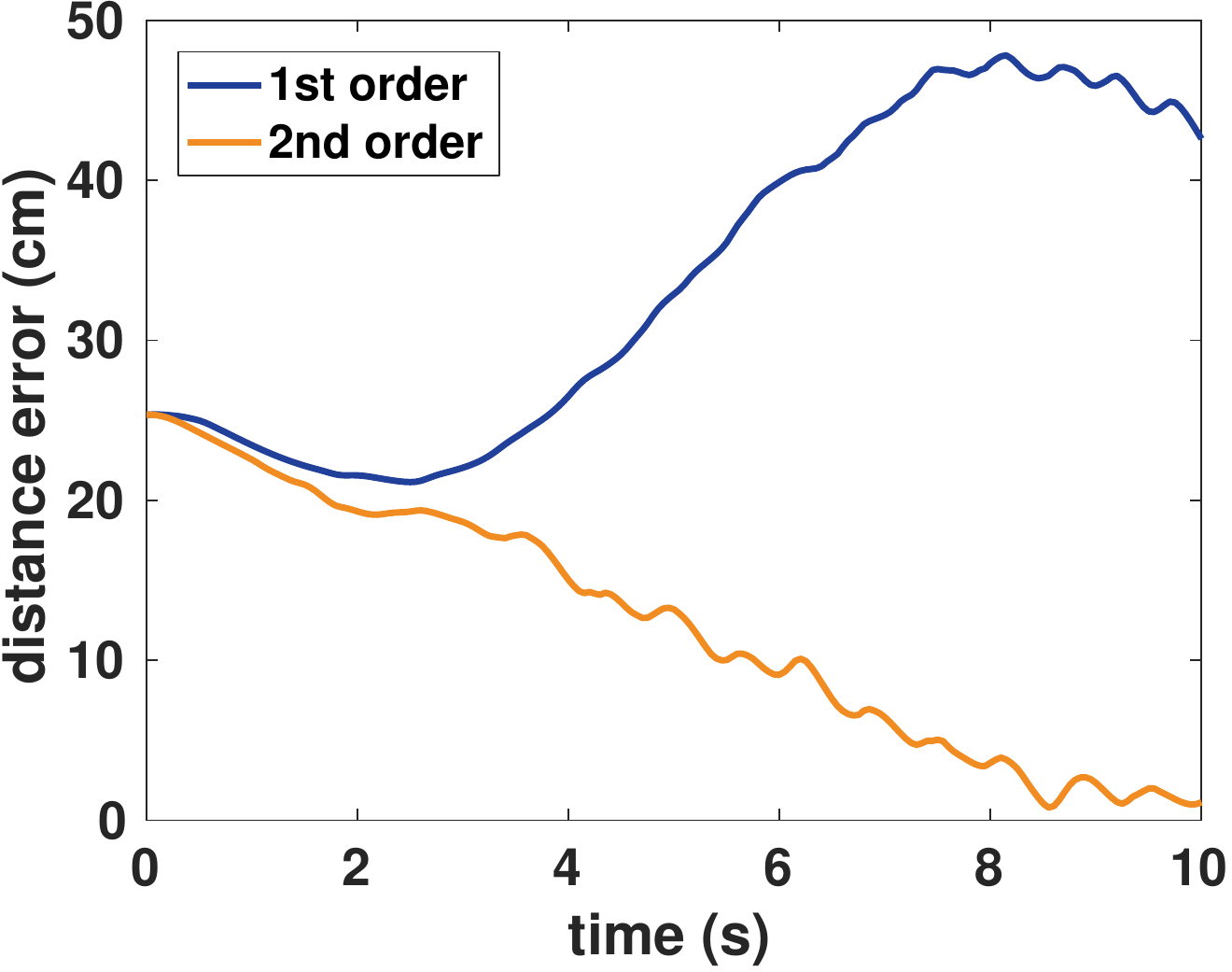}%
			\caption{}%
			\label{subfig:: errortrace}%
		\end{subfigure}\hfill%
		\caption{Tracking performance of the same system in the presence of +10~cm/s $\hat{y}$ fluid drift. The yellow system corresponds to first-order needle variation actions; the red one to second order. The target trajectory (red ball) is indicated with white traces over a 10-second simulation. Fig. \ref{subfig:: errortrace} shows the error distance as a function of time, clearly demonstrating the advantage of the second-order approach. Animation of these results is available in Extension 4.}
	\end{figure*}
	Simulation results based on Monte Carlo sampling are shown in Fig.~\ref{DynMC}. Initial coordinates $x_0, y_0, z_0$ are generated using a uniform distribution in $[-100, 100]$~cm, discarding samples for which the initial distance to the origin is less than 15~cm. Successful trials are the ones for which, within a simulation window of 60 seconds, the system approached within 5~cm to the origin (at any orientation) and whose magnitude of the linear velocities is, at the same time, less than 5~cm/s. Results are generated at a sampling rate of 20~Hz using \medmuskip=0mu
	\thinmuskip=0mu
	\thickmuskip=0mu$T~=~1.5$~s, $P_1~=~0$, $Q~=~\frac{1}{200}\text{diag}(10^3,10^3,10^3,0,0,0,0, 1, 1, 1, 2\cdot10^3,10^3,10^3)$, $\gamma~=~-5$, $R~=~\text{diag}(10^3,10^3,10^6,10^6)$ for \eqref{optimalu}, $R~=~\frac{1}{2}\,\text{diag}(10^{-6},10^{-6}, 10^{-3},10^{-3})$ for \eqref{optcon}, and $\lambda~=~10^{-4}$.\medmuskip=4mu
	\thinmuskip=3mu
	\thickmuskip=5mu ~The same control saturations ($F_1\in[-1, 1]$\,mN, $F_3\in[0,1]$\,mN, $T_1\in[-0.1, 0.1]$\,$\mu$N$\cdot$m, and $T_2\in[-0.1, 0.1]$\,$\mu$N$\cdot$m) are used for all simulations of the dynamic 3D fish. As shown in Fig. \ref{DynMC}, controls computed using second-order needle variations converge faster than those based on first-order needle variations, and 97\% of trials converge within 60 seconds. 
	
	Both methods converge over time to the desired location; as the dynamic model of the rigid body tumbles around and its orientation changes, possible descent directions of the cost function \eqref{cost} change and the control is able to push the system to the target. Controls for the first-order needle variation case \eqref{optimalu} are singular for a wider set of states than second-order needle variation controls \eqref{optcon} and, for this reason, they benefit more from tumbling.
	In a 3D parallel maneuver task, only second-order variation controls \eqref{optcon} manage to provide control solutions through successive heave and roll inputs, whereas controls based on first-order sensitivities \eqref{optimalu} fail (see Fig.~\ref{SidewayMovement}). 
	
	\indent As controls in \eqref{optcon} are non-singular for a wider subset of the configuration state space than the first-order solutions in \eqref{optimalu}, they will provide more actions over a period of time and keep the system closer to a time-varying target. Fig. \ref{TrajTrack_drift} demonstrates the superior trajectory tracking behavior of controls based on \eqref{optcon} in the presence of +10~cm/s $\hat{y}$ fluid drift. The trajectory of the target is given by $[x, y, z]$=\medmuskip=0mu
	\thinmuskip=0mu
	\thickmuskip=0mu[cos($\frac{\text{3t}}{10}$)\,(20+10cos($\frac{\text{t}}{5}$)), sin($\frac{\text{3t}}{10}$)\,(20\,+\,10\,cos($\frac{\text{t}}{5}$)), 10\,sin($\frac{\text{2t}}{5}$)], with $T=2$~s, $\lambda=0.01$, $Q~=~\text{diag}(10,10,10,0,0,0,0, 0, 0, 0, 1,1,0.1)$, $\gamma=-50000$, $P_1=\text{diag}(10,10,10,0,0,0,0, 0, 0, 0, 0, 0, 0)$, $R=\text{diag}(10^3,10^3,10^6,10^6)$ for \eqref{optimalu}, and $R=\text{diag}(10,10, 10^4,10^4)$ for \eqref{optcon}.\medmuskip=4mu
	\thinmuskip=3mu
	\thickmuskip=5mu~The simulation runs in real time using a C++ implementation on a laptop with Intel$^\circledR$ Core$^{\text{TM}}$ i5-6300HQ CPU @2.30GHz and 8GB RAM.
	
	The drift is known for both first- and second-order systems and accounted for in their dynamics in the form of $\dot{b}~=~\dot{b} + \dot{b}_\text{drift}$, where $\dot{b}_\text{drift}$ is a vector that points in the direction of the fluid flow. Simulation results demonstrate superior tracking of second-order needle variation controls that manage to stay with the target, whereas the system that corresponds to first-order needle variation controls is being drifted away by the flow. \\
	\indent We also tested convergence success of the +10~cm/s $\hat{y}$ drift case. Initial conditions $x,y,z$ are sampled uniformly from a $30$~cm radius from the origin, discarding samples for which the initial distance is less than $5$~cm. We consider samples to be successful if, during 60 seconds of simulation, they approached the origin within 5~cm. Out of 500 samples, controls based on second-order variations converged 91\% of the time (with average convergence time of 5.87~s), compared to 89\% for first-order actions (with average convergence time of 9.3~s). Simulation parameters are \medmuskip=0mu
	\thinmuskip=0mu
	\thickmuskip=0mu $T=1$~s, $\gamma=-25000$, $Q=10^{-3}\text{diag}(10,10,10,0,0,0,0, 1, 1, 1, 1,1,1)$, $P_1=\text{diag}(100,100,100,0,0,0,0, \frac{1}{2}, \frac{1}{2}, \frac{1}{2}, 0, 0, 0)$, $\lambda=10^{-4}$, $R=\text{diag}(0.1,0.1,10^4,10^4)$ for \eqref{optimalu}, and $R=\frac{1}{2}\text{diag}(10^{-5},10^{-5}, 1,1)$ for \eqref{optcon}.
	
	\section{Conclusion} 
	This paper presents a needle variation control synthesis method for nonlinearly controllable systems that can be expressed in control affine form. Control solutions provably exploit the nonlinear controllability of a system and, contrary to other nonlinear feedback schemes, have formal guarantees with respect to decreasing the objective. By optimally perturbing the system with needle actions, the proposed algorithm avoids the expensive iterative computation of controls over the entire horizon that other NMPC methods use and is able to run in real time for the systems considered here. 
	
	Simulation results on three underactuated systems compare first-order needle variation controls, second-order needle variation controls, and iLQG controls and demonstrate the superior convergence success rate of the proposed feedback synthesis. Because second-order needle variation actions are non-singular for a wider set of the state space than controls based on first-order sensitivity, they are also more suitable for time-evolving objectives, as demonstrated by the trajectory tracking examples in this paper. Second order needle variation controls are also calculated at little computational cost and preserve control effort. These traits, demonstrated in the simulation examples of this paper, render feedback synthesis based on second- and higher-order needle variation methods a promising alternative feedback scheme for underactuated and nonlinearly controllable systems. 
	
	In the future, we wish to generalize Theorems 1 and 2 to guarantee solutions for all controllable systems. Further, we are interested in showing that the second-order control responses \eqref{optcon} can be applied over the entire horizon, and not only as a needle action. This would create a second-order continuous feedback scheme like iLQG, but with the formal guarantees for controllable systems. To test the feasibility of the algorithm, we are planning on conducting underwater experiments with a fish-robot. 
	\theendnotes
	\section*{Acknowledgment}
	%\begin{acks}
	This work was supported by the Office of Naval Research under grant ONR N00014-14-1-0594 and by the National Science Foundation under grant NSF CMMI 1662233. Any opinions, findings, and conclusions or recommendations expressed here are those of the authors and do not necessarily reflect the views of the Office of Naval Research or the National Science Foundation.
\bibliographystyle{agsm}
\bibliography{references}
\appendix
\section{Index to multimedia Extensions}

The multimedia extensions to this article can be found online by following the hyperlinks from www.ijrr.org.
\begin{table}[h!]
	\centering
	\caption*{\textbf{Table of Multimedia Extensions}}
	\label{my-label}
	\resizebox{\columnwidth}{!}{%
		\begin{tabular}{lll}
			\hline
			Extension & Type & Description                 \\ \hline
			\href{https://www.dropbox.com/s/bql55gmqsh73gx1/StaticObstaclesConvergence.mp4?dl=0}{1}  & Video & Collision-free convergence \\ 
			& &in the presence of static obstacles \\ & &from random initial 		conditions \\
			\href{https://www.dropbox.com/s/80qi1hjetm864m0/MovingObstacles.mp4?dl=0}{2}  & Video & Collision-free convergence in the \\
			& &presence of moving obstacles     \\
			\href{https://www.dropbox.com/s/yk3q0b4xoytbum6/ParallelManeuver.mp4?dl=0}{3}    & Video & Parallel maneuver of 3D dynamic fish. \\ 
			\href{https://www.dropbox.com/s/0e1znvlw0qo7gdx/Tracking.mp4?dl=0}{4} & Video & Underactuated tracking of dynamic \\ & &3D Fish in the presence of drift.\\
			\hline
		\end{tabular}
	}
	
\end{table}
\section{Derivations and Proofs}
\subsection{Derivation of the Mode Insertion Hessian}
Consider switched systems that are defined by dynamics
\begin{gather}
\dot{x}(t) = f\big(x(t), \Lambda, t\big) = 
\begin{cases}
~f_1\big(x(t), t\big)\ \ ,& T_0 \le t < \tau_1 \\
~f_{2}\big(x(t), t\big)\ \ ,& \tau_1 \le t < \tau_1 + \lambda_1 \\
~f_1\big(x(t), t\big)\ \ ,& \tau_1 + \lambda_1 \le t < \tau_2 \\
~f_3\big(x(t), t\big)\ \ ,& \tau_2 \le t < \tau_2 + \lambda_2 \\
~f_1\big(x(t), t\big)\ \ ,& \tau_2 + \lambda_2 \le t < \tau_3 \\
~\vdotswithin{f_1\big(x(t), t\big)}\ \ \vdotswithin{,}&  \vdotswithin{\tau_1 \le t < \tau_1 }\\
~f_1 \big(x(t), t)\ \ ,& \tau_{L} + \lambda_{L} \le t < T_F
\end{cases}\\
\text{subject to: } x(T_0) = x_0 \notag,
\end{gather}
where $T_0$ is the initial time, $T_F$ is the final time, $x_0:\mathbb{R} \mapsto \mathbb{R}^{N}$ is the initial state, $L$ is the number of injected dynamics, $\tau = \{\tau_1, \tau_2, \cdots, \tau_L \} \in \mathbb{R}^{L}$ is a monotonically increasing set of switching times, $\Lambda = \{\lambda_1, \lambda_2, \cdots, \lambda_L\} \in \mathbb{R}^{L}$ is a set of control durations, $f_1 : \mathbb{R} \mapsto \mathbb{R}^{N}$ specify the default dynamics, and $f_i :\mathbb{R} \mapsto \mathbb{R}^{N}$ describe the $i^\text{th}$ injected dynamics. The switching times are assumed to be fixed. 

We note that, while the system dynamics $f$ depend on the set of control durations $\Lambda$, the same is not true for the individual switch mode dynamics $f_i$. In addition, we refer to individual elements in the set $\Lambda$ as either $\Lambda_i$ or $\lambda_i$. We measure the performance of the system with the integral of the Lagrangian, $\ell(\cdot)$, and a terminal cost $m(\cdot)$, similar to \eqref{cost}.
\begin{equation}
J(\Lambda) = \int\limits_{T_0}^{T_F} \ell(x(t))\,\mathrm{d}t + m(x(T_F)).
\end{equation} 

We can re-write the dynamics using step functions, such that
\begin{equation*}
\begin{split}
f(x(t), \Lambda, t)\ =&\big[1(t-T_0) - 1(t-\tau_1^-)\big]f_1\big(x(t), t\big) \\
&+ \big[1(t-\tau_1^+) - 1\big(t-(\tau_1+\lambda_1)^-\big)\big]f_2\big(x(t), t\big)\\ 
&+\big[1\big(t-(\tau_1+\lambda_1)^+\big) - 1(t-\tau_2^-)\big]f_1\big(x(t), t\big) \\
&+ \ldots \\
&+ \big[1(t-(\tau_{L-1}+\lambda_{L-1})^+) - 1(t-\tau_L^-)\big]f_1\big(x(t), t\big) \\
&+ \big[1(t-\tau_L^+) - 1\big(t-(\tau_L+\lambda_L)^-\big)\big]f_L\big(x(t), t\big) \\
&+\big[1\big(t-(\tau_L+\lambda_L)^+\big) - 1(t-T_F)\big]f_1\big(x(t), t\big).
\end{split}
\end{equation*}
The superscripts $+$ and $-$ help avoid ambiguity at the switching times. We use directional derivatives to differentiate, where the slot derivative $D_i\mathcal{F}(\cdot, \cdot)$ is the partial derivative of a function $\mathcal{F}$ with respect to its $i^\text{th}$ argument. That is, $D_x\mathcal{F}$ indicates the derivative of $\mathcal{F}$ with respect to $x$ and is the same as $\frac{\partial \mathcal{F}}{\partial x}$. Further, $D_{i,j}\mathcal{F}(\cdot, \cdot)$ denotes the second partial of a function $\mathcal{F}$ with respect to its first and second arguments. The step-function form of the dynamics makes it straightforward to compute the partial derivatives $D_1f\big(x(t), \Lambda, t\big)$ and $D_2f\big(x(t), \Lambda, t\big)$. Specifically,
\begin{equation*}
\begin{split}
D_1f\big(x(t), \Lambda, t\big)\ =&\big[1(t-T_0) - 1(t-\tau_1^-)\big]D_1f_1\big(x(t), t\big) \\
&+ \big[1(t-\tau_1^+) - 1\big(t-(\tau_1+\lambda_1)^-\big)\big]D_1f_2\big(x(t), t\big)\\ 
&+\big[1\big(t-(\tau_1+\lambda_1)^+\big) - 1(t-\tau_2^-)\big]D_1f_1\big(x(t), t\big) \\
&+ \ldots \\
&+ \big[1(t-(\tau_{L-1}+\lambda_{L-1})^+) - 1(t-\tau_L^-)\big]D_1f_1\big(x(t), t\big) \\
&+ \big[1(t-\tau_L^+) - 1\big(t-(\tau_L+\lambda_L)^-\big)\big]D_1f_L\big(x(t), t\big) \\
&+\big[1\big(t-(\tau_L+\lambda_L)^+\big) - 1(t-T_F)\big]D_1f_1\big(x(t), t\big)
\end{split}
\end{equation*}
and
\begin{align}\label{B}
D_2f\big(x(t), \Lambda, t\big) =& \Big\{\delta\big(t-(\tau_k+\lambda_k)^-\big)f_k(x(t), t) \notag\\
&- \delta\big(t-(\tau_k+\lambda_k)^+\big)f_1(x(t), t)\Big\}_{k=1}^L,
\end{align}
where $\delta(\cdot)$ is the Dirac delta functions. Using variational calculus, 
\begin{align}\label{eq::DJvar}
DJ(\Lambda)\cdot\theta = \int\limits_{T_0}^{T_F} D\ell\big(x(r)\big) \cdot z(r) \mathrm{d}r + Dm\big(x(T_F)\big) \cdot z(T_F)
\end{align}
where $z(t) :\mathbb{R} \mapsto \mathbb{R}^{N \times 1}$ is the variation of $x(t)$ due to the variation, $\theta$, in $\Lambda$. Also, 

\begin{gather*}
\dot{z}(t) = \frac{\partial}{\partial t} \frac{\partial x(t)}{\partial \Lambda}= \frac{\partial}{\partial \Lambda}\frac{\partial x(t)}{\partial t} = \frac{\partial}{\partial \Lambda}\dot{x}(t) = \frac{\partial}{\partial \Lambda}f\big(x(t), \Lambda, t\big) \\
= D_1f\big(x(t), \Lambda, t\big) \cdot z(t) + D_2f\big(x(t), \Lambda, t\big)\cdot \theta,~~~~\\
\text{subject to: } z(0) = \frac{\partial}{\partial \Lambda}x(0) = 0.
\end{gather*}

Define $A(t) \triangleq D_1f\big(x(t), \Lambda, t\big)$ and $B(t) \triangleq D_2f\big(x(t), \Lambda, t\big)$. Therefore, $\dot{z}$ is 
\begin{gather*}
\dot{z}(t) = A(t) \cdot z(t) + B(t) \cdot \theta \\
\text{subject to: } z(0) = 0.
\end{gather*}
The above differential equation has the solution 
\begin{gather}\label{eq::zeta}
z(t) = \int\limits_{T_0}^t \Phi(t,r) B(r) \cdot \theta \mathrm{d}r,
\end{gather}
where $\Phi(t,r)$ is the state transition matrix corresponding to $A(t)$. Substituting $z(\cdot)$ in $DJ(\Lambda) \cdot \theta$,
\begin{align*}
DJ(\Lambda) \cdot \theta =& \int\limits_{T_0}^{T_F} D\ell\big(x(r)\big) \int\limits_{T_0}^r \Phi(r,s) B(s) \cdot \theta\,\mathrm{d}s\,\mathrm{d}r 
\\&
+ Dm\big(x(T_F)\big) \int\limits_{T_0}^{T_F} \Phi(T_F,s) B(s) \cdot \theta\,\mathrm{d}s.
\end{align*}
Switching the order of integration in the first-integral,
\begin{align*}
DJ(\Lambda) \cdot \theta =& \int\limits_{T_0}^{T_F}\limits\int\limits_s^{T_F} D\ell\big(x(r)\big) \Phi(r,s) B(s) \cdot \theta\,\mathrm{d}r\,\mathrm{d}s 
\\&
+ \int\limits_{T_0}^{T_F} Dm\big(x(T_F)\big)\Phi(T_F,s) B(s) \cdot \theta\,\mathrm{d}s 
\\
=& \int\limits_{T_0}^{T_F}\limits\lunderbrace{\Big[\int\limits_s^{T_F} D\ell\big(x(r)\big) \Phi(r,s) \,\mathrm{d}r}
\\&
\runderbrace{+ \int\limits_{T_0}^{T_F} Dm\big(x(T_F)\big)\Phi(T_F,s)\Big]}_{\rho(s)^T}B(s)\,\mathrm{d}s \cdot \theta.
\end{align*}
Then, 
\begin{align*}
\rho(t) = \Phi(T_F, t)^TDm(x(T_F))^T + \int\limits_t^{T_F}\Phi(r, t)^TD\ell(x(r))^T \mathrm{d}r,
\end{align*} 
where $\rho(t)$ is the solution to the backwards differential equation:
\begin{equation}\label{eq::Ch2rho}
\begin{gathered}
\dot{\rho}(t) = -D_1f(x(t), \Lambda, t)^T\rho - D\ell(x(t)) \\
\text{subject to: } \rho(T_F) = Dm(x(T_F))^T.
\end{gathered}
\end{equation}

To avoid confusion, it is important to explain the notation used in the remaining of the derivation. We use $\theta$ to represent first-order and $\eta$ to represent second-order perturbations to control durations $\Lambda$, respectively. We use subscripts to refer to the perturbation acting on a specific (single) duration. For example, $\theta_i$ indicates the perturbation that takes place with respect to the $i^\text{th}$ control duration, $\lambda_i$. We index the order of perturbations with a superscript, so that $\theta^j$ indicates the $j^\text{th}$ (in order) perturbation to the set of control durations $\Lambda$. Therefore, $\theta_1^2$ indicates the perturbation that acts on the first control duration $\lambda_1$ and that is associated with the second perturbation. 

We write
\begin{align*}
\frac{\partial}{\partial \Lambda}(DJ(\Lambda) \cdot \theta^1) =& 
\frac{\partial}{\partial \Lambda}(\int\limits_{T_0}^{T_N} D\ell(x(r)) \cdot z^1(r) \mathrm{d}r \\&+ Dm(x(T_F)) \cdot z^1(T_F)),
\end{align*}
and, using the product rule, we compute 
\begin{gather}\label{22}
\begin{align}
D^2J(\Lambda) \cdot (\theta^1, \theta^2) + DJ(\Lambda) \cdot \eta =& \int\limits_{T_0}^{T_F}D^2\ell(x(r)) \cdot \big(z^1(r), z^2(r)\big)
\notag\\&
+ D\ell(x(r)) \cdot \zeta(r) \mathrm{d}r 
\notag\\&
+D^2m\big(x(T_F)\big)\cdot(z^1(T_F), z^2(T_F)) 
\notag\\&
+ Dm\big(x(T_F)\cdot \zeta(T_F),
\end{align}\end{gather}
where $\theta^1$ and $\theta^2$ are two first-order variations of $\Lambda$, $\eta$ is a second-order variation of $\Lambda$ and $\zeta(t)$ is the second-order variation of $x(t)$. Parameter $\dot{\zeta}(t)$ is found by taking the second-order switching time derivative of $\dot{x}$(t):
\begin{align*}
\dot{\zeta}(t) =& \frac{\partial ^2}{\partial \Lambda^2}\dot{x}(t) = \frac{\partial}{\partial \Lambda} \dot{z}^1(t) \\
=& \frac{\partial}{\partial \Lambda}\big(D_1f(x(t), \Lambda, t\big) \cdot z^1(t) + D_2f\big(x(t), \Lambda, t)\cdot \theta^1\big) \notag
\end{align*}
such that
\begin{equation}\label{23}
\begin{align*}
\dot{\zeta}(t) =&A(t) \cdot \zeta(t) + B(t) \cdot \eta \notag \\
&+ 
\begingroup\def\arraystretch{0.7}\setlength\arraycolsep{3.5pt}\begin{pmatrix}{z^1(t)}^T &{\theta^1}^T
\end{pmatrix}\endgroup
\begingroup\def\arraystretch{0.7}\begin{pmatrix} 
D_1^2f\big(x(t), \Lambda, t\big) & D_{1,2}f(x(t), \Lambda, t\big) \\
D_{2,1} f\big(x(t), \Lambda, t \big) &D_2^2f\big(x(t), \Lambda,t\big)
\end{pmatrix}\endgroup
\begingroup\def\arraystretch{0.7}\begin{pmatrix}
z^2(t) \\ \theta^2
\end{pmatrix}\endgroup
\end{align*}\\
\text{subject to}: \zeta(0) = \frac{\partial ^2}{\partial \Lambda^2} x(0) = 0. \notag
\end{equation}
Define 
\begin{align*}
C(t) \triangleq \begingroup\def\arraystretch{0.7}\begin{pmatrix}
D_1^2f\big(x(t), \Lambda, t\big) & D_{1,2}f(x(t), \Lambda, t\big) \\
D_{2,1} f\big(x(t), \Lambda, t \big) &D_2^2f\big(x(t), \Lambda,t\big)
\end{pmatrix}\endgroup,
\end{align*}
and notice that $\dot{\zeta}(t)$ is linear with respect to $\zeta(t)$ and therefore $\dot{\zeta}(t)$ has solution
\begin{align*}
\zeta(t) = \int\limits_{T_0}^{t} \Phi(t,r) \Big[B(r) \cdot \eta + \big(z^1(r)^T {\theta^1}^T\big)C(r)
\begingroup\def\arraystretch{0.6}\small\begin{pmatrix}
z^2(r) \\ \theta^2
\end{pmatrix}\endgroup
\Big] \mathrm{d} t.
\end{align*}
Substituting $\zeta(t)$ into \eqref{22}, we see that
\begin{gather*}
\begin{align*}
&D^2J(\Lambda) \cdot (\theta^1, \theta^2) + DJ(\Lambda)\cdot\eta = \int\limits_{T_0}^{T_F} \Big[{z^1(r)}^TD^2\ell(x{(r)})z^2(r) 
\\&+ D\ell(x(r))
\int\limits_{T_0}^{r} \Phi(r, s) 
\Big[B(s) \cdot \eta + (z^1(s)^T {\theta^1}^T) C(s) \begingroup\def\arraystretch{0.6}\small\begin{pmatrix}
z^2(s) \\ \theta^2
\end{pmatrix}\endgroup
\Big] \mathrm{d}s
\Big] \mathrm{d}r 
\\
&+ z^1(T_F)^T D^2m(x(T_F))z^2(T_F) + Dm(x(T_F))\cdot \\&\int\limits_{T_0}^{T_F} \Phi(T_F, s) \Big[B(s) \cdot \eta + (z^1(s)^T {\theta^1}^T) C(s)
\begingroup\def\arraystretch{0.6}\small\begin{pmatrix}
z^2(s) \\ \theta^2
\end{pmatrix}\endgroup
\Big]\mathrm{d}s.
\end{align*}\end{gather*}
Note that $DJ(\Lambda) \cdot \eta$ equals $\int\limits_{T_0}^{T_F} D\ell(x(r)) \int\limits_{T_0}^{r} \Phi(r,s) B(s) \cdot \eta\,\mathrm{d}s\mathrm{d} r 
+ Dm(x(T_F))\int\limits_{T_0}^{T_F}\Phi(T_F, s)B(s) \cdot \eta\,\mathrm{d} s$, which is clear from \eqref{eq::DJvar} and \eqref{eq::zeta}. 
Therefore, this leaves
\begin{align*}
D^2J(\Lambda) \cdot (\theta^1, \theta^2) =& \int\limits_{T_0}^{T_F} \Big[z^1(r)D^2\ell(x(r)) z^2(r) + D\ell(x(r))
\\&
\int\limits_{T_0}^\tau \Phi(r,s) (z^1(s)^T {\theta^1}^T) C(s)
\begingroup\def\arraystretch{0.6}\small\begin{pmatrix}
z^2(s) \\ \theta^2
\end{pmatrix}\endgroup
\mathrm{d}s \Big]\mathrm{d} r \\
&+z^1(T_F)^TD^2m(x(T_F))z^2(T_F)\\
&+Dm(x(T_F))\int\limits_{T_0}^{T_F}\Phi(T_F, s) \big(z^1(s)^T {\theta^1}^T\big)
\\&
C(s)\begingroup\def\arraystretch{0.6}\small\begin{pmatrix}
z^2(s) \\ \theta^2
\end{pmatrix}\endgroup
\mathrm{d}s.
\end{align*}
Split the integral over $\mathrm{d}r$, move $D\ell\big(x(r)\big)$ and $Dm(x(T_F))$ into their respective integrals and switch the order of integration of the double integral:
\begin{align*}
=& \int\limits_{T_0}^{T_F}z^1(r)^TD^2\ell(x(r))z^2(r)\,\mathrm{d}r 
\\&
+ \int\limits_{T_0}^{T_F}\int\limits_s^{T_F}D\ell(x(r)) \Phi(r,s) \big(z^1(s)^T {\theta^1}^T\big)C(s)
\begingroup\def\arraystretch{0.6}\small\begin{pmatrix}
z^2(s) \\ \theta^2
\end{pmatrix}\endgroup
\mathrm{d}r \mathrm{d}s \\&
+ z^1(T_F)^T D^2m(x(T_F)) z^2(T_F) 
\\&
+ \int\limits_{T_0}^{T_F}Dm\big(x(T_F)\big)\Phi(T_F,s)\big(z^1(s)^T, {\theta^1}^T\big) C(s) 
\begingroup\def\arraystretch{0.6}\small\begin{pmatrix}
z^2(s) \\ \theta^2
\end{pmatrix}\endgroup
\mathrm{d}s.
\end{align*}
We combine the integrals over $\mathrm{d}s$, and notice that $\rho(r)^T$, in \eqref{eq::Ch2rho}, enters the equations. Furthermore, we switch the dummy variable $s$ to $r$ and put everything back under one integral:
\begin{gather*}
\begin{align*}
=&\int\limits_{T_0}^{T_F}z^1(r)^TD^2\ell(x(r))z^2(r) + \rho(r)^T \big(z^1(r)^T {\theta^1}^T\big) C(r) 
\begingroup\def\arraystretch{0.6}\small\begin{pmatrix}
z^2(r)\\ \theta^2
\end{pmatrix}\endgroup
\mathrm{d}r \\&
+ z^1(T_F)^TD^2m(x(T_F))z^2(T_F).
\end{align*}\end{gather*}
Expand $C(\cdot)$ back out,
\begin{align*}
=&\int\limits_{T_0}^{T_F}z^1(r)^TD^2\ell(x(r))z^2(r) 
\\&
+ \rho(r)^T\big[z^1(r)^TD_1^2f(x(r), \Lambda, r)z^2(r)\big] \\
&+\rho(r)^T \Big[z^1(r)^T D_{1,2}f(x(r), \Lambda, r)\theta^2\big] \\&
+ \rho(r)^T\big[{\theta^1}^T D_{2,1}f(x(r), \Lambda, r) z^2(r)\big] \\
&+\rho(r)^T \Big[{\theta^1}^T	D_2^2f(x(r), \Lambda, r) \theta^2\big] \mathrm{d}r 
\\&+ z^1(T_F)^TD^2m(x(T_F))z^2(T_F).
\end{align*}
Switching to index notation, where $\rho_k(\cdot)$ is the $k^{\text{th}}$ component of $\rho(\cdot)$ and $f^k(\cdot, \cdot, \cdot)$ is the $k^{\text{th}}$ component of $f(\cdot, \cdot, \cdot)$, 
\begin{align*}
=&\int\limits_{T_0}^{T_F}z^1(r)^TD^2\ell(x(r)) z^2(r) 
\\&
+ z^1(r)^T \sum_{k=1}^n \rho_k (r) D_1^2f^k(x(r, \Lambda, r) z^2(r) \\
&+z^1(r)^T\sum_{k=1}^n \rho_k(r) D_{1,2}f^k(x(r), \Lambda, r) \theta^2 
\\&
+ {\theta^1}^T \sum_{k=1}^n\rho_k(r)D_{2,1}f^k(x(r), \Lambda, r) z^2(r)\\
&+{\theta^1}^T \sum_{k=1}^n\rho_k(r)D_2^2f^k(x(r), \Lambda, r) \theta^2 \mathrm{d}r 
\\&
+ z^1(T_F)^TD^2(x(T_F))z^2(T_F).
\end{align*}
Rearrange the terms allows $D^2J(\Lambda) \cdot (\theta^1, \theta^2)$ to be partitioned into the summation of parts $P_1, P_2, P_3$ given by
\begin{align*}
P_1 =&\int\limits_{T_0}^{T_F} z^1(r)^T\Big[D^2\ell(x(r)) + \sum_{k=1}^n \rho_k(r)D_1^2f^k\big(x(r), \Lambda, r\big)\big]
\\&
z^2(r)\mathrm{d}r + z^1(T_F)^TD^2m\big(x(T_F)\big)z^2(T_F), \\
P_2 =& \int\limits_{T_0}^{T_F}{\theta^2}^T \sum_{k=1}^n \rho_k(r) D_{2,1}f^k\big(x(r), \Lambda, r)z^1(r) + \\&
\theta^1 \sum_{k=1}^n \rho_k(r)D_{2,1}f^k\big(x(r), \Lambda, r\big)z^2(r)\mathrm{d}r, 
\\ 
P_3 =& \int\limits_{T_0}^{T_F} {\theta^1}^T \sum_{k=1}^n \rho_k(r)D_2^2f^k\big(x(r), \Lambda, r\big) \theta^2 \mathrm{d}r
\end{align*}
Looking at $P_1$ first, let 
\begin{align*}
g(r) = D^2\ell\big(x(r)\big) + \sum_{k=1}^n\rho_k(r)D_1^2f^k\big(x(r), \Lambda, r).
\end{align*}
Then,
\begin{align*}
P_1 = \int\limits_{T_0}^{T_F}z^1(r)^Tg(r)z^2(r) \mathrm{d}r + z^1(T_F)D^2m\big(x(T_F)\big)z^2(T_F).
\end{align*}
Substituting \eqref{eq::zeta} for $z^{\cdot}(\cdot)$, results in
\begin{align*}
=&\int\limits_{T_0}^{T_F}\big[\int\limits_{T_0}^r\Phi(r,s)B(s)\theta^1 \mathrm{d}s\big]^Tg(r) \int\limits_{T_0}^r\Phi(r, \omega)B(\omega)\theta^2\mathrm{d}w \mathrm{d}r\\
&+\big[\int\limits_{T_0}^{T_F}\Phi(T_F, s)B(s)\theta^1\mathrm{d}s\big]^T D^2m\big(x(T_F)\big)\int\limits_{T_0}^{T_F}\Phi(T_F, w)
\\&
B(w)\theta^2\mathrm{d}w.
\end{align*}
The integrals may be specified as follows:
\begin{align*}
=&\int\limits_{T_0}^{T_F}\int\limits_{T_0}^r \int\limits_{T_0}^r {\theta^1}^TB(s)^T\Phi(r,s)^Tg(r)\Phi(r,w)B(w)\theta^2 \mathrm{d}s \mathrm{d}w \mathrm{d}r\\
&+\int\limits_{T_0}^{T_F}\int\limits_{T_0}^{T_F}\theta^1B(s)^T\Phi(T_F, s)^TD^2m\big(x(T_F)\big)\Phi(T_F, w)B(w)\theta^2 
\\&
\mathrm{d}s\mathrm{d}w.
\end{align*}
Note that the volume of the triple integral is given by $r = max(s,w)$. Therefore, the order of integration may be switched to:
\begin{align*}
=&\int\limits_{T_0}^{T_F}\int\limits_{T_0}^{T_F}\int\limits_{max(s,w)}^{T_F} \!\!\!\!\!{\theta^1}^TB(s)^T \Phi(r, s)^Tg(r)\Phi(r,w)B(w)\theta^2 \mathrm{d}r
\\&
\mathrm{d}s\mathrm{d}w \\
&+\int\limits_{T_0}^{T_F}\int\limits_{T_0}^{T_F} \theta^1B(s)^T\Phi(T_F,s)^TD^2m\big(x(T_F)\big)\Phi(T_F, w)B(w)\theta^2
\\&
\mathrm{d}s\mathrm{d}w.
\end{align*}
We combine the double integral with the triple integral and rearrange the terms so that only the ones depending on $r$ are inside the internal integral:
\begin{align*}
=&\int\limits_{T_0}^{T_F} \int\limits_{T_0}^{T_F} B(s)^T \Big[\int\limits_{\mathclap {max(s,w)}}^T \Phi(r,s)^T g(r) \Phi(r,w) \mathrm{d}r 
\\&
+ \Phi(T_F,s)^TD^2m\big(x(T_F)\big)\Phi(T_F, w)\Big]B(w)\mathrm{d}s \mathrm{d}w \\&\cdot (\theta^1, \theta^2).
\end{align*}
Let
\begin{gather*}
\begin{align*}
\Omega(t) = \int\limits_t^{T_F}\Phi(r,t)g(r)\Phi(r,t)\mathrm{d}r + \Phi(T_F, t)^TD^2m\big(x(T_F)\big)\Phi(T_F, t)
\end{align*}\end{gather*}
where $\Omega(t) \in \mathbb{R}^{n\times n}$ is the integral curve to the following differential equation
\begin{align*}
\dot{\Omega}(t) =& - A(t)^T\Omega(t) - \Omega(t)A(t) - g(t)\\
=&- A(t)^T\Omega(t) - \Omega(t)A(t) - D^2\ell\big(x(t)\big) 
\\&
- \sum_{k=1}^n\rho_k(t)D_1^2f^k\big(x(t), \Lambda, t\big)\\
&\text{subject to}: \Omega(T_F) = D^2m\big(x(T_F)\big). 
\end{align*}
Then, depending on the relationship between $s$ and $w$, $P_1$ becomes
\begin{gather*}
\begin{align*}
P_1 = 
\begin{cases}
\int\limits_{T_0}^{T_F}\int\limits_{T_0}^{T_F}B(s)^T\Omega(s)\Phi(s,w)B(w) \mathrm{d}s \mathrm{d}w \cdot (\theta^1, \theta^2) & s > w \\
\int\limits_{T_0}^{T_F}\int\limits_{T_0}^{T_F}B(s)^T\Phi(w,s)^T\Omega(w)B(w)\mathrm{d}s \mathrm{d}w \cdot (\theta^1, \theta^2) & s < w \\
\int\limits_{T_0}^{T_F}\int\limits_{T_0}^{T_F}B(s)^T\Omega(s)B(w)\mathrm{d}s \mathrm{d}w \cdot (\theta^1, \theta^2) & s = w, 
\end{cases}
\end{align*}
\end{gather*}
$P_1$ is a scalar and equal to its transpose, therefore
\begin{align*}
P_1 \stackrel{s<w}{=} \int\limits_{T_0}^{T_F}\int\limits_{T_0}^{T_F}B(w)^T\Omega(w)\Phi(w,s)B(s)\mathrm{d}s \mathrm{d}w \cdot (\theta^2, \theta^1)
\end{align*}
Use $i$ and $j$ to index $\theta^1$ and $\theta^2$ respectively, where $\theta$ indicate the variations of $\Lambda$ and $i, j = 1, \ldots, L$. Integrating the $\delta$-functions in $B(s)$ and $B(w)$ will pick out times $s= \tau_i + \lambda_i$ and $w = \tau_j+\lambda_j$ such that $P_{1_{ij}}$ is given by
\begin{align*}
P_{1_{ij}}\begin{cases}\stackrel{i>j}{=}&
\Big[f_i\big(x(\tau_i + \lambda_i),t \big)-f_1\big(x(\tau_i+\lambda_i), t\big)\Big]^T 
\\&
\Omega(\tau_i+\lambda_i)\Phi(\tau_i+\lambda_i, \tau_j+\lambda_j)\\
&\Big[
f_j\big(x(\tau_j + \lambda_j),t \big)-f_1\big(x(\tau_j+\lambda_j), t\big)\Big] \cdot (\theta_i^1, \theta_j^2)\\
\stackrel{i<j}{=}&
\Big[f_j\big(x(\tau_j + \lambda_j),t \big)-f_1\big(x(\tau_j+\lambda_j), t\big)\Big]^T
\\&
\Omega(\tau_j+\lambda_j)\Phi(\tau_j+\lambda_j, \tau_i+\lambda_i)\\
&\Big[f_i\big(x(\tau_i + \lambda_i),t \big)-f_1\big(x(\tau_i+\lambda_i), t\big)\Big] \cdot (\theta_i^1, \theta_j^2)\\
\stackrel{i=j}{=}&
\Big[f_i\big(x(\tau_i+\lambda_i), t\big) - f_1\big(x(\tau_i+\lambda_i), t\big)\Big]^T\Omega(\tau_i+\lambda_i)\\
&\Big[f_i\big(x(\tau_i+\lambda_i), t\big) - f_1\big(x(\tau_i+\lambda_i), t\big)\Big] \cdot (\theta_i^1, \theta_i^2)
\end{cases}\end{align*}
Taking the limit $\Lambda\rightarrow 0$, $\lim_{\Lambda\rightarrow 0} P_{1_{ij}}$ becomes
\begin{align*}
\lim_{\Lambda\rightarrow 0} P_{1_{ij}}\begin{cases}\stackrel{i>j}{=}&
\left[f_i\left(x(\tau_i),t \right)-f_1\left(x(\tau_i), t\right)\right]^T \Omega(\tau_i) \Phi(\tau_i, \tau_j)\\&
\left[
f_j\left(x(\tau_j),t \right)-f_1\left(x(\tau_j), t\right)\right] \cdot (\theta_i^1, \theta_j^2) \\
\stackrel{i<j}{=}&\left[f_j\left(x(\tau_j),t \right)-f_1\left(x(\tau_j), t\right)\right]^T\Omega(\tau_j)\Phi(\tau_j, \tau_i)
\\&
\left[f_i\left(x(\tau_i),t \right)-f_1\left(x(\tau_i), t\right)\right] \cdot (\theta_i^1, \theta_j^2)\\
\stackrel{i=j}{=}&\left[f_i\left(x(\tau_i), t\right) - f_1\left(x(\tau_i), t\right)\right]\Omega(\tau_i)
\\&
\left[f_i\left(x(\tau_i), t\right) - f_1\left(x(\tau_i), t\right)\right] \cdot (\theta_i^1, \theta_i^2).
\end{cases}
\end{align*}

Now consider $P_2$, where
\begin{align*}
D_{2,1}f^k\left(x(t), \Lambda, t\right) =& \left\{\delta\left(t-(\tau_a+\lambda_a)^-\right)D_1f_a^k\left(x(t), t\right)^T 
\right.\\&\left.
- \delta\left(t-(\tau_a+\lambda_a)^+\right)D_1f_1^k\left(x(t), t\right)^T\right\}_{a=1}^L.
\end{align*}
Choose again the $i^\text{th}$ index of $\theta^1$ and the $j^\text{th}$ index of $\theta^2$, where $i,j = 1, \ldots, L$. This corresponds to the $i^\text{th}$ index of $z^1(t)$ and the $j^\text{th}$ index of $z^2(t)$, where the $k^\text{th}$ index of $z^\cdot(\cdot)$ is
\begin{align}\label{zk}
z_k^\cdot(t) =& \int\limits_{T_0}^t\Phi(t, r) \Big[\delta\big(r - (\tau_k+\lambda_k)^-\big)f_k\big(x(r), r\big) 
\notag\\&
- \delta\big(r - (\tau_k + \lambda_k)^+\big)f_1\big(x(r), r\big)\Big] \mathrm{d}r \theta_k^\cdot,
\end{align}
Specifying these indexes allows us to revert back to matrix representation for $\rho(\cdot)$ and $f(\cdot, \cdot, \cdot)$. Thus, 
\begin{align*}
P_{2_{ij}} =& \int\limits_{T_0}^{T_F} \theta_j^2\rho(r)^T\left[\delta\left(r-(\tau_j+\lambda_j)^-\right)D_1f_j\left(x(r), r\right) 
\right.\\&\left.
- \delta\left(r-(\tau_j+\lambda_j)^+\right)D_1f_1\left(x(r), r\right)\right] z_i^1(r) \\
&+\theta_i^1\rho(r)^T\left[\delta\left(r - (\tau_i+\lambda_i)^-\right)D_1f_i\left(x(r), r\right) 
\right.\\&\left.
- \delta\left(r - (\tau_i+\lambda_i)^+\right)D_1f_1\left(x(r), r\right)\right]z_j^2(r)\mathrm{d}r.
\end{align*}
Integrating over the $\delta$-functions picks out the times for which the $\delta$-functions' arguments are zero:
\begin{gather*}
\begin{align*}
=&\theta_j^2\rho\big((\tau_j+\lambda_j)^-\big)D_1f_j\big(x(\tau_j+\lambda_j)^-, (\tau_j+\lambda_j)^- \big) z_i^1\big((\tau_j+\lambda_j)^-\big) \\
&- \theta_j^2\rho\big((\tau_j+\lambda_j)^+\big)D_1f_1\big(x(\tau_j+\lambda_j)^+, (\tau_j+\lambda_j)^+\big)z_i^1\big((\tau_j+\lambda_j)^+\big)\\
&+ \theta_i^1\rho\big((\tau_i+\lambda_i)^-\big)^TD_1f_i\big(x(\tau_i+\lambda_i)^-, (\tau_i+\lambda_i)^-\big)z_j^2\big((\tau_i+\lambda_i)^-\big)\\
&- \theta_i^1\rho\big((\tau_i+\lambda_i)^+\big)^TD_1f_1\big(x(\tau_i+\lambda_i)^+, (\tau_i+\lambda_i)^+\big)z_j^2\big((\tau_i+\lambda_i)^+\big).
\end{align*}
\end{gather*}

The indexes $i$ and $j$ relate in three possible ways: $i < j$, $ i = j$, or $i > j$. The first and last case are the same, which is based on the fact that partial derivatives (with respect to perturbations indexed with $i$ and $j$) commute. 

	Recall that $\tau$ is a set of monotonically increasing times. Therefore, if $i > j$, then $\tau_i + \lambda_i > \tau_j + \lambda_j$. Given \eqref{zk}, $z_k^\cdot (t)$ is non-zero only after time $t=(\tau_k + \lambda_k)^-$. In other words, the state does not change up until the first injected control and so the state perturbation $z$ will be zero for all times prior to the perturbations to the control duration. Consequently, because $t_j + \lambda_j < t_i + \lambda_i$, given that $i > j$ and so $z_i^1(\tau_j+\lambda_j) = 0$. Therefore, the first two terms of $P_{2_{ij}}$ are zero and
\begin{align*}
\stackrel{i>j}{=}&\theta_i^1 \rho\big((\tau_i+\lambda_i)^-\big)^TD_1f_i\big(x(\tau_i+\lambda_i)^-, (\tau_i+\lambda_i)^-\big)
\\&
\Phi\big((\tau_i+\lambda_i)^-,(\tau_j+\lambda_j)^-\big) \Big[f_j\big(x(\tau_j+\lambda_j)^-, (\tau_j+\lambda_j)^-\big) 
\\&
- f_1\big(x(\tau_j+\lambda_j)^+, (\tau_j+\lambda_j)^+\big)\Big]\theta_j^2 -\theta_i^1 \rho\big((\tau_i+\lambda_i)^+\big)^T
\\&
D_1f_1\big(x(\tau_i+\lambda_i)^+, (\tau_i+\lambda_i)^+\big)\Phi\big((\tau_i+\lambda_i)^-,(\tau_j+\lambda_j)^-\big)\\
&\Big[f_j\big(x(\tau_j+\lambda_j)^-, (\tau_j+\lambda_j)^-\big) 
\\&
- f_1\big(x(\tau_j+\lambda_j)^+, (\tau_j+\lambda_j)^+\big)\Big]\theta_j^2.
\end{align*}
Omitting the no longer useful superscripts $+$ and $-$, we see that
\begin{align*}
\stackrel{i>j}{=}&\theta_i^1 \rho\big(\tau_i+\lambda_i\big)^TD_1f_i\big(x(\tau_i+\lambda_i), \tau_i+\lambda_i\big)\Phi\big(\tau_i+\lambda_i,\tau_j+\lambda_j\big)\\
&\Big[f_j\big(x(\tau_j+\lambda_j), \tau_j+\lambda_j\big) - f_1\big(x(\tau_j+\lambda_j), \tau_j+\lambda_j\big)\Big]\theta_j^2\\
&-\theta_i^1 \rho\big(\tau_i+\lambda_i\big)^TD_1f_1\big(x(\tau_i+\lambda_i), \tau_i+\lambda_i\big)\Phi\big(\tau_i+\lambda_i,\tau_j+\lambda_j\big)\\
&\Big[f_j\big(x(\tau_j+\lambda_j), \tau_j+\lambda_j\big) - f_1\big(x(\tau_j+\lambda_j), \tau_j+\lambda_j\big)\Big]\theta_j^2
\end{align*}\begin{align*}
\stackrel{i>j}{=}&\theta_i^1 \rho\big(\tau_i+\lambda_i\big)^T
\\&
\Big[D_1f_i\big(x(\tau_i+\lambda_i), \tau_i+\lambda_i\big) -D_1f_1\big(x(\tau_i+\lambda_i), \tau_i+\lambda_i\big) \Big]
\\&
\Phi\big(\tau_i+\lambda_i,\tau_j+\lambda_j\big)\\
&\Big[f_j\big(x(\tau_j+\lambda_j), \tau_j+\lambda_j\big) - f_1\big(x(\tau_j+\lambda_j), \tau_j+\lambda_j\big)\Big]\theta_j^2.
\end{align*}
Taking the limit $\Lambda\rightarrow 0$, 
\begin{align*}
\lim_{\Lambda\rightarrow 0} P_{2}\stackrel{i>j}{=}&\rho\left(\tau_i\right)^T\left[D_1f_i\left(x(\tau_i), \tau_i\right) -D_1f_1\left(x(\tau_i), \tau_i\right) \right]\\&\Phi\left(\tau_i,\tau_j\right) \left[f_j\left(x(\tau_j), \tau_j\right) - f_1\left(x(\tau_j), \tau_j\right)\right]\cdot (\theta_i^1, \theta_j^2).
\end{align*}

Now consider the $i = j$ case. Because $i = j$, the perturbations $\theta^1$ and $\theta^2$ are equivalent---in the sense that they are both perturbations to the same control duration $\lambda_i$---and therefore $z^1(t)$ and $z^2(t)$ are also equivalent. So,
\begin{align*}
P_{2_{ij}} \stackrel{i=j}{=}& 2\theta_i^2\rho\big((\tau_i+\lambda_i)^-\big)^TD_1f_i\big(x(\tau_i+\lambda_i)^-, (\tau_i+\lambda_i)^- \big)
\\&
z_i^1\big((\tau_i+\lambda_i)^-\big) - 2\theta_i^2\rho\big((\tau_i+\lambda_i)^+\big)^T
\\&
D_1f_1\big(x(\tau_i+\lambda_i)^+, (\tau_i+\lambda_i)^+\big)z_i^1\big((\tau_i+\lambda_i)^+\big)
\end{align*}
Substituting in for $z_i^1(\cdot)$,
\begin{align*}
=&2\theta_i^2\rho\left((\tau_i+\lambda_i)^-\right)^TD_1f_i\left(x(\tau_i+\lambda_i)^-, (\tau_i+\lambda_i)^- \right) 
\\&
\int\limits_{T_0}^\mathclap{(\tau_i+\lambda_i)^-}\Phi\left((\tau_i+\lambda_i)^-, r\right)\cdot 
\left[\delta\left(r - (\tau_i+\lambda_i)^-\right)f_i\left(x(r), r\right) 
\right.\\&\left.
- \delta\left(r - (\tau_i + \lambda_i)^+\right)f_1\left(x(r), r\right)\right]\mathrm{d}r \theta_i^1
\\& - 2\theta_i^2\rho\left((\tau_i+\lambda_i)^+\right)^TD_1f_1\left(x(\tau_i+\lambda_i)^+, (\tau_i+\lambda_i)^+\right) 
\\&
\int\limits_{T_0}^\mathclap{(\tau_i+\lambda_i)^+}\Phi\left((\tau_i+\lambda_i)^+, r\right)\cdot\left[\delta\left(r - (\tau_i+\lambda_i)^-\right)f_i\left(x(r), r\right) 
\right.\\&\left.
- \delta\left(r - (\tau_i + \lambda_i)^+\right)f_1\left(x(r), r\right)\right] \mathrm{d}r \theta_i^1.
\end{align*}
This time the arguments of the $\delta$-functions are zero at the upper bounds of their integrals. So, 
\begin{align*}
=& 2\theta_i^2\rho\left((\tau_i+\lambda_i)^-\right)^TD_1f_i\left(x(\tau_i+\lambda_i)^-, (\tau_i+\lambda_i)^- \right) \cdot\\
&\frac{1}{2} \Phi\left((\tau_i+\lambda_i)^-, (\tau_i+\lambda_i)^-\right)f_i\left(x(\tau_i+\lambda_i)^-, (\tau_i+\lambda_i)^-\right)\theta_i^1 \\
&- 2\theta_i^2\rho\left((\tau_i+\lambda_i)^+\right)^TD_1f_1\left(x(\tau_i+\lambda_i)^+, (\tau_i+\lambda_i)^+\right)\cdot\\
&\left[\Phi\left((\tau_i+\lambda_i)^+, (\tau_i+\lambda_i)^-\right)
f_i\left(x(\tau_i+\lambda_i)^-, (\tau_i+\lambda_i)^-\right) \right.\\
&\left.- \Phi\left((\tau_i+\lambda_i)^+, (\tau_i+\lambda_i)^+\right)\frac{1}{2}f_1\left(x(\tau_i+\lambda_i)^+, (\tau_i+\lambda_i)^+\right)\right]\theta_i^1.
\end{align*}
Recall that $\Phi((\tau_i+\lambda_i)^-, (\tau_i+\lambda_i)^-) = \Phi((\tau_i+\lambda_i)^+, (\tau_i+\lambda_i)^+) = I$ and that $\Phi(\cdot, \cdot)$ is a continuous operator, such that $\Phi((\tau_i+\lambda_i)^+,(\tau_i+\lambda_i)^-) = I$. Therefore, omitting the no longer helpful $-$ and $+$ super-scripts,
\begin{gather*} 
\begin{align*}
=& \rho(\tau_i+\lambda_i)^T\Big[D_1f_i\big(x(\tau_i+\lambda_i), \tau_i+\lambda_i \big)f_i\big(x(\tau_i+\lambda_i), \tau_i+\lambda_i\big)\\
& -2 D_1f_1\big(x(\tau_i+\lambda_i), (\tau_i+\lambda_i)\big)f_i\big(x(\tau_i+\lambda_i), (\tau_i+\lambda_i)\big) \\
&+ D_1f_1\big(x(\tau_i+\lambda_i), (\tau_i+\lambda_i)\big)f_1\big(x(\tau_i+\lambda_i), (\tau_i+\lambda_i)\big)\Big]\cdot(\theta_i^1,\theta_i^2).
\end{align*}
\end{gather*}
Taking the limit $\Lambda\rightarrow 0$, 
\begin{align*}
\lim_{\Lambda\rightarrow 0} P_{2}\stackrel{i=j}{=}&
\rho(\tau_i)^T\left[D_1f_i\left(x(\tau_i), \tau_i \right)f_i\left(x(\tau_i), \tau_i\right) 
\right.\\&\left.
-2 D_1f_1\left(x(\tau_i), (\tau_i)\right)f_i\left(x(\tau_i), (\tau_i)\right) \right.
\\ &\left.+ D_1f_1\left(x(\tau_i), (\tau_i)\right)f_1\left(x(\tau_i), (\tau_i)\right)\right]\cdot(\theta_i^1,\theta_i^2).
\end{align*}

Finally, $P_3$. Start with $D_2^2f^k\big(x(r), \lambda, r\big)$. For $i=j$,
\begin{align*}
D_2^2f^k\left(x(r), \lambda, r\right)_{ij} =&
\left(\frac{\partial}{\partial \Lambda_i} \delta\left(r - (\tau_i+\lambda_i)^-\right)\right)f_i^k\left(x(r), r\right) 
\\&- \left(\frac{\partial}{\partial \Lambda_i} \delta\left(r - (\tau_i+\lambda_i)^+\right)f_1^k\left(x(r), r\right)\right),
\end{align*}
and, for $i\ne j$, $D_2^2f^k\big(x(r), \lambda, r\big)_{ij} = 0$.
Revert back to matrix representation of $\rho(\cdot)$ and $f(\cdot, \cdot)$. For $i=j$, using chair rule on $D_2^2f^k\big(x(r), \Lambda, r\big)_{ij}$ results in:
\begin{align*}
D_2^2f^k\left(x(r), \Lambda, r\right)_{ij} =& - \dot{\delta}\left(r-(\tau_i+\lambda_i)^-\right)f_i^k\left(x(r), r\right) 
\\&+ \dot{\delta}\left(r - (\tau_i+\lambda_i)^+\right)f_1^k\left(x(r), r\right).
\end{align*}
Then,
\begin{align*}
P_3 =& \int\limits_{T_0}^{T_F} \left[-\rho(r)^T \dot{\delta}\left(r-(\tau_i+\lambda_i)^-\right)f_i\left(x(r), r\right) 
\right.\\&\left.
+ \rho(r)^T\dot{\delta}\left(r-(\tau_i+\lambda_i)^+\right)f_1\left(x(r), r\right)\right]\mathrm{d}r\cdot (\theta_i^1, \theta_i^2).
\end{align*}
Using integration by parts, 
\begin{align*}
=& \bigg[-\rho(r)^T\delta\big(r - (\tau_i + \lambda_i)^-\big)f_i\left(x(r), r\right) \bigg|_{T_0}^{T_F} 
\\&
+ \rho(r)^T\delta\left(r-(\tau_i+\lambda_i)^+\right)f_1\left(x(r), r\right)\bigg|_{T_0}^{T_F}
\\&\int\limits_{T_0}^{T_F}\left[\dot{\rho}(r)^Tf_i\left(x(r), r\right) 
\right.\\&\left.
+ \rho(r)^TD_1f_i\left(x(r), r\right) \dot{x}(t) + \rho(r)^TD_2f_i\left(x(r), r\right) \right]\cdot
\\&\delta\left(r - (\tau_i+\lambda_i)^-\right)\mathrm{d}r - \int\limits_{T_0}^{T_F}\Big[\dot{\rho}(r)^Tf_1\left(x(r), r\right) 
\\&
+ \rho(r)^T D_1 f_1\left(x(r), r\right) \dot{x}(t) + \rho(r)^TD_2f_1\big(x(r), r) \Big]
\\&
\delta\left(r-(\tau_i+\lambda_i)^+\right)\mathrm{d}r\bigg] \cdot (\theta_i^1, \theta_i^2).
\end{align*}
Integrating over the $\delta$-functions picks out the times for which the $\delta$-functions' arguments are zero:
\begin{gather*}\begin{align*}
=& \Big[\dot{\rho}\big((\tau_i+\lambda_i)^-\big)f_i\Big(x\big((\tau_i+\lambda_i)^-\big), (\tau_i+\lambda_i)^-\Big) \\&-\dot{\rho}\big((\tau_i+\lambda_i)^+\big)f_1\Big(x\big((\tau_i+\lambda_i)^+\big), (\tau_i+\lambda_i)^+\Big) \\
&+ \rho\big((\tau_i+\lambda_i)^-\big)^TD_1f_i\Big(x\big((\tau_i+\lambda_i)^-\big), (\tau_i+\lambda_i)^-\Big)\dot{x}\big((\tau_i+\lambda_i)^-\big) \\&- \rho\big((\tau_i+\lambda_i)^+\big)^TD_1f_1\Big(x\big((\tau_i+\lambda_i)^+\big), (\tau_i+\lambda_i)^+\Big)\dot{x}\big((\tau_i+\lambda_i)^+\big)\\
&+\rho\big((\tau_i+\lambda_i)^-\big)^TD_2f_i\Big(x\big((\tau_i+\lambda_i)^-\big), (\tau_i+\lambda_i)^-\Big) \\
&- \rho\big((\tau_i+\lambda_i)^+\big)^TD_2f_1\Big(x\big((\tau_i+\lambda_i)^+\big), (\tau_i+\lambda_i)^+\Big)\Big] \cdot (\theta_i^1, \theta_i^2).
\end{align*}\end{gather*}
Using \eqref{eq::Ch2rho}, 
\begin{gather*}
\begin{align*}
=& \bigg[
\Big[
-\rho\big((\tau_i+\lambda_i)^-\big)^TD_1f_i\Big(x\big((\tau_i+\lambda_i)^-\big), (\tau_i+\lambda_i)^-\Big) \\&- D\ell\bigg(x\big((\tau_i+\lambda_i)^-\big)\bigg)
\Big]\cdot f_i\Big(x\big((\tau_i+\lambda_i)^-\big), (\tau_i+\lambda_i)^-\Big) 
\\&- \Big[-\rho\big((\tau_i+\lambda_i)^+\big)^TD_1f_1\Big(x\big((\tau_i+\lambda_i)^+\big), (\tau_i+\lambda_i)^+\Big) \\&- D\ell\bigg(x\big((\tau_i+\lambda_i)^+\big)\bigg)\Big]\cdot f_1\Big(x\big((\tau_i+\lambda_i)^+\big), \lambda_i, (\tau_i+\lambda_i)^+\Big) \\
&+\rho\big((\tau_i+\lambda_i)^-\big)^TD_1f_i\Big(x\big((\tau_i+\lambda_i)^-\big), (\tau_i+\lambda_i)^-\Big)\\&
f_i\Big(x\big((\tau_i+\lambda_i)^-\big), (\tau_i+\lambda_i)^-\Big)-\rho\big((\tau_i+\lambda_i)^+\big)^T\\&D_1f_1\Big(x\big((\tau_i+\lambda_i)^+\big), (\tau_i+\lambda_i)^+\Big)f_1\Big(x\big((\tau_i+\lambda_i)^+\big), (\tau_i+\lambda_i)^+\Big)\\
&+\rho\big((\tau_i+\lambda_i)^-\big)^TD_2f_i\Big(x\big((\tau_i+\lambda_i)^-\big), (\tau_i+\lambda_i)^-\Big) 
\\&- \rho\big((\tau_i+\lambda_i)^+\big)^TD_2f_1\Big(x\big((\tau_i+\lambda_i)^+\big), (\tau_i+\lambda_i)^+\Big)
\bigg]
\cdot (\theta_i^1, \theta_i^2).
\end{align*}\end{gather*}
Canceling out terms, 
\begin{align*}
=& \left[
- D\ell\left(x\left((\tau_i+\lambda_i)^-\right)\right) \left(f_i\left(x\left((\tau_i+\lambda_i)^-\right), (\tau_i+\lambda_i)^-\right) \right.\right.\\ &\left.\left.-f_1\left(x\left((\tau_i+\lambda_i)^+\right), (\tau_i+\lambda_i)^+\right)\right) \right.\\&\left.+ \rho\left((\tau_i+\lambda_i)^-\right)^T\left(D_2f_i\left(x\left((\tau_i+\lambda_i)^-\right), (\tau_i+\lambda_i)^-\right) 
\right.\right.\\&\left.\left.- D_2f_1\left(x\left((\tau_i+\lambda_i)^+\right), (\tau_i+\lambda_i)^+\right)\right)
\right]\cdot \left(\theta_i^1, \theta_i^2 \right) .
\end{align*} 
Then, taking $\Lambda\rightarrow 0$ and omitting the superscripts,
\begin{align*}
\lim_{\Lambda\rightarrow 0}P_3=& \left[ \- D\ell\left(x(\tau_i)\right)\left(f_i\left(x(\tau_i), \tau_i\right)-f_1\left(x(\tau_i), \tau_i\right)\right) 
\ \right.\\&\left.+ \rho(\tau_i)^T\left(D_2f_i\left(x(\tau_i), \tau_i\right) -
D_2f_1\left(x(\tau_i), \tau_i\right)\right)\right]\cdot \\&(\theta_i^1, \theta_i^2).
\end{align*}
Therefore, for $i \ne j$, 
\begin{align*}
\lim_{\Lambda\rightarrow 0} D^2J =&\left[\left[f_i\left(x(\tau_i),\tau_i \right)-f_1\left(x(\tau_i), \tau_i\right)\right]^T \Omega(\tau_i)
\right.\left.\right.
\\&\left.+\rho\left(\tau_i\right)^T\left[D_1f_i\left(x(\tau_i), \tau_i\right) -D_1f_1\left(x(\tau_i), \tau_i\right) \right]\right] \cdot\\
&\Phi(\tau_i, \tau_j)\left[f_j\left(x(\tau_j), \tau_j\right) - f_1\left(x(\tau_j), \tau_j\right)\right]
\end{align*}
and, for $i=j$, 
\begin{align*}
\lim_{\Lambda\rightarrow 0} D^2J=& \Big[f_i\big(x(\tau_i), \tau_i\big) - f_1\big(x(\tau_i), \tau_i\big)\Big]^T\Omega(\tau_i)\\&\Big[f_i\big(x(\tau_i), \tau_i\big) - f_1\big(x(\tau_i), \tau_i\big)\Big] \notag\\
&+\rho(\tau_i)^T\Big[D_1f_i\big(x(\tau_i), \tau_i \big)f_i\big(x(\tau_i), \tau_i\big) \\&-2 D_1f_1\big(x(\tau_i), \tau_i\big)f_i\big(x(\tau_i), \tau_i\big) 
\notag\\ &+ D_1f_1\big(x(\tau_i), \tau_i\big)f_1\big(x(\tau_i), \tau_i\big)+ D_2f_i\Big(x(\tau_i), \tau_i\Big) \\&-
D_2f_1\Big(x(\tau_i), \tau_i\Big)\Big] \notag\\
&- D\ell\left(x(\tau_i)\right)\left(f_i\left(x(\tau_i), \tau_i\right)-f_1\left(x(\tau_i), \tau_i\right)\right).
\end{align*}
Given dynamics of the form \eqref{dynamics}, the MIH (for $i=j$) takes the form in \eqref{MIH}. 
\subsection{Proof of Proposition 5}
\begin{proof}
	The following analysis shows the algebraic dependence of the MIH expression on the first-order Lie brackets $[h_i, h_j]$ and $[g, h_i]$ and proves Proposition 5 if either: 1) $\rho^T [h_i, h_j]~\ne~0$ or 2) $\rho^T[g, h_i]~\ne~0$, as guaranteed by Proposition 4. 
	
	Consider controls such that $u_j~=~v_i~\forall~ j,i\ne k$ and $v_k = 0$ and expresses the MIH expression \eqref{MIH} as 
	\begin{align*}
	\frac{d^2J}{d\lambda_+^2}~=~u^T \mathcal{G} u - u_k((D_xl_1) h_k - \rho^T[g, h_k]) ,
	\end{align*} 
	where $\mathcal{G}_{ij}~=~0~\forall~i, j \in[1,M]\setminus \{k\}$, $\mathcal{G}_{ik}~=~\mathcal{G}_{ki}~=~\frac{1}{2}[h_i, h_k]$, and $\mathcal{G}_{kk}~=~h_k^T\Omega h_k + \rho^TD_xh_k\cdot h_k$. 
	The matrix $\mathcal{G}$ is shown to be either indefinite or negative semidefinite if there exists a Lie bracket term $[h_i, h_k]$ such that $\rho^T [h_i, h_k]~\ne~0$.
	From Proposition \eqref{AdjLie}, there exist $i,~j~\in~[1, M]$ such that either $\rho^T [h_i, h_j]~\ne~0$ or $\rho^T	[g, h_i]~\ne~0$. Let $k~\in~[1, M]$ be an index chosen such that either $\rho^T [h_i, h_k]~\ne~0$ or $\rho^T	[g, h_i]~\ne~0$ for some $i~\in~[1,M]~\setminus~\{k\}$.
	
	We use summation notation to express the MIH as 
	\begin{align*}
	\frac{d^2J}{d\lambda^2}~=~& (\sum_{i=1}^M h_i (u_i-v_i)\big)^T\,\Omega\sum_{j=1}^Mh_j(u_j-v_j) 
	\\
	&+ \rho^T
	\big[\frac{\partial g}{\partial x}g + \frac{\partial g}{\partial x} \sum_j^M h_j u_j + \sum_j^M \frac{\partial h_j}{\partial x} u_j g \\
	& + \sum_j^M \frac{\partial h_j}{\partial x}u_j \sum_j^M h_j u_j + \frac{\partial g}{\partial x}g + \frac{\partial g}{\partial x} \sum_i^M h_i v_i 
	\\&
	+ \sum_i^M \frac{\partial h_i}{\partial x} v_i g + \sum_i^M \frac{\partial h_i}{\partial x}v_i \sum_i^M h_i v_i - 2 \frac{\partial g}{\partial x}g 
	\\&
	-2 \frac{\partial g}{\partial x} \sum_j^M h_j u_j -2 \sum_i^M \frac{\partial h_i}{\partial x} v_i g -2 \sum_i^M \frac{\partial h_i}{\partial x}v_i \sum_j^M h_j u_j 
	\big] 
	\\
	&- \frac{\partial \ell}{\partial x}(\sum_{i=1}^M h_i (u_i-v_i)),
	\end{align*}
	which can be simplified to
	\begin{align*}
	=~&(\sum_{i=1}^M h_i (u_i-v_i)\big)^T\,\Omega\sum_{j=1}^Mh_j(u_j-v_j) 
	\\
	&+ \rho^T	
	\Big(-\frac{\partial g}{\partial x} \sum_j^M h_j u_j + \sum_j^M \frac{\partial h_j}{\partial x} u_j g 
	+ \sum_j^M \frac{\partial h_j}{\partial x}u_j \sum_j^M h_j u_j 
	\\
	&+ \frac{\partial g}{\partial x} \sum_i^M h_i v_i - \sum_i^M \frac{\partial h_i}{\partial x} v_i g + \sum_i^M \frac{\partial h_i}{\partial x}v_i \sum_i^M h_i v_i 
	\\
	&-2\sum_i^M \frac{\partial h_i}{\partial x}v_i \sum_j^M h_j u_j 
	\Big) - \frac{\partial \ell}{\partial x}(\sum_{i=1}^M h_i (u_i-v_i)).
	\end{align*}
	Rearranging the expression into quadratic and linear terms in the control input, we rewrite the MIH expression \eqref{MIH} as 
	\begin{align*}
	=~&(\sum_{i=1}^M h_i (u_i-v_i)\big)^T\,\Omega\sum_{j=1}^Mh_j(u_j-v_j) 
	\\
	&+ \rho^T \big(\sum_j^M \frac{\partial h_j}{\partial x}u_j \sum_j^M h_j u_j + \sum_i^M \frac{\partial h_i}{\partial x}v_i \sum_i^M h_i v_i 
	\\
	&-2 \sum_i^M \frac{\partial h_i}{\partial x}v_i \sum_j^M h_j u_j \big) - \frac{\partial \ell}{\partial x}(\sum_{i=1}^M h_i (u_i-v_i)) \\
	& + 
	\rho^T\big(-\frac{\partial g}{\partial x} \sum_j^M h_j u_j + \sum_j^M \frac{\partial h_j}{\partial x} u_j g + \frac{\partial g}{\partial x} \sum_i^M h_i v_i 
	\\
	&- \sum_i^M \frac{\partial h_i}{\partial x} v_i g \big).
	\end{align*}
	Further considering controls such that $u_j~=~v_i~\forall~ j,~i~\ne~k$ and $v_k~=~0$,
	\begin{align*}
	=~&(h_k u_k)^T\Omega (h_k u_k) + \rho^T \mathcal{D} - \frac{\partial \ell}{\partial x}(h_k u_k) 
	\\
	&+ \rho^T\big(-\frac{\partial g}{\partial x} h_k u_k + \frac{\partial h_k}{\partial x} u_k g \big),
	\end{align*}
	where $u_k$ is the $k^{\text{th}}$ control input and $\mathcal{D}$ is 
	\begin{align*}
	\mathcal{D}~=~& \sum_j^M \frac{\partial h_j}{\partial x}u_j \sum_j^M h_j u_j + \sum_i^M \frac{\partial h_i}{\partial x}v_i \sum_i^M h_i v_i 
	\\
	&-2 \sum_i^M \frac{\partial h_i}{\partial x}v_i \sum_j^M h_j u_j \\
	=~&(\sum_{j\ne k}^{M} \frac{\partial h_j}{\partial x} u_j)\sum_{j\ne k}^{M}h_ju_j + (\frac{\partial h_k}{\partial x}u_k) \sum_{j\ne k}^{M} h_ju_j 
	\\
	&+ (\sum_{j\ne k}^{M}\frac{\partial h_j}{\partial x}u_j)h_k u_k 
	\\
	&+ \frac{\partial h_k}{\partial x}u_k h_k u_k + (\sum_{i\ne k}^{M} \frac{\partial h_i}{\partial x} v_i)\sum_{i\ne k}^{M}h_iv_i 
	\\
	&-2 (\sum_{i\ne k}^{M}\frac{\partial h_i}{\partial x}v_i)\sum_{j\ne k}^{M}h_ju_j - 2(\sum_{i\ne k}^{M}\frac{\partial h_i}{\partial x} v_i)h_k u_k \\
	=~& u_k \sum_{j\ne k}^{M}\frac{\partial h_k}{\partial x} h_ju_j - u_k (\sum_{j\ne k}^{M}\frac{\partial h_j}{\partial x}u_j h_k) + \frac{\partial h_k}{\partial x}u_k h_k u_k \\
	=~& u_k \big[\sum_{j\ne k}^{M}u_j \big(\frac{\partial h_k}{\partial x}h_j - \frac{\partial h_j}{\partial x}h_k\big)\big] + \frac{\partial h_k}{\partial x}u_k h_k u_k \\
	=~& u_k \big[\sum_{j\ne k}^{M}u_j [h_j, h_k] \big]+ \frac{\partial h_k}{\partial x}u_k h_k u_k,
	\end{align*}
	where terms cancel because $u_j~=~v_i~\forall~j,~i~\ne~k$.
	We use the property $x^T A x~=~x^T \left(\frac{1}{2}(A+A^T)\right) x$ and we write $\mathcal{D}$ in a matrix form,
	\begin{gather*}
	= u^T
	\begin{pmatrix} 0 & ... & \frac{1}{2}[h_1, h_k] & ... &0 \\
	\vdots & \ddots & \frac{1}{2}[h_2, h_k] & \iddots & \vdots \\
	\frac{1}{2}[h_1, h_k] & \frac{1}{2}[h_2, h_k] & \frac{\partial h_k}{\partial x} h_k & \frac{1}{2}[h_{M-1}, h_k] & \frac{1}{2}[h_M, h_k]\\ 
	\vdots & \iddots & \frac{1}{2}[h_{M-1}, h_k] & \ddots & \vdots \\
	0 & ... & \frac{1}{2}[h_M, h_k] & ... & 0 
	\end{pmatrix}
	u.
	\end{gather*}
	The dotted entries in the matrix represent zero terms. Combining all terms, the MIH can be written as
	\begin{align}
	\frac{d^2J}{d\lambda^2}~=~& u^T \mathcal{G} u - \frac{\partial \ell}{\partial x}(h_k u_k) + \rho^T\big(-\frac{\partial g}{\partial x} h_k u_k + \frac{\partial h_k}{\partial x} u_k g \big)\\
	=~& u^T\mathcal{G}u - u_k\big(\frac{\partial \ell}{\partial x}h_k - \rho^T[g, h_k]\big),
	\end{align}
	where
	\begin{gather*}
	\mathcal{G} =\begin{pmatrix} 0 & ... & \frac{1}{2}\rho^T[h_1, h_k] & ... & 0 \\
	\vdots & \ddots & \frac{1}{2}\rho^T[h_2, h_k] & \iddots & \vdots \\
	\frac{1}{2}\rho^T[h_1, h_k] & \frac{1}{2}\rho^T[h_2, h_k] & \mathcal{C}_1 & \frac{1}{2}\rho^T[h_{M-1}, h_k] & \frac{1}{2}\rho^T[h_M, h_k]\\ 
	\vdots & \iddots & \frac{1}{2}\rho^T[h_{M-1}, h_k] & \ddots &\vdots \\
	0 & ... & \frac{1}{2}\rho^T[h_M, h_k] & ... & 0
	\end{pmatrix},
	\end{gather*}
	and $\mathcal{C}_1~=~h_k^T \Omega h_k +\rho^T\frac{\partial h_k}{\partial x} h_k$.
	
	Given that $\frac{dJ}{d\lambda_+}~=~0$, then, by Proposition \ref{AdjVec}, $\rho^T h_i~=~0~\forall~i~\in~[1,M]$. In addition, by Proposition \ref{nonzerorho}, $\rho~\ne~0$ and, by Proposition \ref{AdjLie}, there exist $i,~j~\in~[1,M]$ such that $\rho^T [h_i,h_j]~\ne~0$ or $\rho^T[g,h_i]~\ne~0$. It is more convenient to consider two cases that capture all possible scenarios: 1) $\rho^T [h_i,h_j]~\ne~0$ and 2) $\rho^T [h_i,h_j]~=~0$ (which implies $\rho^T [g,h_i]~\ne~0$, by Proposition 3).
	\begin{case}
		$\rho^T[h_i, h_j]~\ne~0$.
	\end{case}	
	Let $\mathcal{G}_{[i,j]}$ denote a $2 \times 2$ matrix obtained from $\mathcal{G}$ by deleting all but its $i^\text{th}$ and $j^\text{th}$ row and $i^\text{th}$ and $j^\text{th}$ column
	\begin{align*}
	\mathcal{G}_{[i,j]}~=~\begin{bmatrix} \mathcal{G}_{ii} & \mathcal{G}_{ij} \\ \mathcal{G}_{ji} & \mathcal{G}_{jj} \end{bmatrix},
	\end{align*}
	where $\mathcal{G}_{ij}~=~\mathcal{G}_{ji}$ because $\mathcal{G}$ is symmetric.
	The principal minors of $\mathcal{G}$ of order 2 are given by $\Delta_2~=~\det(\mathcal{G}_{[i,j]})~=~\mathcal{G}_{ii}\,\mathcal{G}_{jj}~-~\mathcal{G}_{ij}^2\,\mathcal{G}_{ji}~\forall~i~\ne~j~\in~[1,M]$. 
	
	Consider first the diagonal terms of $\mathcal{G}_{[i,j]}$. We note that, because $i~\ne~j$ and $\mathcal{G}_{ii}~=~0~\forall~i~\ne~k$, then either $\mathcal{G}_{ii}~=~0$ or $\mathcal{G}_{jj}~=~0$. Therefore, $\Delta_2~=~-~\mathcal{G}_{ij}\mathcal{G}_{ji}~\forall~i~\ne~j~\in~[1,M]$. Next, consider the off-diagonal elements. We note that $\mathcal{G}_{ij}~=~0~\forall~i,~j~\in~[1,M]~\setminus\{k\}$. Given that $\mathcal{G}_{ik}~=~\mathcal{G}_{ki}~=~\frac{1}{2} \rho^T[h_i , h_k]~\forall~i~\in~[1,M]~\setminus~\{k\}$, we have $\Delta_2~=~0~\forall~i~\in~[1,M]~\setminus~\{k\}$ and $\Delta_2~=~-~\frac{1}{4}(\rho^T[h_i, h_k])^2$, otherwise. We summarize these cases as follows
	\begin{align*}
	\Delta_2~=~\begin{cases}
	0&\forall~i,~j~\in~[1,M]~\setminus~\{k\}\\ 
	-\mathcal{G}_{ij}^2~=~- \frac{1}{4}(\rho^T[h_i, h_k])^2~\le~0&\text{otherwise}.
	\end{cases}
	\end{align*}
	If there exists $i~\in~[1,M]$ such that $\rho^T[h_i, h_j]~\ne~0$, there is at least one negative second-order principal minor. Therefore, $\mathcal{G}$ is indefinite and so there exist controls $u~\in~\mathbb{R}^M$ such that $u^T\mathcal{G}u~<~0$. 
	
	Choose $u~\in~\mathbb{R^M}$ such that $u^T\mathcal{G}u~<~0$ and let $u_k~\in~\mathbb{R}$ represent the $k^{th}$ element of $u$. If $u_k\big(\rho^T [g_, h_k] - D_xl h_k\big)~\le~0$, then
	\begin{align*}
	u^T\mathcal{G}u~<~0~\Longrightarrow~u^T\mathcal{G}u + u_k\big(\rho^T [g_, h_k] - D_xl h_k\big)~<~0.
	\end{align*}
	Else, if $u_k\big(\rho^T [g_, h_k] - D_xl h_k\big)~>~0$, choose $u'~=~-u$ so that 
	\begin{align*}
	u_k'\big(\rho^T [g_, h_k] - D_xl h_k\big)~=~- u_k\big(\rho^T [g_, h_k] - D_xl h_k\big)~<~0 
	\end{align*}
	and
	\begin{align*}
	&u'^T\mathcal{G}u' + u_k'\big(\rho^T [g_, h_k] - D_xl h_k\big) \\
	=~&u^T\mathcal{G}u - u_k\big(\rho^T [g_, h_k] - D_xl h_k\big)~<~0.
	\end{align*}
	Therefore, if $\rho^T[h_i, h_j]~\ne~0$, there always exists $u~\in~\mathbb{R}^M$ such that $\frac{d^2J}{d\lambda^2}~<~0$.
	\begin{case}
		$\rho^T[h_i, h_j]~=~0$.
	\end{case}	
	If $\rho^T [h_i, h_j]~=~0~\forall~i,~j~\in~[1,M]$, then, shown in Proposition \ref{AdjLie}, there exists $i~\in~[1,M]$ such that $\rho^T[g,h_i]~\ne~0$. For $\rho^T[h_i,h_j]~=~0$, the MIH becomes
	\begin{align}\label{MIHsimple3}
	u_k^2\big(h_k^T\Omega h_k + \rho^TD_xh_k\big) + u_k\big(\rho^T [g_, h_k] - D_xl h_k\big),
	\end{align}
	which is a quadratic expression of the form $ax^2 + bx + c$. Quadratic expressions become negative if and only if $a~\le~0$ or $b^2 - 4ac~>~0$. Therefore, \eqref{MIHsimple3} takes negative values if and only if, for some time $t~\in~[t_o, t_o+T]$,
	\begin{enumerate}
		\item $h_k^T\Omega h_k + \rho^TD_xh_k~\le~0$ OR
		\item $\rho^T [g_, h_k] - D_xl h_k~\ne~0$.
	\end{enumerate}	
	We consider the second condition: $\rho^T [g_, h_k] - D_xl h_k~\ne~0$. Because $\frac{dJ}{d\lambda_+}~=~0~\forall~u~\in~\mathbb{R}^M,~\forall~t~\in~[t_o, t_o+T]$, then 
	\begin{align*}
	\frac{dJ}{d\lambda_+}~=~0~\Longrightarrow~& \rho^T h_i~=~0~\forall~i~\in~[1,M],~\forall~t~\in~[t_o,~t_o+T]\\
	\Longrightarrow~& \rho^T h_k~=~0~\forall~t~\in~[t_o,~t_o+T]\\
	\Longrightarrow~& \rho^T\,h_k~=~0~\text{for $t~=~t_o+T$}\\&\text{AND}~\dot{\rho}^Th_k~=~0 ~\forall~t ~\in~[t_o,~t_o+T]\\
	\Longrightarrow~& D_xm\,h_k~=~0~\text{for $t~=~t_o+T$}\\&\text{AND}
	\\&~(-D_xl -\rho^T D_xf_2)\,h_k =0~\forall~t\in~[t_o,~t_o+T]\\
	\Longrightarrow~& (x-x_d)^T\,P_1\,h_k~=~0~\text{for $t~=~t_o+T$}\\&\text{AND}~(-(x-x_d)^T\,Q -\rho^T D_xf_2)\,h_k~=~0~\\
	&\forall~t~\in~[t_o,~t_o+T].\\
	\end{align*}
	Consider positive-definite weight matrices $Q~=~\delta P_1 \succ 0$, where $\delta$ is a scale factor. Then,
	\begin{align*}
	{(x-x_d)^TP_1h_k~=~0}\rvert_{t_o+T}~\Leftrightarrow~&{(x-x_d)^T\delta P_1h_k~=~0}\rvert_{t_o+T}\\\Leftrightarrow~&{(x-x_d)^T Qh_k~=~0}\rvert_{t_o+T},
	\end{align*}
	and 
	\begin{align*}
	\frac{dJ}{d\lambda_+} =0~\Longrightarrow~&D_xm\,h_k~=~0~\\\Leftrightarrow~& D_xl\,h_k~=~0~\text{AND}~\rho^T D_xf_2\,h_k =0.
	\end{align*}
	
	Then, $\rho^T [g, h_k] - D_xlh_k~=~\rho^T[g, h_k] \ne0$. Therefore, there exist control solutions $u\in\mathbb{R}^M$ such that the MIH expression becomes negative.
\end{proof}

\subsection{Derivation of equation \eqref{optcon}}\label{Apx::A}	% First appendix chapter, i.e., Appendix A.
In the following derivation, we treat the first- and second-order mode insertion gradient terms separately. 

Associate $f_1$ with default control $v$ and $f_2$ with injected control $u$, such that
\begin{equation*}
\begin{split}
f_1 & \triangleq f(x(t), v(t)) = g(x(t)) + h(x(t))v(t)\\
f_2 & \triangleq f(x(t), u(t)) = g(x(t)) + h(x(t))u(t)
\end{split}
\end{equation*}
For simplicity, we drop the arguments as necessary. For the mode insertion gradient, the update step is straightforward
\begin{align}\label{Apx: eq1}
\frac{\partial}{\partial u} \frac{dJ}{d\lambda_+} =& \frac{\partial}{\partial u} \rho^T (f_2 - f_1) 
= \frac{\partial}{\partial u} \rho^T h (u - v) 
= \rho^T h, \\
\frac{\partial^2}{\partial u^2} \frac{dJ}{d\lambda_+} =& 0.
\end{align}
The update step on the MIH is more complicated and so we divide the MIH expression into three parts
\begin{equation*}
\frac{d^2J}{d\lambda_+^2} = \mathcal{A}_1 + \mathcal{A}_2 + \mathcal{A}_3,
\end{equation*}
where the terms $ \mathcal{A}_1, \mathcal{A}_2, \mathcal{A}_3 $ are given by the following set of equations
\begin{align*}
\mathcal{A}_1 =& (f_2 - f_1)^T\Omega(f_2-f_1) \notag\\ 
\mathcal{A}_2 =&
\rho^T(D_xf_2 \cdot f_2 + D_xf_1\cdot f_1 - 2 D_xf_1\cdot f_2)\notag\\
\mathcal{A}_3 =& - D_x l \cdot (f_2 - f_1).
\end{align*}
Let $l_2 = \frac{d^2J}{d\lambda_+^2}$. Using the G\^{a}teux derivative,
\begin{align*}
\begin{split}
\frac{\partial l_2}{\partial u}
& = \frac{\partial l_2 (u + \epsilon \eta)}{\partial \epsilon} \Bigr|_{\epsilon = 0} = \frac{\partial \mathcal{A}_1(\cdot)}{\partial \epsilon} + \frac{\partial \mathcal{A}_2(\cdot)}{\partial \epsilon} + \frac{\partial \mathcal{A}_3(\cdot)}{\partial \epsilon} \Bigr|_{\epsilon = 0}.
\end{split}
\end{align*}
Then,
\begin{align*}
\frac{\partial \mathcal{A}_1}{\partial u}
=&
\frac{\partial}{\partial \epsilon}\mathcal{A}_1(u+\epsilon \eta)\Bigr|_{\epsilon = 0}\notag\\
=&
\frac{\partial}{\partial \epsilon} ( [h\big((u+\epsilon\eta) - v\big)]^T\Omega~[h\big( (u+\epsilon\eta)-v\big)	])\Bigr|_{\epsilon = 0}\notag\\
=&
(h\eta)^T\Omega~(h \cdot (u+\epsilon\eta - v))\notag
+
(h(u+\epsilon\eta - v))^T\Omega~h \eta\Bigr|_{\epsilon = 0}\notag\\
=&
\eta^Th^T\Omega~hu
-
\eta^Th^T\Omega~h v
+
u^{T}h^T\Omega~h \eta
-
v^Th^T\Omega~h \eta
\notag\\
=&
\Big(u^{T}h^T \big(\Omega^T + \Omega\big)h - v^Th^T \big(\Omega^T + \Omega\big)h\Big) \eta
\end{align*}
\begin{align*}
%%% A2:
\frac{\partial \mathcal{A}_2}{\partial u}
=&\frac{\partial}{\partial \epsilon}\mathcal{A}_2(u+\epsilon \eta)\Bigr|_{\epsilon = 0}\\
=&
\frac{\partial}{\partial \epsilon}\rho^T\left(D_xf_1\cdot f_1 - 2~D_xf_1\cdot f_2 + 
D_xf_2\cdot f_2\right)\Bigr|_{\epsilon = 0}\\
=&
\frac{\partial}{\partial \epsilon}\rho^T\left(D_x(g + h v)\cdot (g + h v) 
\right.\\
&\left.
- 2D_x(g + h v) \left(g + h(u +\epsilon\eta)\right)
\right.\\
&\left.+ 
D_x\left(g + h(u +\epsilon\eta)\right)\left(g +h(u +\epsilon\eta) \right)\right)\Bigr|_{\epsilon = 0}\\
=&
\rho^T
\left(- 
2~D_x(g+hv)h\eta
+ 
D_x(h\eta)\cdot(g +hu) \right.\\
&\left.+ D_x(g + hu)\cdot h\eta\right)\\
=&
\rho^T\left(-2D_x(g + hv) h\eta + D_x(h\eta)g + D_x(h\eta)hu	
\right.\\&\left.
+
D_xg h \eta + D_x(hu)h\eta \right)\notag \\
=&
\rho^T \left(-D_xg h\eta -2D_x(hv) h\eta 
+
D_x( h~\eta)\cdot g
\right.\\	&\left.
+ 
D_x( h~\eta)\cdot hu + D_x( hu)\cdot h\eta \right)\notag\\
=&\rho^T\Big(-D_xg \cdot h\eta -2D_x(\sum_{k=1}^{m} h_ku_{d_k})\cdot h\eta
\\&+
D_x(\sum_{k=1}^{m} h_k\eta_{k})\cdot g + 
D_x(\sum_{k=1}^{m} h_k\eta_{k})hu 
\\&
+ 
D_x(\sum_{k=1}^{m} h_ku_{2_k})\cdot h\eta \Big) \notag\\
=& 
- \rho^T D_xg h\eta
- 2 v^T (\sum_{k=1}^{n} (D_xh_k)\rho_{k})h\eta
\\ &+ \eta^T (\sum_{k=1}^{n} (D_xh_k)\rho_{k})g \notag 
+ \eta^T (\sum_{k=1}^{n} (D_xh_k)\rho_{k})hu 
\\&+ u^{T}(\sum_{k=1}^{n} (D_xh_k)\rho_{k}) h\eta 
\\
=&
- \rho^T D_xg h\eta
- 2 v^T (\sum_{k=1}^{n} (D_xh_k)\rho_{k})\cdot h\eta
\\&+ g^T(\sum_{k=1}^{n} (D_xh_k)\rho_{k})^T \eta \notag +
u^{T} h^T \cdot (\sum_{k=1}^{n} (D_xh_k)\rho_{k})^T \eta \\&+ u^{T}(\sum_{k=1}^{n} (D_xh_k)\rho_{k})\cdot h\eta
\end{align*}\begin{align*} 
=&\Big[-\rho^T D_xg \cdot h
- 2 v^T (\sum_{k=1}^{n} (D_xh_k)\rho_{k})\cdot h
\\&+ g^T (\sum_{k=1}^{n} (D_xh_k)\rho_{k})^T 
+ u^{T} h^T \cdot (\sum_{k=1}^{n} (D_xh_k)\rho_{k})^T \notag \\&+ u^{T}(\sum_{k=1}^{n} (D_xh_k)\rho_{k})\cdot h \Big]\cdot\eta. 
\end{align*}
Last,
\begin{align*}
%%%%%
% A3
%%%%
\frac{\partial \mathcal{A}_3}{\partial u}
=&\frac{\partial}{\partial \epsilon}\mathcal{A}_3(u+\epsilon \eta)\Bigr|_{\epsilon = 0}\notag\\
=& -\frac{\partial l}{\partial x} \frac{\partial}{\partial \epsilon}
\Big( g + h(u +\epsilon \eta) - g - hv\Big)\Bigr|_{\epsilon = 0}\notag\\
=&-
\frac{\partial l}{\partial x} h\eta
\end{align*}
Therefore,
\begin{align*}
\frac{\partial l_2}{\partial u} =&
u^{T}\Big[ h^T \big(\Omega^T + \Omega\big) h + 
h^T (\sum_{k=1}^{n} (D_xh_k)\rho_{k})^T
\\&+ 
(\sum_{k=1}^{n} (D_xh_k)\rho_{k}) h \Big]
- v^T\Big[ h^T \big(\Omega^T + \Omega\big) h \\&+
2 (\sum_{k=1}^{n} (D_xh_k)\rho_{k} )h\Big]-\rho^T D_xg \cdot h
\\&
+ g^T (\sum_{k=1}^{n} (D_xh_k)\rho_{k})^T - \frac{\partial l}{\partial x} h.
\end{align*}
Solving for the minimizer, $\frac{\partial l_2}{\partial u}^T = 0$, we get 
\begin{align}\label{Apx: eq3}
\frac{\partial l_2}{\partial u}^T = 0 \Rightarrow& \Big[ h^T \big(\Omega^T + \Omega\big) h + 
h^T (\sum_{k=1}^{n} (D_xh_k)\rho_{k})^T\notag
\\&+ 
(\sum_{k=1}^{n} (D_xh_k)\rho_{k}) h \Big]u = \Big[ h^T \big(\Omega^T + \Omega\big) h \notag\\&+2h^T
(\sum_{k=1}^{n} (D_xh_k)\rho_{k} )^T\Big]v+D_xg^T\rho h\notag
\\&
-(\sum_{k=1}^{n} (D_xh_k)\rho_{k})g +h^TD_xl^T\notag\\
\Rightarrow& u =  \Big[ h^T \big(\Omega^T + \Omega\big) h + 
h^T (\sum_{k=1}^{n} (D_xh_k)\rho_{k})^T\notag
\\&+ 
(\sum_{k=1}^{n} (D_xh_k)\rho_{k}) h \Big]^{-1}\Big[ h^T \big(\Omega^T + \Omega\big) h \notag\\&+2h^T
(\sum_{k=1}^{n} (D_xh_k)\rho_{k} )^T\Big]v+D_xg^T\rho h\notag
\\&
-(\sum_{k=1}^{n} (D_xh_k)\rho_{k})g +h^TD_xl^T
\end{align}
The terms shown in \eqref{Apx: eq1} and \eqref{Apx: eq3}, together with a penalty term for the control, are the gradient and Hessian terms used for the Newton update step that appears in \eqref{optcon}.
\end{document}